\documentclass{article}

\usepackage{microtype}
\usepackage{graphicx}
\usepackage{subcaption}
\usepackage{booktabs} % for professional tables
\usepackage[table]{xcolor}

\usepackage{hyperref}

\usepackage[accepted]{icml2026}

\usepackage{amsmath,amsfonts,bm,bbm}

\usepackage{amsmath,amsfonts,bm,amssymb,mathtools,amsthm}
\usepackage{color,xcolor}
\usepackage{amsmath}
\usepackage{xspace} 
\def\onedot{$\mathsurround0pt\ldotp$}
\def\eg{\emph{e.g}\onedot, }

\def\ie{\emph{i.e}\onedot, }
\def\eqref#1{Eq.~(\ref{#1})}
\def\Eqref#1{Eq.~(\ref{#1})}
\def\1{\bm{1}}

\def\rvw{{\mathbf{w}}}

\def\vtheta{{\bm{\theta}}}

\def\vf{{\bm{f}}}

\def\vs{{\bm{s}}}

\def\vv{{\bm{v}}}

\def\vx{{\bm{x}}}

\def\vy{{\bm{y}}}

\def\mI{{\bm{I}}}

\DeclareMathAlphabet{\mathsfit}{\encodingdefault}{\sfdefault}{m}{sl}
\SetMathAlphabet{\mathsfit}{bold}{\encodingdefault}{\sfdefault}{bx}{n}

\def\gB{{\mathcal{B}}}

\def\gD{{\mathcal{D}}}

\def\gL{{\mathcal{L}}}

\def\gN{{\mathcal{N}}}
\def\gO{{\mathcal{O}}}

\def\gV{{\mathcal{V}}}

\def\sR{{\mathbb{R}}}

\newcommand{\E}{\mathbb{E}}
\newcommand{\Ls}{\mathcal{L}}
\newcommand{\R}{\mathbb{R}}

\newcommand{\dd}{\mathrm{d}}

\newcommand{\Var}{\mathrm{Var}}

\newcommand{\Cov}{\mathrm{Cov}}

\DeclareMathOperator{\Tr}{Tr}

\newcommand{\norm}[1]{\left\lVert#1\right\rVert}

\newcommand{\m}[1]{\begin{bmatrix}#1\end{bmatrix}}

\newcommand{\lrp}[1]{\left(#1\right)}
\newcommand{\lrb}[1]{\left[#1\right]}
\newcommand{\lrc}[1]{\left\{#1\right\}}

\newcommand{\bp}[1]{\big(#1\big)}

\newcommand{\bb}[1]{\big[#1\big]}
\newcommand{\Bb}[1]{\Big[#1\Big]}

\newcommand{\dto}{\xrightarrow{d}}
\newcommand{\pto}{\xrightarrow{p}}

\newcommand{\ind}{\mathbbm{1}}

\usepackage{amsmath}
\usepackage{amssymb}
\usepackage{mathtools}
\usepackage{amsthm}
\usepackage{url}
\usepackage{thmtools}
\usepackage{thm-restate}
\usepackage{enumitem} 
\usepackage{booktabs}
\usepackage{multirow}
\usepackage{multicol}
\usepackage{wrapfig}
\usepackage{enumitem}
\usepackage{comment}
\usepackage{xspace}
\usepackage[table]{xcolor}
\usepackage{nccmath}

\setlist[itemize]{leftmargin=*}
\setlist[enumerate]{leftmargin=*}
\definecolor{citecolor}{HTML}{0071BC}
\definecolor{linkcolor}{HTML}{ED1C24}
\definecolor{hangrey}{RGB}{138,139,140}
\definecolor{hanblue}{RGB}{1, 84, 134}
\definecolor{hanred}{RGB}{163, 31, 52}
\definecolor{highlight}{RGB}{255, 245, 245}
\definecolor{lightgray}{gray}{0.92}
\definecolor{myblue}{RGB}{0,90,160}

\hypersetup{
colorlinks=true,
linkcolor=linkcolor,
citecolor=citecolor,
}

\makeatletter
\DeclareRobustCommand\onedot{\futurelet\@let@token\@onedot}
\def\@onedot{\ifx\@let@token.\else.\null\fi\xspace}
\def\iid{\emph{i.i.d}\onedot} 
\def\eg{\emph{e.g}\onedot} 
\def\ie{\emph{i.e}\onedot}

\newif\ifappendix
\appendixfalse
\makeatother

\usepackage[capitalize,noabbrev]{cleveref}

\theoremstyle{plain}
\newtheorem{theorem}{Theorem}[section]
\newtheorem{proposition}[theorem]{Proposition}
\newtheorem{lemma}[theorem]{Lemma}

\theoremstyle{definition}

\theoremstyle{remark}

\usepackage[textsize=tiny]{todonotes}

\icmltitlerunning{Stable Velocity: A Variance Perspective on Flow Matching}

\begin{document}

\twocolumn[
  \icmltitle{Stable Velocity: A Variance Perspective on Flow Matching}

  % It is OKAY to include author information, even for blind submissions: the
  % style file will automatically remove it for you unless you've provided
  % the [accepted] option to the icml2026 package.

  % List of affiliations: The first argument should be a (short) identifier you
  % will use later to specify author affiliations Academic affiliations
  % should list Department, University, City, Region, Country Industry
  % affiliations should list Company, City, Region, Country

  % You can specify symbols, otherwise they are numbered in order. Ideally, you
  % should not use this facility. Affiliations will be numbered in order of
  % appearance and this is the preferred way.
  \icmlsetsymbol{equal}{*}
  \icmlsetsymbol{intern}{$\dagger$}
  \icmlsetsymbol{lead}{$\ddagger$}

  \begin{icmlauthorlist}
    \icmlauthor{Donglin Yang}{intern,hku}
    \icmlauthor{Yongxing Zhang}{ubc,vector}
    \icmlauthor{Xin Yu}{hku}
    \icmlauthor{Liang Hou}{kling}
    \icmlauthor{Xin Tao}{kling}
    \icmlauthor{Pengfei Wan}{kling}
    \icmlauthor{Xiaojuan Qi}{lead,hku}
    \icmlauthor{Renjie Liao}{lead,ubc,vector,cifar}
  \end{icmlauthorlist}  

  \icmlaffiliation{hku}{University of Hong Kong, Hong Kong}
  \icmlaffiliation{kling}{Kling Team, Kuaishou Technology}
  \icmlaffiliation{ubc}{University of British Columbia, Canada}
  \icmlaffiliation{vector}{Vector Institute for AI, Toronto, Canada}
  \icmlaffiliation{cifar}{Canada CIFAR AI Chair}

  \icmlcorrespondingauthor{Xiaojuan Qi}{xjqi@eee.hku.hk}
  \icmlcorrespondingauthor{Renjie Liao}{rjliao@ece.ubc.ca}

  % You may provide any keywords that you find helpful for describing your
  % paper; these are used to populate the "keywords" metadata in the PDF but
  % will not be shown in the document
  \icmlkeywords{Machine Learning, ICML}

  \vskip 0.3in
]

% this must go after the closing bracket ] following \twocolumn[ ...

% This command actually creates the footnote in the first column listing the
% affiliations and the copyright notice. The command takes one argument, which
% is text to display at the start of the footnote. The \icmlEqualContribution
% command is standard text for equal contribution. Remove it (just {}) if you
% do not need this facility.

% Use ONE of the following lines. DO NOT remove the command.
% If you have no special notice, KEEP empty braces:
\printAffiliationsAndNotice{$^{\dagger}$ This work was conducted during the author's internship at Kling Team. $^{\ddagger}$ Project Lead.}  % no special notice (required even if empty)
% Or, if applicable, use the standard equal contribution text:
% \printAffiliationsAndNotice{\icmlEqualContribution}

\begin{abstract}
While flow matching is elegant, its reliance on single-sample conditional velocities leads to high-variance training targets that destabilize optimization and slow convergence. By explicitly characterizing this variance, we identify 1) a \emph{high-variance regime} near the prior, where optimization is challenging, and 2) a \emph{low-variance regime} near the data distribution, where conditional and marginal velocities nearly coincide. Leveraging this insight, we propose \textbf{Stable Velocity}, a unified framework that improves both training and sampling. For training, we introduce Stable Velocity Matching (StableVM), an unbiased variance-reduction objective, along with Variance-Aware Representation Alignment (VA-REPA), which adaptively strengthen auxiliary supervision in the \emph{low-variance regime}. For inference, we show that dynamics in the \emph{low-variance regime} admit closed-form simplifications, enabling Stable Velocity Sampling (StableVS), a finetuning-free acceleration. Extensive experiments on ImageNet $256\times256$ and large pretrained text-to-image and text-to-video models, including SD3.5, Flux, Qwen-Image, and Wan2.2, demonstrate consistent improvements in training efficiency and more than $2\times$ faster sampling within the \emph{low-variance regime} without degrading sample quality. Our code is available at \url{https://github.com/linYDTHU/StableVelocity}.
\end{abstract}

\section{Introduction}
\label{sec:intro}

Recent years have seen major advances in generative modeling driven by diffusion~\citep{sohl2015deep,ho2020denoising,song2020score,karras2022elucidating}, flow matching~\citep{lipman2022flow,liu2023flow}, and stochastic interpolants~\citep{albergo2025stochastic,ma2024sit,yu2024representation}.
These approaches transform a simple prior distribution, \eg, standard normal, into a complex data distribution through a stochastic or deterministic dynamic process, admitting a unified formulation that has enabled scalable training and state-of-the-art performance across image generation~\citep{flux2024,esser2024scaling,wu2025qwenimagetechnicalreport} and restoration~\citep{rombach2022high,lin2024diffbir}, and video generation~\citep{videoworldsimulators2024,wan2025wan}.

Among these approaches, Conditional Flow Matching (CFM)~\citep{tong2023improving} provides an elegant objective for learning the probability flow without explicitly simulating the forward SDE or PF-ODE.
By training a neural network to predict conditional velocity fields, CFM enjoys both theoretical guarantees and practical scalability.

Despite its elegance, CFM suffers from a fundamental yet underexplored limitation: the variance of its training target.
In practice, the conditional velocity $\vv_t(\vx_t \mid \vx_0)$ is a single-sample Monte Carlo estimate of the true marginal velocity $\vv_t(\vx_t)$, which can exhibit high variance, particularly at timesteps where the marginal distribution remains close to the prior.
Such high-variance targets not only slow convergence, but also induce a mismatch between the empirical optimization dynamics and the ideal population objective.
While prior work has empirically observed variance-related inefficiencies in diffusion training~\citep{karras2022edm,choi2022perception,xu2023stabletargetfieldreduced}, a principled variance-theoretic understanding within flow matching and stochastic interpolants has been largely missing.

In this work, we develop a \emph{variance-based perspective} on stochastic interpolants.
By explicitly characterizing the variance of conditional velocity targets, we reveal a two-regime structure that governs both training and inference:
a \emph{high-variance regime} near the prior, where optimization is inherently noisy, and a \emph{low-variance regime} near the data distribution, where conditional and marginal velocities coincide.
This insight naturally leads to the \textbf{Stable Velocity} framework.
For training, we introduce Stable Velocity Matching (StableVM), an unbiased variance-reduction objective, together with Variance-Aware Representation Alignment (VA-REPA), which adaptively strengthen auxiliary supervision when the variance is low.
For inference, we show that dynamics in the low-variance regime admit closed-form simplifications, enabling finetuning-free acceleration via Stable Velocity Sampling (StableVS).

We validate our proposed approach on ImageNet $256\times256$ and standard text-to-image and text-to-video benchmarks using state-of-the-art pretrained models.
StableVM and VA-REPA consistently outperform REPA~\citep{yu2024representation} across a wide range of model scales and training variants, including REG~\citep{wu2025representation} and iREPA~\citep{singh2025matters} (Sec.~\ref{subsec:stablevm and va-repa evaluations}). Meanwhile, StableVS achieves more than $2\times$ inference acceleration in the \emph{low-variance regime} for recent flow-based models, such as SD3.5~\citep{esser2024scaling}, Flux~\citep{flux2024}, Qwen-Image~\citep{wu2025qwenimagetechnicalreport}, and Wan2.2~\citep{wan2025wan}, without perceptible degradation in sample quality (Sec.~\ref{subsec:stablevs evaluation}).

\section{Variance Analysis of Flow Matching}
\label{sec:variance analysis}
We first briefly review flow matching and stochastic interpolants. 
Details as well as their connections to score-based diffusion models are deferred to Appendix~\ref{appendix:preliminaries}.

\textbf{Flow Matching and Stochastic Interpolants.}  
Given samples from an unknown data distribution $q(\vx_0)$ over $\R^d$, flow matching~\citep{lipman2022flow, liu2023flow, lipman2024flow} and stochastic interpolants~\citep{albergo2025stochastic, albergo2023building} define a continuous-time corruption process 
\begin{align} 
\label{eq:x_t} 
\vx_t = \alpha_t \vx_0 + \sigma_t \bm{\varepsilon}, \quad \bm{\varepsilon} \sim \mathcal{N}(0, \mI), 
\end{align} 
where $\alpha_t$ and $\sigma_t$ are differentiable functions satisfying $\alpha_t^2+\sigma_t^2>0$ for all $t\in[0,1]$, with boundary conditions $\alpha_0=\sigma_1=1$ and $ \alpha_1=\sigma_0=0$. 
Most works~\citep{ma2024sit,yu2024representation,lipman2022flow,liu2023flow,lipman2024flow} adopt a simple linear interpolant $\alpha_t=1-t,\ \sigma_t=t$. 
This process induces a conditional velocity field 
\begin{align} \label{eq:conditional velocity} 
\vv_t(\vx_t \mid \vx_0) = \frac{\sigma_t'}{\sigma_t}(\vx_t - \alpha_t \vx_0) + \alpha_t' \vx_0, 
\end{align} 
where $\alpha_t'$ and $\sigma_t'$ denote the time derivative.
The corresponding marginal velocity field 
\begin{align} 
\label{eq:velocity} 
\vv_t(\vx_t) = \mathbb{E}_{p_t(\vx_0 \mid \vx_t)}\!\left[\vv_t(\vx_t \mid \vx_0)\right]. 
\end{align}

\textbf{Conditional Flow Matching (CFM).}  
Training typically uses the CFM objective~\citep{tong2023improving}:
\begin{align}
    \label{eq:cfm}
    \min_\vtheta ~ \E_{t,\,q(\vx_0),\,p_t(\vx_t|\vx_0)}\lambda_t\norm{\vv_\vtheta(\vx_t,t)-\vv_t(\vx_t|\vx_0)}^2,
\end{align}
where $\lambda_t$ is a positive weighting function, and $\vv_\vtheta(\cdot,\cdot):[0,1]\times\sR^d\rightarrow \sR^d$ is a neural velocity field parameterized by $\vtheta$. Here, the conditional path distribution is $p_t(\vx_t\mid\vx_0)=\gN(\vx_t\mid\alpha_t \vx_0,\sigma_t^2\mI)$. 
The minimizer of \Eqref{eq:cfm} is provably the true marginal velocity field~\citep{tong2023improving,xu2023stabletargetfieldreduced,ma2024sit}:
\begin{align}
    \label{eq:minimizer}
    \vv^*_\vtheta(\vx_t,t)=\E_{p_t(\vx_0\mid\vx_t)}\left[\vv_t(\vx_t\mid \vx_0)\right]=\vv_t(\vx_t).
\end{align}

\begin{figure}
  \centering
  \includegraphics[width=0.9\columnwidth]{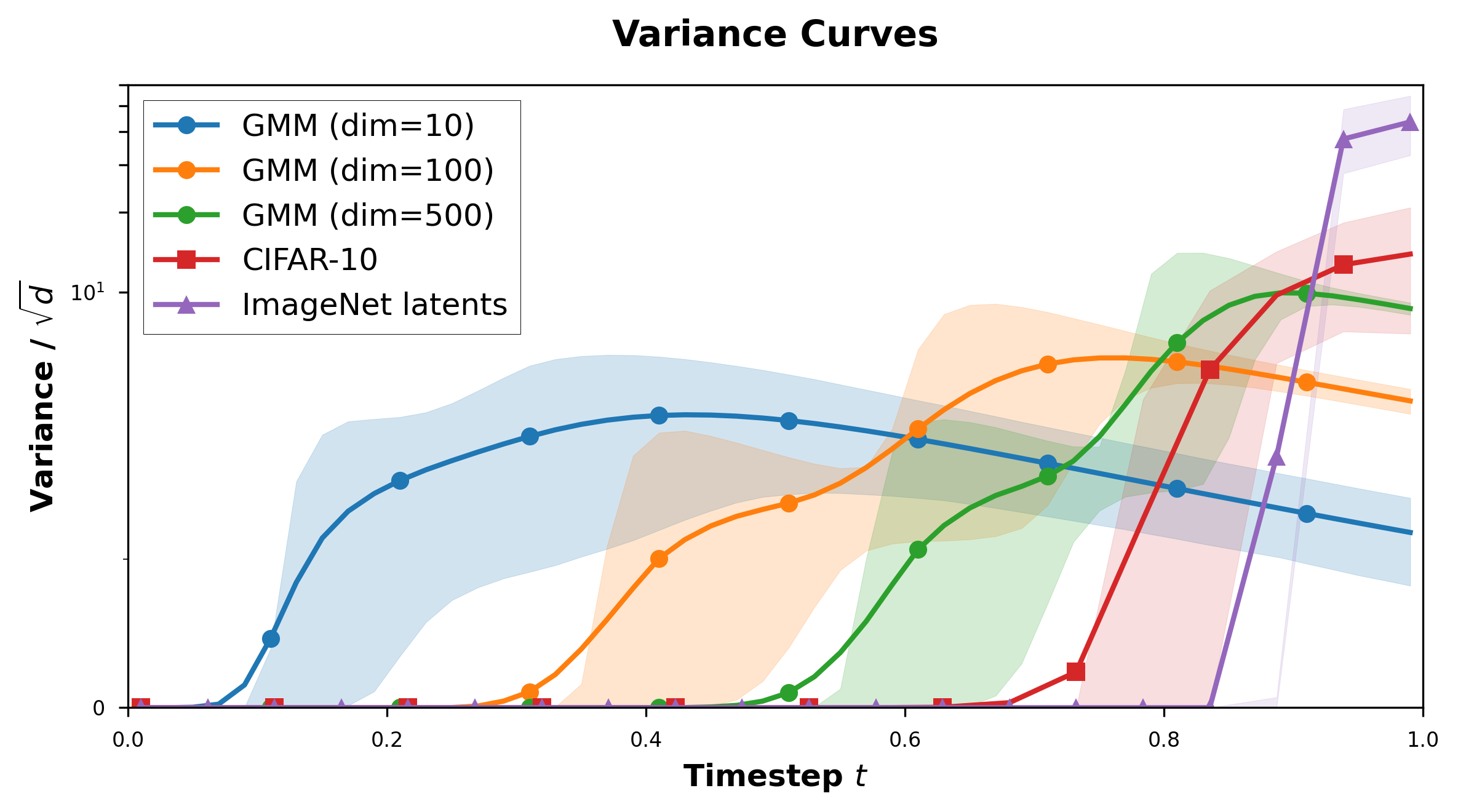}
  \caption{\textbf{Variance curves of $\gV_{\text{CFM}}(t)$ with 15\%–85\% quantile bands.} Evaluated on GMMs of varying dimensionality, CIFAR-10 images, and $256\times256$ ImageNet latents obtained by the Stable Diffusion VAE. The $y$-axis reports $\gV_{\text{CFM}}(t)$ normalized by the square root of the data dimension. See Appendix~\ref{appendix:unconditional_generation} for details. }
  \label{fig:variance_curves}
\end{figure}

\textbf{Variance of CFM.}  
Although unbiased, the CFM target is only a single-sample Monte Carlo estimator of \Eqref{eq:velocity}, which can exhibit high variance~\citep{owen2013monte,elvira2021advances}.
Following~\citet{xu2023stabletargetfieldreduced}, we quantify this variance by the average trace of the conditional velocity covariance at time $t$:
\begin{align}
\label{eq:variance of CFM}
    \gV&_{\text{CFM}}(t)=\E_{p_t(\vx_t)}\!\left[\Tr\!\left(\Cov_{p_t(\vx_0\mid\vx_t)}\big(\vv_t(\vx_t\mid\vx_0)\big)\right)\right] \nonumber \\
    &=\E_{q(\vx_0),\,p_t(\vx_t|\vx_0)}\!\left[\norm{\vv_t(\vx_t\mid\vx_0)-\vv_t(\vx_t)}^2\right]. 
\end{align}

To characterize the behavior of $\gV_{\text{CFM}}(t)$, we evaluate it on Gaussian mixture models (GMMs), CIFAR-10~\citep{krizhevsky2009learning}, and ImageNet latent codes produced by the pretrained Stable Diffusion VAE~\citep{rombach2022high}. 
As shown in Fig.~\ref{fig:variance_curves}, two consistent patterns emerge: 
\begin{itemize} 
    \item \textbf{Low- vs.\ high-variance regimes.} $\gV_{\text{CFM}}(t)$ remains close to zero at small $t$ but increases rapidly as $t$ grows, naturally separating the process into a \emph{low-variance regime} ($0 \le t < \xi$) and a \emph{high-variance regime} ($\xi \le t \le 1$). 
    \item \textbf{Effect of dimensionality.} As data dimensionality increases, the split point $\xi$ shifts toward $1$, enlarging the low-variance regime while also increasing the overall variance magnitude. 
\end{itemize}

Fig.~\ref{fig:variance_demo} illustrates the two regimes: in the \emph{low‑variance regime} the posterior concentrates on a single reference sample, while in the \emph{high‑variance regime} it spreads over multiple samples, leading to large variance. Increasing dimensionality delays this mixing effect and shifts the split point $\xi$ closer to $1$. 
Although the exact split point $\xi$ depends on the unknown data distribution, our variance analysis suggests a clear dimensionality-dependent trend, which provides practical guidance for choosing $\xi$ in real-world settings.

In summary, these observations naturally suggest two key questions: 
(1) How can we reduce training variance in the high‑variance regime without altering the global minimizer?
(2) How can the low‑variance regime be exploited for stronger supervision and faster sampling?

\begin{figure*}[t]
\centering
\vspace{-0.3cm}
\includegraphics[width=0.9\textwidth]{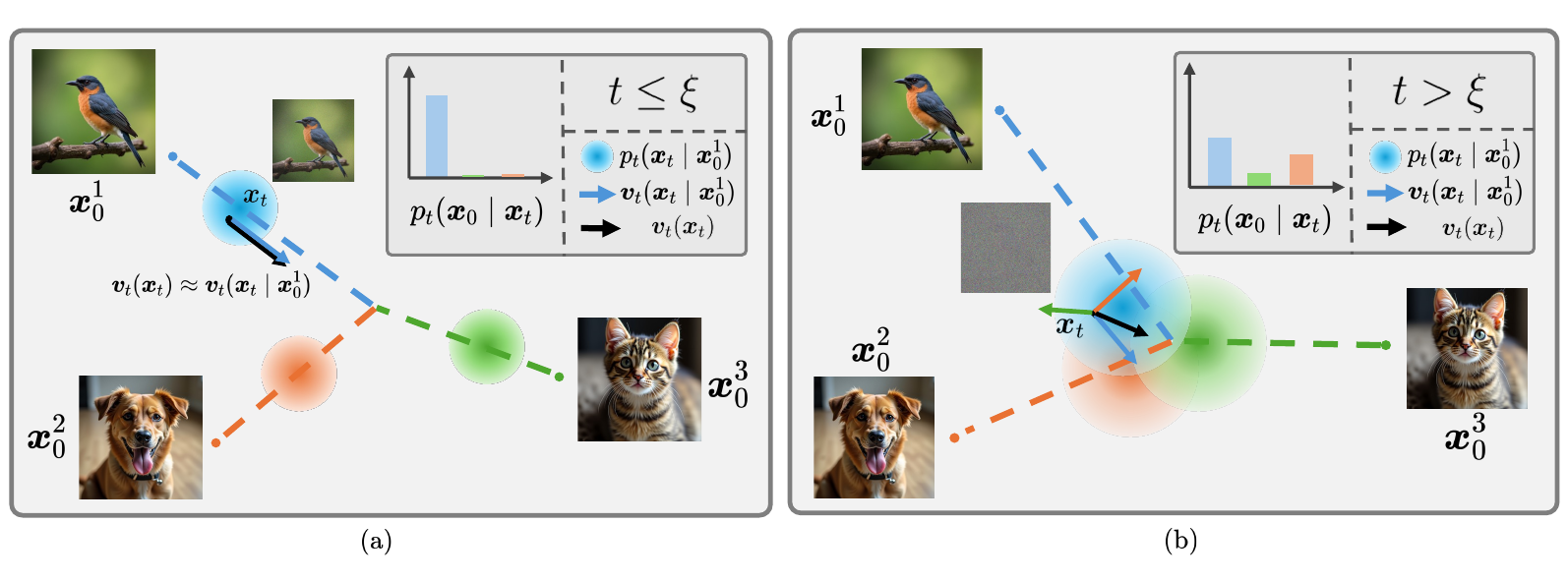}
\caption{\textbf{Illustration of CFM variance $\gV_{\text{CFM}}(t)$.} (a) The \textit{low-variance regime} ($t\le \xi$), where the posterior $p_t(\vx_0\mid \vx_t)$ is sharply concentrated and the conditional velocity $\vv_t(\vx_t\mid \vx_0)$ nearly coincides with the true velocity $\vv_t(\vx_t)$, yielding $\gV_{\text{CFM}}(t)\approx 0$. (b) The \textit{high-variance regime} ($t>\xi$), the posterior spreads over multiple reference samples, causing the conditional velocity to fluctuate and resulting in a large $\gV_{\text{CFM}}(t)$.} 
\vspace{-0.3cm}
\label{fig:variance_demo}
\end{figure*}

\section{Variance-Driven Optimization of Training and Sampling}
In this section, we address the two questions posed in Sec.~\ref{sec:variance analysis} from a unified, variance-driven perspective.

\subsection{Stable Velocity Matching}
\label{subsec:stable velocity matching}
We propose Stable Velocity Matching (StableVM), a variance-reduced yet unbiased alternative to CFM. The key idea is to replace the single-sample conditional velocity target with a multi-sample, self-normalized aggregation over reference data points under the multi-sample conditional path. This reduces training variance while preserving the exact same global minimizer as CFM in \Eqref{eq:minimizer}.

\textbf{Multi-sample conditional path.}
Inspired by \cite{xu2023stabletargetfieldreduced}, we introduce $n$ reference samples $\{\vx_0^i\}_{i=1}^n$, drawn \iid from the data distribution $q(\vx_0)$. 
We then define a composite conditional probability path $p_t^{\text{GMM}}\lrp{\vx_t \mid \lrc{\vx_0^i}_{i=1}^n}:=\sum_{i=1}^{n}\frac{1}{n}\, p_t(\vx_t\mid \vx_0^i)$, which is essentially a mixture of conditional probabilities associated with each reference sample.
The posterior under the GMM path admits a simple mixture form (Prop. \ref{propn:posterior}) , and preserves the original marginal path in expectation, a property that is crucial for unbiasedness.

\textbf{StableVM target.}
Based on the reference samples, we define the StableVM target $\widehat{\vv}_{\text{StableVM}}$ as the self-normalized importance weighted average of the conditional velocities:
\begin{equation}
\label{eq:stablevm_target_def}
\medmath{
\widehat{\vv}_{\text{StableVM}}(\vx_t; \{\vx_0^i\}_{i=0}^n) := \frac{\sum_{k=1}^n p_t(\vx_t \mid \vx_0^k)\vv_t(\vx_t \mid \vx_0^k)}{\sum_{j=1}^n p_t(\vx_t \mid \vx_0^j)}.
}
\end{equation}
Compared to the CFM target $\vv_t(\vx_t \mid \vx_0)$, this can be viewed as a multi-sample Monte Carlo estimator of the same marginal velocity field $\vv_t(\vx_t)$.

\textbf{Training objective.}
We train a neural velocity field $\vv_\theta(\vx_t,t)$ by minimizing
\begin{equation}
\label{eq:stable_vm_loss}
\begin{split}
&\mathcal{L}_{\text{StableVM}}(\vtheta) \\
&= \E_{\substack{t,\{\vx_0^i\} \sim q^n \\ \vx_t\sim p_t^{\text{GMM}}}} 
\Big\| 
\vv_\vtheta(\vx_t, t) - \widehat{\vv}_{\text{StableVM}}(\vx_t; \{\vx_0^i\}_{i=0}^n)
\Big\|^2 .
\end{split}
\end{equation}
Our StableVM target is compatible with general stochastic interpolant framework. 
Under the special case of VP diffusion, it resembles the STF objective~(\eqref{eq:stf}) in form, but differs fundamentally in how the noisy training input $\vx_t$ is constructed. 
STF generates $\vx_t$ by perturbing a \emph{single} reference sample, i.e., $\vx_t \sim p_t(\vx_t\mid \vx_0^1)$, and then forms a self-normalized weighted target over the remaining references. 
In contrast, StableVM explicitly samples $\vx_t$ from a composite conditional path $p_t^{\text{GMM}}$ over the entire reference batch, which yields an unbiased target and extends naturally beyond VP diffusion. 
Details are provided in Appendix~\ref{appendix:comparison with stf}.

\textbf{Unbiasedness and optimality.}
The following theorem establishes two key properties of StableVM: it remains unbiased and admits the same global minimizer as CFM.
\begin{restatable}{theorem}{thmunbiased}
\label{thm:unbiased convergence}
\begin{itemize}[leftmargin=0.4cm]
\item[(a)] The StableVM target is unbiased. That is, for any $\vx_t$, we have 
\[ 
\E_{\{\vx_0^i\} \sim p_t^{\text{GMM}}(\cdot | \vx_t)}\Big[\widehat{\vv}_{\text{StableVM}}(\vx_t; \{\vx_0^i\}_{i=0}^n)\Big] =\vv_t(\vx_t). 
\]
\item[(b)] The global minimizer $\vv^*(\vx_t,t)$ of the StableVM objective $\gL_{\text{StableVM}}$ is the true velocity field $\vv_t(\vx_t)$.
\end{itemize}
\end{restatable}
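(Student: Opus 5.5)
The plan is to compute the joint law of $(\{\vx_0^i\},\vx_t)$ under the sampling scheme of \Eqref{eq:stable_vm_loss}. From it I will read off the posterior $p_t^{\text{GMM}}(\{\vx_0^i\}\mid\vx_t)$, which is the content of Prop.~\ref{propn:posterior}. The joint density is
\[
\prod_{i=1}^n q(\vx_0^i)\cdot \frac1n\sum_{k=1}^n p_t(\vx_t\mid\vx_0^k).
\]
Integrating out the references and using $\int q(\vx_0)p_t(\vx_t\mid\vx_0)\,\dd\vx_0=p_t(\vx_t)$ shows that the $\vx_t$-marginal is exactly $p_t(\vx_t)$; this is the "preserves the marginal path" property. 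Hence the posterior is the mixture
\[
p_t^{\text{GMM}}(\{\vx_0^i\}\mid\vx_t)=\frac1n\sum_{k=1}^n p_t(\vx_0^k\mid\vx_t)\prod_{i\neq k}q(\vx_0^i).
\]
In words, one uniformly chosen index $k$ is drawn from the true posterior and the others from the data distribution.

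\emph{Part (a).} I would avoid expanding the mixture term by term and instead use the joint-density form directly. Write $w_k=p_t(\vx_t\mid\vx_0^k)/\sum_j p_t(\vx_t\mid\vx_0^j)$. Multiplying the target $\sum_k w_k\vv_t(\vx_t\mid\vx_0^k)$ by the posterior density $\prod_i q(\vx_0^i)\,\frac1n\sum_j p_t(\vx_t\mid\vx_0^j)/p_t(\vx_t)$, the self-normalizing denominator cancels exactly against the mixture sum. What remains is
\[
\frac{1}{n\,p_t(\vx_t)}\sum_{k=1}^n \int \prod_i q(\vx_0^i)\,p_t(\vx_t\mid\vx_0^k)\,\vv_t(\vx_t\mid\vx_0^k)\,\dd\vx_0^{1:n}.
\]
Each summand integrates the $n-1$ other references to $1$. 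It then equals $\int q(\vx_0)p_t(\vx_t\mid\vx_0)\vv_t(\vx_t\mid\vx_0)\,\dd\vx_0=p_t(\vx_t)\,\vv_t(\vx_t)$ by Bayes and \Eqref{eq:velocity}. Averaging the $n$ identical terms gives $\vv_t(\vx_t)$.

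\emph{Part (b).} I would condition on $(t,\vx_t)$ in \Eqref{eq:stable_vm_loss}. Since $\vx_t\sim p_t$ marginally, the loss equals $\E_{t,\vx_t\sim p_t}\E_{\{\vx_0^i\}\mid\vx_t}\|\vv_\vtheta(\vx_t,t)-\widehat{\vv}_{\text{StableVM}}\|^2$. The bias--variance decomposition then gives
\[
\E_{t,\vx_t}\|\vv_\vtheta(\vx_t,t)-\E[\widehat{\vv}_{\text{StableVM}}\mid\vx_t]\|^2+\text{const}.
\]
By (a), the inner conditional mean is $\vv_t(\vx_t)$, so the pointwise minimizer (over a sufficiently expressive class) is $\vv^*(\vx_t,t)=\vv_t(\vx_t)$. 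This holds $p_t$-almost everywhere, which is everywhere because Gaussian smoothing gives $p_t>0$ for $t>0$.

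\emph{Main obstacle.} The delicate step is the denominator cancellation in (a). It works only because $\vx_t$ is drawn from the full mixture $p_t^{\text{GMM}}$ rather than from one fixed reference, as in STF. I would verify it carefully, including finiteness: the constant term requires $\E\|\widehat{\vv}_{\text{StableVM}}\|^2<\infty$. This holds if $q$ has finite second moment, since $\widehat{\vv}_{\text{StableVM}}$ is a convex combination of conditional velocities affine in $\vx_0^k$. At $t=0$ the weights degenerate, so I would state the result for $t\in(0,1]$.
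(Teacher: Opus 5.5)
Your proposal is correct and follows essentially the same route as the paper. For (a), the paper also writes the posterior in the joint-density form of Prop.~\ref{propn:posterior}, cancels the self-normalizing denominator against the mixture sum, integrates out the other $n-1$ references, and averages $n$ identical copies of $\vv_t(\vx_t)$. For (b), the paper conditions on $\vx_t\sim p_t$ and sets the gradient of the strictly convex pointwise loss to zero using (a); this is equivalent to your bias--variance decomposition, and your integrability and $t>0$ caveats are extra care that the paper omits.
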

Proofs are provided in Appendix~\ref{proof:unbiased target}.

\textbf{Variance of StableVM.}
While remaining unbiased, StableVM strictly reduces the variance of the training target. Following~\eqref{eq:variance of CFM}, we derive the average trace-of-covariance
\begin{equation}
\label{eq:stablevm-variance}
\begin{split}
&\gV_{\text{StableVM}}(t) \\
&= \E_{\vx_t \sim p_t} \bigg[\Tr\bigg(\Cov_{\{\vx_0^i\} \sim  p_{t}^{\text{GMM}}}\Big(\widehat{\vv}_{\text{StableVM}}\Big)\bigg)\bigg] \\ 
&=\E_{\substack{\vx_t \sim p_t \\ \{\vx_0^i\} \sim p_{t}^{\text{GMM}}}}\Big\|\vv_t(\vx_t) - \widehat{\vv}_{\text{StableVM}}(\vx_t; \{\vx_0^i\}_{i=0}^n) \Big\|^2.
\end{split}
\end{equation}

\begin{restatable}{theorem}{thmvarianceweak}
\label{thm:stablevm variance weak}
Fix $t\in[0,1]$. 
Assume that the conditional velocity field is affine and the event $\{\vx_0^1=\cdots=\vx_0^n\}$ has probability 0. Then we have $\gV_{\text{StableVM}}(t) < \gV_{\text{CFM}}(t)$ strictly.
\end{restatable}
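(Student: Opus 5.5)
The plan is to read StableVM as a Rao--Blackwellization of CFM on an augmented probability space, and to obtain the strict inequality from the conditional variance decomposition. Concretely, I would introduce an auxiliary index $I$ uniform on $\{1,\dots,n\}$ and generate the data as follows: $\{\vx_0^i\}\sim q^n$ i.i.d., then $I$, then $\vx_t\sim p_t(\vx_t\mid \vx_0^I)$. Marginalizing $I$ gives exactly $\vx_t\sim p_t^{\text{GMM}}(\cdot\mid\{\vx_0^i\})$, so this space carries the joint law used in $\gL_{\text{StableVM}}$ and in \eqref{eq:stablevm-variance}.

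I would then record three facts about this construction.
\begin{itemize}
\item The pair $(\vx_0^I,\vx_t)$ has law $q(\vx_0)\,p_t(\vx_t\mid\vx_0)$. Hence $\vx_t\sim p_t$, and \eqref{eq:variance of CFM} gives $\gV_{\text{CFM}}(t)=\E\|\vv_t(\vx_t\mid\vx_0^I)-\vv_t(\vx_t)\|^2$.
\item By Bayes, $P(I=k\mid \vx_t,\{\vx_0^i\})=w_k:=p_t(\vx_t\mid\vx_0^k)/\sum_j p_t(\vx_t\mid\vx_0^j)$. Therefore $\widehat{\vv}_{\text{StableVM}}=\E[\vv_t(\vx_t\mid\vx_0^I)\mid \vx_t,\{\vx_0^i\}]$. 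This is essentially the mixture-posterior statement of Prop.~\ref{propn:posterior}.
\item By Theorem~\ref{thm:unbiased convergence}(a), $\E[\widehat{\vv}_{\text{StableVM}}\mid\vx_t]=\vv_t(\vx_t)$. So the conditional law of the references given $\vx_t$ matches the one in the definition of $\gV_{\text{StableVM}}$.
\end{itemize}

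Next comes the decomposition step. Write $\vv_t(\vx_t\mid\vx_0^I)-\vv_t(\vx_t)=\big(\vv_t(\vx_t\mid\vx_0^I)-\widehat{\vv}_{\text{StableVM}}\big)+\big(\widehat{\vv}_{\text{StableVM}}-\vv_t(\vx_t)\big)$ and expand the square. The cross term vanishes after conditioning on $(\vx_t,\{\vx_0^i\})$, because the second bracket is measurable with respect to that $\sigma$-algebra and the first has conditional mean zero. This yields
\[
\gV_{\text{CFM}}(t)=\gV_{\text{StableVM}}(t)+\E\Big\|\vv_t(\vx_t\mid\vx_0^I)-\widehat{\vv}_{\text{StableVM}}\Big\|^2 .
\]
This already gives $\le$ without using the affine assumption.

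Strictness is the main obstacle, and this is where the affine assumption enters. Write $\vv_t(\vx_t\mid\vx_0)=a_t\vx_t+b_t\vx_0$ with $b_t=\alpha_t'-\sigma_t'\alpha_t/\sigma_t$ from \eqref{eq:conditional velocity}. The residual then becomes $b_t\big(\vx_0^I-\sum_k w_k\vx_0^k\big)$. For $\sigma_t>0$ the Gaussian weights satisfy $w_k>0$ for every $k$. So whenever the references are not all equal, $\vx_0^I$ differs from the convex combination $\sum_k w_k\vx_0^k$ with positive conditional probability: if it coincided for every $I$, all $\vx_0^k$ would equal that combination. The conditional residual variance $\sum_k w_k\|\vx_0^k-\bar{\vx}\|^2$, with $\bar{\vx}=\sum_k w_k\vx_0^k$, is therefore strictly positive off the null event $\{\vx_0^1=\cdots=\vx_0^n\}$, and so is its expectation.

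It remains to check $b_t\neq0$, and here I expect care is needed at the endpoints. For the linear interpolant $\sigma_t b_t=\alpha_t'\sigma_t-\sigma_t'\alpha_t=-1$. In general I would note that $b_t\neq0$ is implicit in the assumption: if $b_t=0$, the conditional velocity does not depend on $\vx_0$, and both variances vanish. At $t=0$ one has $\sigma_0=0$, so the paths degenerate and $t$ should be taken in $(0,1]$. I would state these caveats explicitly.
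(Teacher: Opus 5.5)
Your argument is correct and is essentially the paper's proof in probabilistic form. The paper applies Jensen's inequality pointwise in $(\vx_t,\{\vx_0^i\})$ with the weights $w_k$, takes strictness from Jensen's equality case, and integrates the upper bound $\sum_k w_k\|\vv_t(\vx_t)-\vv_t(\vx_t\mid\vx_0^k)\|^2$ back to $\gV_{\text{CFM}}(t)$; this is exactly your law-of-total-variance step (the Jensen gap is your conditional residual variance) combined with your observation that $(\vx_0^I,\vx_t)\sim q(\vx_0)\,p_t(\vx_t\mid\vx_0)$. Your version adds an exact identity for the gap and makes explicit the conditions $\sigma_t>0$ (so every $w_k>0$) and $b_t\neq 0$, which the paper's equality-case argument uses silently; one small correction is that the conditional law of the references given $\vx_t$ equals $p_t^{\text{GMM}}(\cdot\mid\vx_t)$ by Bayes' rule (Prop.~\ref{propn:posterior}), not by unbiasedness.
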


In fact, we can prove the following stronger variance bound stating that the variance decays in the rate of $O(1/n)$.
Proofs are provided in Appendix~\ref{proof:low variance}.
\begin{restatable}{theorem}{thmvariance}
\label{thm:stablevm variance}
Fix $t\in[0,1]$.
Assume $\bp{(n-1)\|\widehat{\vv}_t-\vv_t(\vx_t)\|^2}_{n=1}^{\infty}$ is uniformly integrable.
Define $r_t(\vx_t,\vx_0):=\frac{p_t(\vx_0\mid\vx_t)}{q(\vx_0)}$ and $\Delta_t(\vx_t,\vx_0):=\norm{\vv_t(\vx_t\mid\vx_0)-\vv_t(\vx_t)}^2$.
Then, for large enough $n$, we have
\ifappendix
\[
\gV_{\text{StableVM}}(t) = \frac{1}{n-1} \, \lrp{ \mathcal{V}_{\text{CFM}}(t) + \E_{\vx_0\sim q, \vx_t\sim p_t(\cdot\mid\vx_0)} \lrb{\bp{r_t(\vx_t,\vx_0)-1} \norm{\vv_t(\vx_t\mid\vx_0)-\vv_t(\vx_t)}^2} } + o\lrp{\frac1n}.
\]
\else
\begin{equation}
\begin{aligned}\label{eq:variance-final}
&\gV_{\text{StableVM}}(t)
= \frac{1}{n-1}\Bigg(
   \mathcal{V}_{\text{CFM}}(t) \\
&+ \mathop{\E}\limits_{\substack{\vx_0\sim q \\ \vx_t\sim p_t(\cdot\mid \vx_0)}}
\Big[
\big(r_t(\vx_t,\vx_0)-1\big)\,{\Delta_t(\vx_t,\vx_0)}
\Big]
\Bigg) + o\!\left(\frac{1}{n}\right).
\end{aligned}
\end{equation}
\fi
\end{restatable}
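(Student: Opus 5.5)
The plan is a delta-method (ratio-estimator) expansion of the self-normalized estimator. First I make the sampling law explicit. Under $\vx_t\sim p_t$ and $\{\vx_0^i\}\sim p_t^{\text{GMM}}(\cdot\mid\vx_t)$, the joint law of $(\vx_t,\{\vx_0^i\})$ equals $\prod_i q(\vx_0^i)\cdot\frac1n\sum_k p_t(\vx_t\mid\vx_0^k)$. This follows from Prop.~\ref{propn:posterior} and the marginal-preservation property used in Theorem~\ref{thm:unbiased convergence}. Equivalently, I pick an index $K$ uniformly, draw $\vx_0^K\sim q$ and $\vx_t\sim p_t(\cdot\mid\vx_0^K)$, and draw the other $n-1$ references i.i.d.\ from $q$. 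By exchangeability I take $K=1$.

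Fix $\vx_t$ and write $w_k:=r_t(\vx_t,\vx_0^k)=p_t(\vx_t\mid\vx_0^k)/p_t(\vx_t)$ and $\vv^k:=\vv_t(\vx_t\mid\vx_0^k)$. Then
\[
\widehat{\vv}_{\text{StableVM}}-\vv_t(\vx_t)=\frac{\sum_k w_k(\vv^k-\vv_t(\vx_t))}{\sum_j w_j}.
\]
For $k\ge2$ we have $\E_q[w_k]=1$ and $\E_q[w_k(\vv^k-\vv_t(\vx_t))]=0$, the latter because $\vv_t(\vx_t)$ is the posterior mean. I split the sums as $S:=\sum_{k\ge2}w_k(\vv^k-\vv_t)$ and $W:=\sum_{j\ge2}w_j$, with the special term $w_1(\vv^1-\vv_t)$ treated separately. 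Dividing numerator and denominator by $n-1$, the law of large numbers gives $W/(n-1)\to1$. The numerator is $S+w_1(\vv^1-\vv_t)$, where $S/(n-1)$ is a centered average of order $(n-1)^{-1/2}$ and the special term is $O(1)$, so it contributes at order $1/(n-1)$.

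Squaring and taking expectations, I expand $\|\cdot\|^2\approx\frac{1}{(n-1)^2}\|S+w_1(\vv^1-\vv_t)\|^2$, up to higher-order corrections from $W/(n-1)-1$. There are three contributions.
\begin{itemize}
\item $\E\|S\|^2/(n-1)^2=\frac{1}{n-1}\E_{\vx_0\sim q}[w^2\Delta_t]$. Integrating over $\vx_t\sim p_t$ gives $\frac{1}{n-1}\E_{\vx_0\sim q,\vx_t\sim p_t(\cdot\mid\vx_0)}[r_t\Delta_t]$, using $p_t(\vx_t)r_t\,q=p_t(\vx_t\mid\vx_0)q(\vx_0)$ to change measure.
\item The cross term $2\E\langle S,w_1(\vv^1-\vv_t)\rangle$ vanishes because $S$ is independent of index $1$ and centered.
\item $\E\|w_1(\vv^1-\vv_t)\|^2/(n-1)^2$ is $O(1/n^2)$.
\end{itemize}
It remains to account for the denominator fluctuation $W/(n-1)-1=O_p(n^{-1/2})$ multiplying a numerator of size $O_p(n^{-1/2})$. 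At first order this gives a correction $-2\E\langle S,\,S\rangle(W-(n-1))/(n-1)^3$-type terms, which are of order $1/n$ and must be tracked carefully. The target formula decomposes as $r_t\Delta_t=\Delta_t+(r_t-1)\Delta_t$, so the displayed expression equals $\frac{1}{n-1}\E[r_t\Delta_t]$. The claim is therefore that the denominator-fluctuation terms and the special index combine to cancel at order $1/n$. I will verify this by also keeping the first index inside the normalizer. The leading $1/(n-1)$ term then matches $\frac{1}{n-1}\E_{q}[r_t^2\Delta_t]$ after integrating $\vx_t$, which is exactly $\mathcal{V}_{\text{CFM}}(t)+\E[(r_t-1)\Delta_t]$ under $\vx_t\sim p_t(\cdot\mid\vx_0)$.

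To make this rigorous, I show $(n-1)\|\widehat{\vv}-\vv_t\|^2$ converges in distribution (for fixed $\vx_t$, by the CLT together with Slutsky for the ratio) to $\|Z\|^2$ with $Z$ Gaussian of covariance $\E_q[w^2(\vv-\vv_t)(\vv-\vv_t)^\top]$. The uniform-integrability hypothesis then upgrades convergence in distribution to convergence of expectations, jointly over $\vx_t$, which yields the $o(1/n)$ remainder. The main obstacle is the bookkeeping of which measure $\vx_t$ is drawn from, namely the size-biased index versus $p_t$, and correctly converting $\E_{p_t(\vx_t)}\E_q[r_t^2\Delta_t]$ into the stated $\E_{q,\,p_t(\cdot\mid\vx_0)}[(1+(r_t-1))\Delta_t]$ form. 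I would check this identity on a two-point $q$ first.
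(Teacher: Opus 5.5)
Your proposal is correct and follows essentially the paper's route. Both arguments use the latent-index description of $p_t^{\text{GMM}}(\cdot\mid\vx_t)$ (one reference drawn from the posterior, $n-1$ from $q$, with symmetry letting you fix the index), a CLT for the self-normalized ratio over the $q$-drawn references with Slutsky showing the posterior-drawn term is negligible, uniform integrability to pass from convergence in distribution to the second moment, and the change of measure $p_t(\vx_t)\,r_t\,q(\vx_0)=q(\vx_0)\,p_t(\vx_t\mid\vx_0)$ followed by splitting $r_t=1+(r_t-1)$.

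One correction to your heuristic middle step: the first-order denominator-fluctuation term is $O_p(n^{-3/2})$ (and $O(n^{-2})$ in expectation when third moments are finite), not of order $1/n$. So no cancellation is needed: your first bullet already produces the entire leading term, and your CLT-plus-uniform-integrability argument makes that expansion unnecessary anyway.
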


The training procedure is summarized in Alg.~\ref{alg:stable_vm}.

\textbf{Extension to Class-Conditional Generation with Classifier-Free Guidance.}
Extending StableVM to conditional generation introduces sparsity challenges, as only a small subset of reference samples may match a given label or prompt.
To address this, we maintain a class-conditional memory bank of capacity $K$, pre-populated from the training dataset and updated using a FIFO policy. This allows StableVM to construct the mixture-based noisy input and target field using a sufficiently large and diverse reference set even when per-batch class frequency is low.
Crucially, this mechanism preserves unbiasedness, as all references are drawn from the true data distribution, while effectively amortizing reference sampling over time.
The full algorithm is provided in Alg.~\ref{alg:stable_vm_mb_cfg}.

\begin{figure*}[t]
    \centering
    \includegraphics[width=0.8\textwidth]{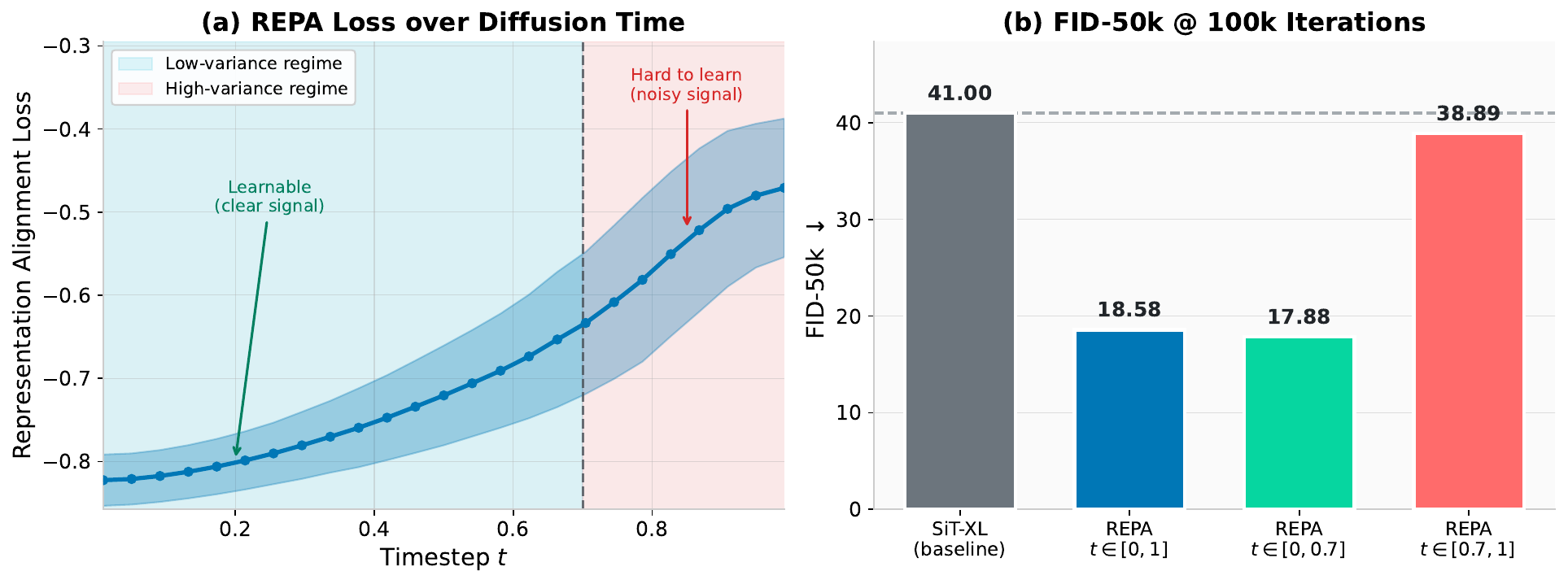}
    \caption{
    \textbf{Motivation for variance-aware representation alignment.}
    \textbf{(a)}
    In the \emph{low-variance regime}, the alignment loss remains consistently low on a pretrained model from REPA~\cite{yu2024representation}, indicating a learnable and informative supervision signal.
    In contrast, in the \emph{high-variance regime}, the loss stays high, reflecting the ill-posed nature of semantic recovery from noise.
    \textbf{(b)}
    Restricting representation alignment to the \emph{low-variance regime} yields the best FID, while applying it only in the \emph{high-variance regime} provides minimal meaningful improvement over the baseline.
    These results indicate that representation alignment should be activated adaptively rather than uniformly along the diffusion trajectory.
    }
    \label{fig:repa_motivation}
\end{figure*}

\subsection{Variance-Aware Representation Alignment}
\label{subsec:vafa}

Recent work on representation alignment (REPA)~\citep{yu2024representation} and its variants~\citep{leng2025repa,wu2025representation,singh2025matters} demonstrates that auxiliary semantic supervision can substantially accelerate the training of diffusion transformers~\citep{Peebles2022DiT,ma2024sit}. From a variance-regime perspective, we empirically find that the effectiveness of REPA largely arises when applied in the \emph{low-variance regime}.

As shown in Sec.~\ref{sec:variance analysis}, the generative process decomposes into low- and high-variance regimes. 
In the \emph{low-variance regime}, $\vx_t$ remains strongly coupled to $\vx_0$, preserving semantic information and making representation alignment well-conditioned. 
In contrast, in the \emph{high-variance regime} near pure noise, $\vx_t$ carries little information about $\vx_0$, and the posterior $p(\vx_0\mid\vx_t)$ becomes highly multimodal, rendering deterministic alignment ill-posed.

Fig.~\ref{fig:repa_motivation} empirically illustrates this regime dependence. When evaluated on a well-trained model, the per-timestep representation-alignment loss remains low and learnable in the low-variance regime, but saturates at high values in the high-variance regime. Consistently, restricting representation alignment to early timesteps yields substantially better FID than applying it uniformly, while applying it only in the \emph{high-variance regime} provides negligible benefit.

Motivated by these observations, we propose variance-aware representation alignment (VA-REPA): semantic alignment should be applied selectively in the \emph{low-variance regime} where the supervision signal is informative. This principle is independent of the specific representation or loss formulation and applies uniformly to REPA and its variants.

Let $\ell_{\mathrm{RA}}(\vx_t)$ denote a per-sample representation-alignment loss. We introduce a nonnegative weighting function $w(t) \in [0,1]$ and define the overall training objective as
\begin{equation}
\label{eq:vafa_loss}
\mathcal{L}
=
\mathcal{L}_{\text{StableVM}}
+
\lambda_{\mathrm{RA}}
\,
\frac{
\mathbb{E}_{t,\vx_t}\!\left[
w(t)\,\ell_{\mathrm{RA}}(\vx_t)
\right]
}{
\mathbb{E}_{t}\!\left[w(t)\right]
}.
\end{equation}
The normalization by $\mathbb{E}_t[w(t)]$ ensures that the alignment term is scaled by the number of \emph{effective samples}, preventing vanishing gradients when most samples fall in the \emph{high-variance regime}.

\textbf{Weighting functions.}
We consider three weighting schemes to modulate the alignment objective:
(i) a hard threshold $w_{\text{hard}}(t)=\mathbb{I}[t<\xi]$ for explicit ablation;
(ii) a sigmoid relaxation
$w_{\text{sigmoid}}(t)=\sigma\!\left(k(\xi-t)\right)$,
where $k$ controls the sharpness of the transition; 
and
(iii) an SNR-based weighting
$w_{\text{SNR}}(t)=\frac{\mathrm{SNR}(t)}{\mathrm{SNR}(t)+\mathrm{SNR}(\xi)}$,
with $\mathrm{SNR}(t)=\alpha_t^2/\sigma_t^2$, which anchors the midpoint
($w=0.5$) at $t=\xi$.
All weights are applied per sample and normalized within each minibatch.

\subsection{Stable Velocity Sampling}
\label{subsec:sampling acceleration} 
We introduce Stable Velocity Sampling (StableVS), a principled, finetuning-free acceleration strategy for sampling in the \emph{low-variance regime}. 
As shown in Sec.~\ref{sec:variance analysis} (Fig.~\ref{fig:variance_demo}), the conditional variance $\gV_{\mathrm{CFM}}(t)$ becomes negligible over a range of early timesteps. In this regime, the instantaneous velocity $\vv_t(\vx_t)$ is effectively determined by a single dominant data point $\vx_0$, allowing the true velocity field to be well approximated by the conditional velocity $\vv_t(\vx_t \mid \vx_0)$. 
StableVS exploits this structure to enable stable, large-step integration without degrading sample quality, provided the model accurately recovers $\vv_t(\vx_t)$.

\textbf{StableVS for SDE.} 
For the reverse SDE (\Eqref{eq:reverse sde}), we derive the following DDIM-style~\citep{song2021denoising} posterior:
\begin{align}
\label{eq:sde posterior}
p_\tau(\vx_\tau \mid \vx_t, \vv_t(\vx_t)) = \mathcal{N} \left( \bm{\mu}_{\tau \mid t},\ \beta_t^2 \mathbf{I} \right),
\end{align}
where $\beta_t = f_\beta \sigma_\tau$ and $f_\beta\in[0,1]$ is a controllable parameter. 
We then define the noise ratio $\rho_t := \sqrt{(\sigma_\tau^2 - \beta_t^2)/\sigma_t^2}$ and the velocity coupling coefficient $\lambda_t := ({ \alpha_\tau - \alpha_t \rho_t })/({ \alpha_t' - \alpha_t {\sigma_t'}/{\sigma_t} })$.
% \begin{equation}
% \label{eq:lambda_def}
% \lambda_t := \frac{ \alpha_\tau - \alpha_t \rho_t }{ \alpha_t' - \alpha_t {\sigma_t'}/{\sigma_t} }.
% \end{equation}
The posterior mean $\bm{\mu}_{\tau \mid t}$ is then given by:
\begin{equation}
\label{eq:posterior mean}
\bm{\mu}_{\tau \mid t} = \left( \rho_t - \lambda_t \frac{\sigma_t'}{\sigma_t} \right) \vx_t + \lambda_t \vv_t(\vx_t).
\end{equation}
The full derivation is provided in Appendix~\ref{proof:reverse sde}.

\textbf{StableVS for ODE.} 
For the probability flow ODE, we define the integral factor $\Psi_{t, \tau} := \frac{1}{C_t}\int_{t}^{\tau} \frac{C(s)}{\sigma_s} \,\dd s$, where $C(s) = \alpha_s' - \alpha_s \sigma_s'/\sigma_s$. The exact solution at timestep $\tau$ is:
\begin{equation}
\label{eq:ode solution}
\vx_{\tau} = \sigma_{\tau} \bigg[ \bigg( \frac{1}{\sigma_{t}} - \frac{\sigma_{t}'}{\sigma_{t}} \Psi_{t, \tau} \bigg) \vx_{t} + \Psi_{t, \tau} \vv_{t}(\vx_{t}) \bigg].
\end{equation}
The derivation is in Appendix~\ref{proof:reverse ode}.

In the special case of linear interpolant (\ie, $\alpha_t = 1 - t$, $\sigma_t = t$), setting $\beta_t = 0$ in \Eqref{eq:sde posterior} makes two samplers coincide:
\begin{align}
\label{eq:arbitrary step-size euler}
\vx_{\tau} = \vx_{t} + (\tau - t) \vv_{t}(\vx_{t}).
\end{align}
In the \emph{low-variance regime}, the probability flow trajectory reduces to a straight line with constant velocity, allowing exact integration via Euler steps of arbitrary size.

\section{Experiments}
\label{sec:experiments}

In this section, we empirically validate the three components of our variance-driven framework.
Specifically, we address the following questions:

1. Can StableVM and VA-REPA improve generation performance and training speed? (Tab.~\ref{tab:cfg_results},~\ref{tab:model_scale})

2. Can StableVM and VA-REPA generalize across different training settings? (Tab.~\ref{tab:model_scale},~\ref{tab:repa_variants},~\ref{tab:split_point}, Fig.~\ref{fig:weighting_schemes_bank_capacity})

3. Can StableVS greatly reduce the sampling steps within the \emph{low-variance regime} without performance degradation on pretrained T2I and T2V models? (Tab.~\ref{tab:compbench_results},~\ref{tab:geneval_results}, Fig.~\ref{fig:qualitative_comparisons})

\subsection{Setup}
\label{subsec:setup}
\textbf{Implementation Details.}
Unless otherwise specified, the implementations of StableVS and VA-REPA follow the configuration of REPA~\citep{yu2024representation}. All models are trained on the ImageNet~\citep{deng2009imagenet} training split. 
Input images are encoded into latent representations $\mathbf{z} \in \mathbb{R}^{32 \times 32 \times 4}$ using the pre-trained VAE from Stable Diffusion~\citep{rombach2022high}. For StableVM and VA-REPA, we set the split point $\xi = 0.7$, the bank capacity $K = 256$, and adopt the sigmoid weighting function $w_{\text{sigmoid}}(t)$. StableVS uses 9 steps in the \emph{low-variance regime}, with $\xi=0.85$ and $f_\beta=0$.

\textbf{Evaluation.} For StableVM and VA-REPA, we follow the ADM evaluation protocol~\citep{dhariwal2021diffusion}. Generation quality is assessed using FID~\citep{heusel2017gans}, IS~\citep{salimans2016improved}, sFID~\citep{nash2021generating}, and precision/recall~\citep{kynkaanniemi2019improved}, all computed over 50K generated samples. Following~\citet{yu2024representation}, we employ an SDE Euler-Maruyama sampler with 250 steps. For experiments with classifier-free guidance (CFG)~\citep{ho2022classifier}, we use a guidance scale of $w = 1.8$ with interval-based CFG schedule~\citep{kynkaanniemi2024applying}.

We further evaluate StableVS on standard text-to-image (T2I) and text-to-video (T2V) benchmarks. For T2I, we adopt the \textit{GenEval} benchmark~\citep{ghosh2023geneval}, which comprises 553 prompts spanning 6 categories. Following the official protocol, we generate four samples per prompt using random seeds $\{0, 1000, 2000, 3000\}$. For T2V, we evaluate on \textit{T2V-CompBench}~\citep{sun2025t2v}, which contains 1{,}400 text prompts covering seven aspects of compositionality. For both tasks, we additionally report reference-based metrics, including PSNR, SSIM, and LPIPS~\citep{zhang2018perceptual}, computed with respect to a 30-step baseline.

\subsection{Evaluation of StableVM and VA-REPA}
\label{subsec:stablevm and va-repa evaluations}
\textbf{Quantitative Evaluation.}
We evaluate StableVM and VA-REPA against state-of-the-art latent diffusion transformers under both classifier-free guidance (CFG) and non-CFG settings.
Tab.~\ref{tab:cfg_results} reports CFG results using the SiT-XL backbone for all methods.
With only 80 training epochs, our approach achieves the strongest overall performance among all compared methods, outperforming prior REPA-based approaches in both FID and IS, while remaining highly competitive with REPA-E~\citep{leng2025repa}.
Although REPA-E attains a slightly lower FID, the gap is marginal, and our method reaches comparable generation quality at substantially lower training cost.
Notably, REPA-E relies on expensive end-to-end fine-tuning of both the autoencoder and the diffusion transformer, whereas our pipeline keeps the autoencoder fixed, resulting in a more modular and computationally efficient training procedure.

Tab.~\ref{tab:model_scale} reports results without CFG over multiple SiT architectures and training checkpoints.
Across all reported settings, StableVM and VA-REPA consistently improve FID, IS, precision, and recall over vanilla REPA, demonstrating strong scalability and stability with respect to both model size and training duration.

\begin{table}[t]
\centering
\caption{\textbf{Comparison of latent diffusion transformers with CFG.}
We compare our method against baselines including MaskDiT~\citep{zheng2023fast}, DiT-XL/2~\citep{peebles2023scalable},
SiT-XL/2~\citep{ma2024sit}, Faster-DiT~\citep{yao2024fasterdit},
REPA~\citep{yu2024representation}, iREPA~\citep{singh2025matters},
REG~\citep{wu2025representation}, and REPA-E~\citep{leng2025repa}.
The first \textbf{Ours} block uses the standard REPA sampling protocol, while the second adopts class-balanced sampling (marked with $^*$) following REPA-E.
Methods marked with $^\dagger$ require fine-tuning autoencoders. 
}
\label{tab:cfg_results}

\setlength{\tabcolsep}{4.5pt}
\renewcommand{\arraystretch}{1.1}

\resizebox{0.9\columnwidth}{!}{%
\begin{tabular}{lcccccc}
\toprule
\textbf{Model} & \textbf{Epochs} & \textbf{FID}$\downarrow$ & \textbf{sFID}$\downarrow$ & \textbf{IS}$\uparrow$ & \textbf{Prec}.$\uparrow$ & \textbf{Rec.}$\uparrow$ \\
\midrule

\multicolumn{7}{l}{\textit{Latent Diffusion Transformers}} \\
MaskDiT      & 1600 & 2.28 & 5.67 & 276.6 & 0.80 & 0.61 \\
DiT-XL/2     & 1400 & 2.27 & 4.60 & 278.2 & 0.83 & 0.57 \\
SiT-XL/2     & 1400 & 2.06 & 4.50 & 270.3 & 0.82 & 0.59 \\
Faster-DiT   & 400  & 2.03 & 4.63 & 264.0 & 0.81 & 0.60 \\

\midrule
\multicolumn{7}{l}{\textit{Representation Alignment Methods}} \\

REPA
 & 80   & 1.98 & 4.60 & 263.0 & 0.80 & \textbf{0.61} \\
 & 800  & 1.42 & 4.70 & 305.7 & 0.80 & 0.65 \\
\addlinespace[2pt]

iREPA
 & 80   & 1.93 & 4.59 & 268.8 & 0.80 & 0.60 \\
\addlinespace[2pt]

REG
 & 80   & 1.86 & \textbf{4.49} & \textbf{321.4} & 0.76 & 0.63 \\
 & 480  & 1.40 & 4.24 & 296.9 & 0.77 & 0.66 \\
\addlinespace[2pt]

\rowcolor{highlight} \textbf{Ours}
 & 80   & \textbf{1.80} & 4.52 & 272.4 & \textbf{0.81} & 0.60 \\
\rowcolor{highlight}  & 400  & 1.47 & 4.51 & 300.3 & 0.80 & 0.63 \\
\rowcolor{highlight}  & 480  & 1.44 & 4.49 & 302.9 & 0.80 & 0.64 \\
\midrule
REPA-E$^\dagger$
 & 80   & \textbf{1.67$^*$} & \textbf{4.12$^*$} & 266.3$^*$ & 0.80$^*$ & \textbf{0.63$^*$} \\
 & 800  & 1.12$^*$ & 4.09$^*$ & 302.9$^*$ & 0.79$^*$ & 0.66$^*$ \\
\addlinespace[3pt]
\rowcolor{highlight} \textbf{Ours} 
& 80   & 1.71$^*$ & 4.54$^*$ & \textbf{274.2$^*$} & \textbf{0.81$^*$} & 0.61$^*$ \\
\rowcolor{highlight}  
& 400  & 1.34$^*$ & 4.53$^*$ & 305.0$^*$ & 0.80$^*$ & 0.64$^*$ \\
\rowcolor{highlight}  
& 480  & 1.33$^*$ & 4.46$^*$ & 307.8$^*$ & 0.80$^*$ & 0.64$^*$ \\
\bottomrule
\end{tabular}
}
\vspace{-0.3cm}
\end{table}

\begin{table}[t]
    \centering
    \caption{\textbf{Variation in Model Scale and Checkpoints.} Comparison of our full method (StableVM + VA-REPA) against vanilla-REPA. Results are reported without CFG. }
    \label{tab:model_scale}
    \resizebox{0.95\columnwidth}{!}{
        \begin{tabular}{lcccccc}
            \toprule
            \textbf{Method} & \textbf{Iter} & \textbf{FID}$\downarrow$ & \textbf{sFID}$\downarrow$ & \textbf{IS}$\uparrow$ & \textbf{Prec.}$\uparrow$ & \textbf{Rec.}$\uparrow$ \\ 
            \midrule
            \multicolumn{6}{l}{\textit{SiT-B/2}~(130M)}\\
            REPA &100k &52.06 &8.18 &26.8 &0.45 &0.59\\
            \rowcolor{highlight} \textbf{Ours} &100k &\textbf{49.69} &8.18 &\textbf{28.5} &\textbf{0.46} &\textbf{0.60}\\
            \midrule
            \multicolumn{5}{l}{\textit{SiT-L/2}~(458M)}\\
            REPA &100k &22.75 &5.52 &59.9 &0.61 &0.63\\
            \rowcolor{highlight} \textbf{Ours} &100k &\textbf{21.03} &\textbf{5.51} &\textbf{63.9} &\textbf{0.62} &0.63 \\
            \midrule
            \multicolumn{5}{l}{\textit{SiT-XL/2}~(675M)}\\
            REPA &100k & 18.59 & 5.39 & 70.6 & 0.64 & 0.62 \\
            \rowcolor{highlight} \textbf{Ours} &100k & \textbf{17.12} & 5.39 & \textbf{74.8} & \textbf{0.65} & \textbf{0.63} \\
            REPA &200k & 11.04 & \textbf{5.02} & 101.9 & 0.68 & 0.64 \\
            \rowcolor{highlight} \textbf{Ours} &200k & \textbf{10.56} & 5.03 & \textbf{105.4} & \textbf{0.69} & \textbf{0.64} \\
            REPA &400k & 8.13 & \textbf{5.01} & 124.5 & 0.69 & 0.66 \\
            \rowcolor{highlight} \textbf{Ours} &400k & \textbf{7.58} & 5.03 & \textbf{127.6} & \textbf{0.70} & \textbf{0.66} \\
            % REPA &800k & 6.94 & \textbf{5.17} & 137.9 & 0.70 & 0.68 \\
            % \rowcolor{highlight} \textbf{Ours} &800k & \textbf{6.75} & 5.18 & \textbf{139.0} & \textbf{0.70} & \textbf{0.68} \\
            \bottomrule
        \end{tabular}
    }
\end{table}

\textbf{Variation in REPA-based Methods.} StableVM and VA-REPA can plug into existing REPA-style pipelines without modifying the underlying method. Tab.~\ref{tab:repa_variants} reports results at 100k iterations for vanilla REPA, REG, and iREPA. Integrating our approach consistently improves the performance across all variants, demonstrating that our contributions are orthogonal and provide reliable, drop-in performance gains.

\begin{table}[t]
    \centering
    \caption{\textbf{Compatibility of StableVM + VA-REPA with REPA variants.} Integration results for vanilla REPA, REG, and iREPA.
    All metrics are reported at 100k iterations.}
    \label{tab:repa_variants}
    \resizebox{0.85\columnwidth}{!}{
        \begin{tabular}{lccccc}
            \toprule
            \textbf{Methods} & \textbf{FID}$\downarrow$ & \textbf{sFID}$\downarrow$ & \textbf{IS}$\uparrow$ & \textbf{Prec.}$\uparrow$ & \textbf{Rec.}$\uparrow$ \\ 
            \midrule
            REPA & 18.59 & 5.39 & 70.6 & 0.64 & 0.62 \\
            \rowcolor{highlight} \textbf{+Ours} & \textbf{17.12} & 5.39 & \textbf{74.8} & \textbf{0.65} & \textbf{0.63}\\
            \midrule
            REG &8.90 &5.50 &125.3 &0.72 &0.59\\
            \rowcolor{highlight} \textbf{+Ours} &\textbf{8.11} &\textbf{5.34} &\textbf{128.8} &\textbf{0.74} &\textbf{0.60}  \\
            \midrule
            iREPA & 16.62 & 5.31 & 76.7 & 0.65 & 0.63 \\
            \rowcolor{highlight} \textbf{+Ours} & \textbf{16.02} & \textbf{5.30} & \textbf{78.6} & \textbf{0.66} & 0.63 \\
            \bottomrule
        \end{tabular}
    }
\end{table}

\textbf{Ablation on Split Point $\xi$.} The split point $\xi$ defines the boundary between the \emph{low-variance regime} and the \emph{high-variance regime}, and determines the extent of representation alignment. Tab.~\ref{tab:split_point} analyzes the effect of different $\xi$ values at multiple training stages. At early training (100k iterations), a smaller split point ($\xi=0.6$) achieves the best FID, indicating that weaker alignment provides an easier supervisory signal that accelerates initial convergence. As training progresses, $\xi=0.7$ consistently yields the best overall performance, particularly at 400k iterations, where it delivers the best performance. In contrast, a larger split point ($\xi=0.8$) degrades performance, due to noisy supervision introduced from the high-variance regime. Based on these results, we adopt $\xi=0.7$ as the default setting.

\begin{table}[t]
\centering
\caption{\textbf{Ablation on split point $\xi$.}
Impact of different split points across training stages.
The default setting ($\xi=0.7$) is highlighted.}
\label{tab:split_point}
\resizebox{\columnwidth}{!}{
\begin{tabular}{l c c c c c c}
\toprule
\textbf{Iter} & \textbf{Split point} $\xi$ 
& \textbf{FID}$\downarrow$ 
& \textbf{sFID}$\downarrow$ 
& \textbf{IS}$\uparrow$ 
& \textbf{Prec.}$\uparrow$ 
& \textbf{Rec.}$\uparrow$ \\
\midrule
\multirow{4}{*}{100k}
& 0.6  & \textbf{17.38} & 5.36 & \textbf{73.7} & 0.65 & \textbf{0.63} \\
& \cellcolor{highlight}0.7  & \cellcolor{highlight}17.63 & \cellcolor{highlight}\textbf{5.33} & \cellcolor{highlight}73.2 & \cellcolor{highlight}0.65 & \cellcolor{highlight}0.62 \\
& 0.8  & 17.85 & 5.34 & 72.3 & 0.65 & 0.62 \\
\cmidrule(lr){1-7}
\multirow{4}{*}{200k}
& 0.6  & 10.57 & 5.02 & 104.2 & 0.69 & 0.64 \\
& \cellcolor{highlight}0.7  & \cellcolor{highlight}\textbf{10.56} & \cellcolor{highlight}\textbf{5.03} & \cellcolor{highlight}\textbf{105.4} & \cellcolor{highlight}\textbf{0.69} & \cellcolor{highlight}0.64 \\
& 0.8  & 10.73 & 5.02 & 103.4 & 0.68 & \textbf{0.65} \\
\cmidrule(lr){1-7}
\multirow{4}{*}{400k}
& 0.6  & 7.97 & 5.04 & 124.3 & 0.70 & 0.66 \\
& \cellcolor{highlight}0.7  & \cellcolor{highlight}\textbf{7.58} & \cellcolor{highlight}5.03 & \cellcolor{highlight}\textbf{127.6} & \cellcolor{highlight}\textbf{0.70} & \cellcolor{highlight}\textbf{0.66} \\
& 0.8 &7.62 &4.97 &127.0 &0.70 &0.66\\
\bottomrule
\end{tabular}
}
\end{table}

\textbf{Ablations on weighting schemes $w(t)$ and bank capacity $K$.}
Fig.~\ref{fig:weighting_schemes_bank_capacity} studies the sensitivity of VA-REPA weighting strategies and the StableVM memory bank capacity. Across all weighting strategies, incorporating VA-REPA consistently improves performance over the REPA baseline. Among them, soft weighting schemes outperform the hard threshold, with $w_{\text{sigmoid}}(t)$ achieving the best overall results. For the memory bank, $K=256$ is sufficient to obtain stable variance reduction, while increasing to $K=1024$ yields only marginal gains, indicating diminishing returns.

\begin{figure}[t]
\centering
\includegraphics[width=\columnwidth]{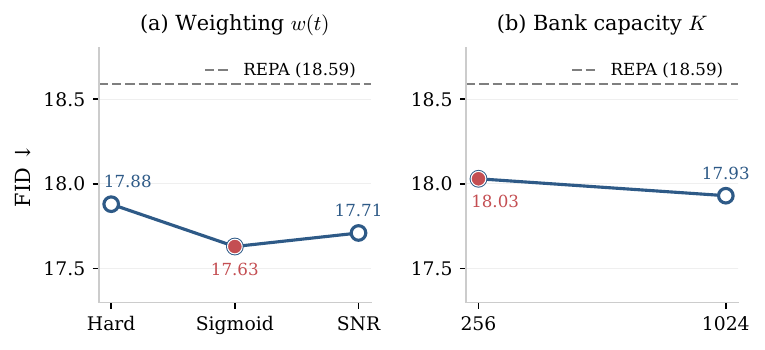}
\caption{\textbf{Ablation on VA-REPA weighting and StableVM bank capacity.}
Left: effect of different weighting schemes $w(t)$, showing that soft weightings outperform hard thresholding.
Right: effect of memory bank capacity $K$, where $K=256$ already achieves near-optimal performance.
All results are evaluated at 100k iterations. REPA baseline is shown as a dashed line.} 
\label{fig:weighting_schemes_bank_capacity}
\end{figure}

\subsection{Evaluation of StableVS}
\label{subsec:stablevs evaluation}
\begin{figure}[ht]
\centering
\includegraphics[width=\columnwidth]{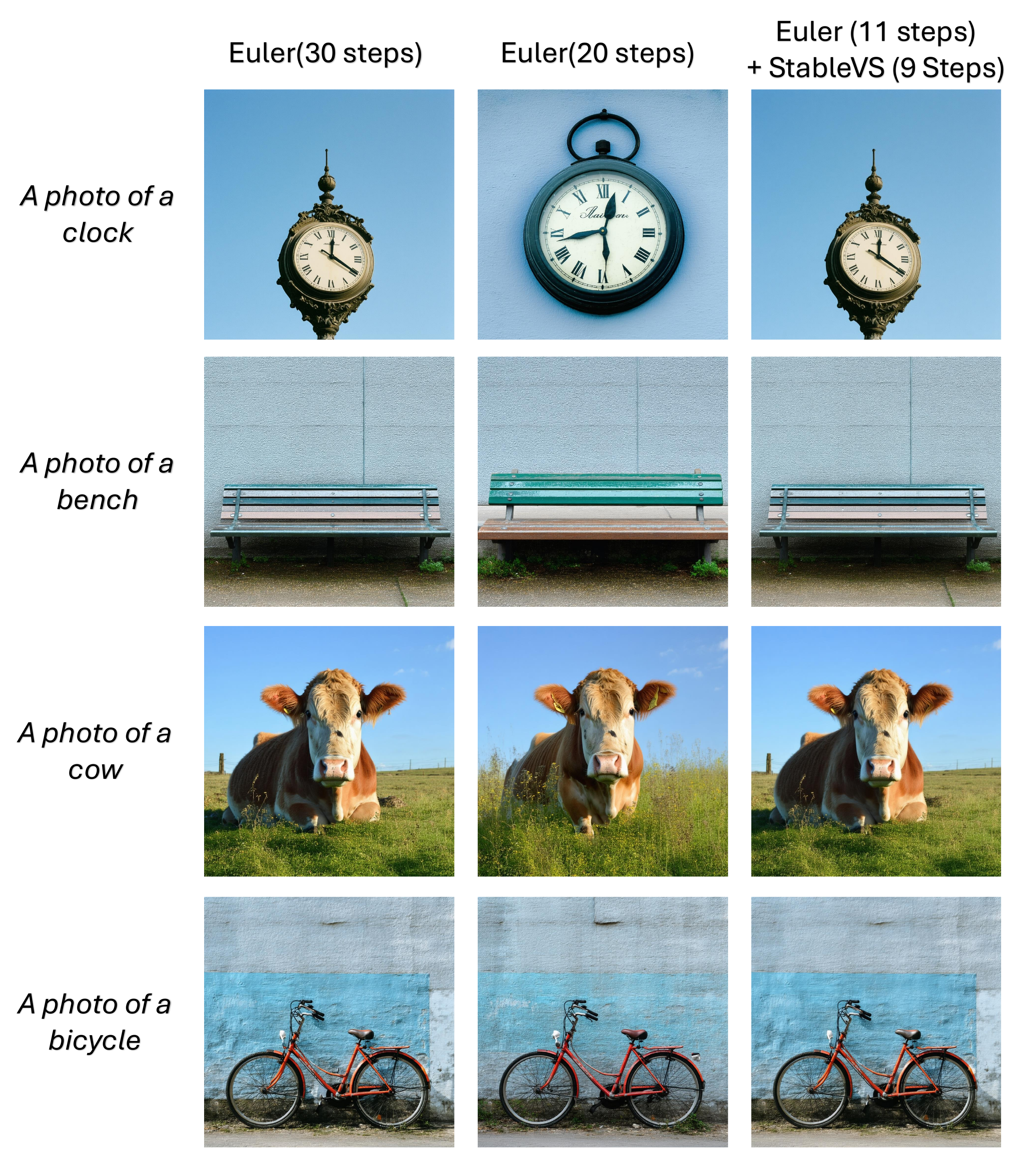}
\caption{\textbf{Visual comparison across prompts on SD3.5~\citep{esser2024scaling}.}
Results are generated using the Euler solver with 30 and 20 steps, and with StableVS replacing Euler in the \emph{low-variance regime}, all under the same random seeds. Compared to the standard 20-step solver, StableVS yields outputs that more closely resemble the 30-step results. Zoom in for details. Additional qualitative comparisons are provided in Appendix~\ref{appen:qualitative results}.} 
\label{fig:qualitative_comparisons}
\end{figure}

\begin{table*}[t]
\centering
\caption{\textbf{Evaluation on \textit{T2V-CompBench} at $640\times480$p for \textit{Wan2.2}~\citep{wan2025wan}.} StableVS replaces the base solver in the \emph{low-variance regime} $[0,\xi]$ (steps in parentheses), keeping the \emph{high-variance regime} $[\xi,1]$ unchanged. Split point fixed at $\xi=0.85$. Highlighted rows show StableVS matches or exceeds 30-step baselines with fewer steps.
}
\label{tab:compbench_results}
\resizebox{\textwidth}{!}{%
\begin{tabular}{cccccccccccccc}
\toprule
\multicolumn{3}{c}{\textbf{Solver configuration}} 
& \multicolumn{3}{c}{\textbf{Reference metrics}} 
& \multicolumn{7}{c}{\textbf{T2V-CompBench metrics}} \\
\cmidrule(lr){1-3} \cmidrule(lr){4-6} \cmidrule(lr){7-13}
\textbf{Base solver} & \textbf{Solver in $[0,\xi]$} & \textbf{Total steps} 
& PSNR~$\uparrow$ & SSIM~$\uparrow$ & LPIPS~$\downarrow$ 
& Consist~$\uparrow$ & Dynamic~$\uparrow$ & Spatial~$\uparrow$ & Motion~$\uparrow$ & Action~$\uparrow$ & Interact~$\uparrow$ & Numeracy~$\uparrow$ \\
\midrule
\multirow{3}{*}{UniPC}
& UniPC(19) & 30 & -- & -- & -- &0.842 &0.120 &0.607 &\textbf{0.299} &0.749 &\textbf{0.708} &\textbf{0.476} \\
& UniPC(13) & 20 & 15.61 & 0.593 & 0.377 &0.821 &0.123 &\textbf{0.618} &0.265 &0.720 &0.703 &0.462 \\
& \cellcolor{highlight}\textbf{StableVS(9)} 
& \cellcolor{highlight}20 
& \cellcolor{highlight}\textbf{31.10} & \cellcolor{highlight}\textbf{0.942} & \cellcolor{highlight}\textbf{0.036}
& \cellcolor{highlight}\textbf{0.843} & \cellcolor{highlight}\textbf{0.123} & \cellcolor{highlight}0.610 & \cellcolor{highlight}0.289 & \cellcolor{highlight}\textbf{0.753} & \cellcolor{highlight}0.699 & \cellcolor{highlight}\textbf{0.476} \\

\bottomrule
\end{tabular}
}
\end{table*}

\begin{table}[t]
\centering
\caption{\textbf{Evaluation on \textit{GenEval} at $1024\times1024$ resolution.}
StableVS replaces the base solver in the \emph{low-variance regime} $[0,\xi]$,
while keeping the \emph{high-variance regime} $[\xi,1]$ unchanged.
Numbers in parentheses (e.g., Euler(19)) denote the number of sampling steps in the \emph{low-variance regime}.
The split point is fixed to $\xi=0.85$ for all models.
Highlighted rows demonstrate that StableVS achieves comparable results to the 30-step baseline with fewer total sampling steps. Full results are reported in Tab.~\ref{tab:geneval_results_full}.
}
\label{tab:geneval_results}
\resizebox{\columnwidth}{!}{%
\begin{tabular}{cccccccc}
\toprule
\multicolumn{3}{c}{\textbf{Solver configuration}} 
& \multirow{2}{*}{\textbf{Overall}~$\uparrow$}
& \multicolumn{3}{c}{\textbf{Reference metrics}} \\
\cmidrule(lr){1-3} \cmidrule(lr){5-7}
\textbf{Base solver}
& \textbf{Solver in $[0,\xi]$}
& \textbf{Total steps}
& 
& PSNR~$\uparrow$ & SSIM~$\uparrow$ & LPIPS~$\downarrow$ \\
\midrule

\multicolumn{7}{l}{\textbf{\textit{SD3.5-Large}}} \\

\multirow{3}{*}{Euler}
& Euler(19) & 30 & 0.723 & -- & -- & -- \\
& Euler(13) & 20 & 0.710 & 16.93 & 0.753 & 0.333 \\
& \cellcolor{highlight}\textbf{StableVS(9)}
& \cellcolor{highlight}20
& \cellcolor{highlight}\textbf{0.723}
& \cellcolor{highlight}\textbf{36.92}
& \cellcolor{highlight}\textbf{0.980}
& \cellcolor{highlight}\textbf{0.021} \\

\midrule
\multirow{3}{*}{DPM++}
& DPM++(19) & 30 & \textbf{0.724} & -- & -- & -- \\
& DPM++(13) & 20 & 0.717 & 17.42 & 0.784 & 0.287 \\
& \cellcolor{highlight}\textbf{StableVS(9)}
& \cellcolor{highlight}20
& \cellcolor{highlight}0.719
& \cellcolor{highlight}\textbf{32.61}
& \cellcolor{highlight}\textbf{0.957} 
& \cellcolor{highlight}\textbf{0.063} \\

\midrule
\multicolumn{7}{l}{\textbf{\textit{Flux-dev}}} \\

\multirow{3}{*}{Euler}
& Euler(19) & 30 & 0.660 & -- & -- & -- \\
& Euler(13) & 20 & 0.659 & 19.74 & 0.820 & 0.244 \\
& \cellcolor{highlight}\textbf{StableVS(9)}
& \cellcolor{highlight}20
& \cellcolor{highlight}\textbf{0.666}
& \cellcolor{highlight}\textbf{35.45}
& \cellcolor{highlight}\textbf{0.968}
& \cellcolor{highlight}\textbf{0.025} \\

\midrule
\multicolumn{7}{l}{\textbf{\textit{Qwen-Image-2512}}} \\

\multirow{3}{*}{Euler}
& Euler(22) & 30 & \textbf{0.733} & -- & -- & -- \\
& Euler(12) & 17 & 0.721 & 17.01 & 0.767 & 0.277 \\
& \cellcolor{highlight}\textbf{StableVS(9)}
& \cellcolor{highlight}17
& \cellcolor{highlight}0.731
& \cellcolor{highlight}\textbf{32.27}
& \cellcolor{highlight}\textbf{0.962}
& \cellcolor{highlight}\textbf{0.031} \\

\bottomrule
\end{tabular}
}
\end{table}

\textbf{StableVS reduces sampling cost while preserving content across solvers and modalities.}
Tab.~\ref{tab:compbench_results} and~\ref{tab:geneval_results} show that replacing the base solver with StableVS only in the \emph{low-variance regime} substantially reduces the total number of sampling steps without degrading generation quality.
Across SD3.5, Flux, Qwen-Image-2512, and Wan2.2, StableVS with 9 low-variance steps consistently matches or exceeds 30-step baselines, while naively shortening the base solver leads to noticeable drops in reference metrics.
This behavior is solver-agnostic and holds for both images and videos: similar gains are observed across Euler, DPM-Solver++, and UniPC.
On GenEval, StableVS recovers the perceptual fidelity lost by short-step baselines, producing outputs indistinguishable from 30-step results; on T2V-CompBench, it maintains generative performance while significantly improving reference metrics, indicating that step reduction doesn't alter spatial structure, motion patterns, or semantic content.
These findings directly support our analysis in Sec.~\ref{subsec:sampling acceleration}: in the low-variance regime, the posterior effectively collapses, rendering the sampling trajectory deterministic.
Consequently, replacing the base solver with StableVS in this regime changes the numerical integration path but not the resulting sample, a conclusion confirmed by qualitative comparisons in Fig.~\ref{fig:qualitative_comparisons} under identical random seeds.

\textbf{Choice of split point $\xi$ for StableVS.} We note that the split point $\xi$ used for StableVS differs from that used for VA-REPA. Empirically, we find that a smaller split point adopted in VA-REPA ($\xi=0.7$) yields samples that are closer to the original 30-step baseline, whereas a larger split point (\eg, $\xi=0.85$) allows more aggressive step reduction with minimal quality degradation. A detailed ablation over $\xi$ and other StableVS hyperparameters is provided in Tab.~\ref{tab:stablevs_hyperparameters}.

\section{Related Works}
\label{sec:related works}

\textbf{Training Acceleration of Diffusion and Flow Models.}
Diffusion and flow matching models exhibit strong generative performance but incur substantial computational costs when trained on high-resolution data.
To mitigate this burden, prior work has explored three complementary directions.
First, dimensionality reduction methods compress high-resolution data into lower-dimensional representations to reduce training cost.
Latent diffusion~\citep{rombach2022high} pioneered this paradigm, with subsequent improvements focusing on more efficient autoencoders~\citep{chen2025deep} or localized modeling~\citep{wang2023patch}.
Second, several works incorporate auxiliary regularization or self-supervised objectives to stabilize optimization and accelerate convergence~\citep{zheng2023fast,yu2024representation,wu2025representation,leng2025repa,singh2025matters}.
Third, improvements to the training objective itself aim to reduce variance or imbalance across timesteps, including log-normal timestep sampling and SNR-aware weighting~\citep{karras2022elucidating,choi2022perception}.
More closely related to our work, recent studies employ self-normalized importance sampling (SNIS) estimators~\citep{hesterberg1995weighted} to reduce the variance of score estimation~\citep{xu2023stabletargetfieldreduced,niedoba2024nearest}, albeit at the cost of introducing bias.
In contrast, our work provides a variance-centric analysis that explicitly reveals a two-regime structure in stochastic interpolants, and introduces an unbiased, variance-reducing training objective StableVM together with VA-REPA, unifying variance reduction and auxiliary supervision under a single principle.

\textbf{Sampling Acceleration of Diffusion and Flow Models.}
Reducing the number of sampling steps is critical for practical deployment of diffusion and flow-based generative models.
Existing approaches can be broadly categorized into training-required and training-free methods.
Training-required approaches leverage additional learning to enable few-step generation, including progressive distillation~\citep{salimans2022progressive,sauer2024adversarial}, consistency models~\citep{song2023consistency}, inductive moment matching~\citep{zhou2025inductive}, and mean flow models~\citep{geng2026mean}.
Training-free approaches instead focus on improved numerical solvers for the reverse ODE, such as DDIM~\citep{song2021denoising}, DPM-Solver and its variants~\citep{lu2022dpm,lu2025dpm}, and UniPC~\citep{zhao2023unipc}.
While these methods improve integration accuracy, they treat the entire diffusion trajectory uniformly.
Our StableVS departs from this view by exploiting the low-variance regime identified in our analysis, where the sampling dynamics become effectively deterministic.
This regime-aware perspective enables aggressive step reduction without retraining, while remaining compatible with existing solvers in the high-variance regime.

\section{Conclusion}
\label{sec:conclusion}

In this work, we develop a variance-based perspective on stochastic interpolants and reveal a two-regime structure that fundamentally governs both training and sampling dynamics.
Our analysis shows that high-variance conditional velocity targets hinder optimization, whereas in the \emph{low-variance regime} the conditional and true velocities coincide, yielding both stable supervision and predictable dynamics.
Building on this insight, we introduce the \textbf{Stable Velocity} framework, comprising StableVM for unbiased variance-reduced training, VA-REPA for selectively applying auxiliary supervision, and StableVS for finetuning-free acceleration at inference.
Extensive experiments on ImageNet $256\times256$ and large pretrained T2I and T2V models demonstrate consistent improvements in training stability and substantial sampling speedups without sacrificing sample quality.
Beyond the specific methods introduced here, our results suggest that explicitly modeling variance structure along the generative trajectory provides a principled foundation for designing more efficient training objectives and sampling algorithms in diffusion and flow-based generative models.

\section*{Acknowledgments}
This work was supported by Kuaishou Technology and also funded, in part, by the NSERC DG Grant (No. RGPIN-2022-04636), the Vector Institute for AI, Canada CIFAR AI Chair, NSERC Canada Research Chair (CRC), NSERC Discovery Grants, and a Google Gift Fund. 
Resources used in preparing this research were provided, in part, by the Province of Ontario, the Government of Canada through the Digital Research Alliance of Canada \url{alliance.can.ca}, and companies sponsoring the Vector Institute \url{www.vectorinstitute.ai/#partners}, and Advanced Research Computing at the University of British Columbia. 
Additional hardware support was provided by John R. Evans Leaders Fund CFI grant.

\section*{Impact Statement}
This work aims to advance the understandings of variance structure in deep generative models. While these insights can enable a range of beneficial applications, they also introduce potential risks, particularly related to the misuse of image and video generation technologies. We acknowledge these concerns and emphasize the importance of responsible deployment and appropriate safeguards.
Our research does not involve human subjects or the use of personal data. All experiments are conducted on widely used, publicly available benchmarks. We are committed to transparency, reproducibility, and adherence to responsible AI research practices.

{
\bibliography{main}
\bibliographystyle{icml2026}
}

%%%%%%%%%%%%%%%%%%%%%%%%%%%%%%%%%%%%%%%%%%%%%%%%%%%%%%%%%%%%%%%%%%%%%%%%%%%%%%%
%%%%%%%%%%%%%%%%%%%%%%%%%%%%%%%%%%%%%%%%%%%%%%%%%%%%%%%%%%%%%%%%%%%%%%%%%%%%%%%
% APPENDIX
%%%%%%%%%%%%%%%%%%%%%%%%%%%%%%%%%%%%%%%%%%%%%%%%%%%%%%%%%%%%%%%%%%%%%%%%%%%%%%%
%%%%%%%%%%%%%%%%%%%%%%%%%%%%%%%%%%%%%%%%%%%%%%%%%%%%%%%%%%%%%%%%%%%%%%%%%%%%%%%
\newpage
\appendix
\onecolumn
\appendix
\appendixtrue  

\section{Preliminaries}
\label{appendix:preliminaries}
The probability flow ordinary differential equation (PF-ODE) for flow matching is defined as follows,
\begin{align}
\label{eq:pfode}
    \dd \vx_t = \vv_t(\vx_t)\,\dd t
\end{align}
induces marginal distributions that match that of \Eqref{eq:x_t} for all time $t\in[0,1]$.  

In addition, there exists a reverse stochastic differential equation (SDE) whose marginal $p_t(x)$ coincides with that of the PF-ODE in \Eqref{eq:pfode}, but with an added diffusion term~\citep{ma2024sit}:
\begin{align}
\label{eq:reverse sde}
    \dd \vx_t = \vv_t(\vx_t)\,\dd t-\tfrac{1}{2}w_t\vs_t(\vx_t)\,\dd t+\sqrt{w_t}\,\dd \overline{\rvw}_t,
\end{align}
where $\overline{\rvw}_t$ is a standard Wiener process in backward time, $\sqrt{w_t}$ is the diffusion coefficient, and the score $\vs_t(\vx_t)=\nabla_{\vx_t}\log p_t(\vx_t)$ can be re-expressed using the velocity field~\citep{ma2024sit}:
\begin{align}
\label{eq:estimated score}
    % \vs_t(\vx_t) = \sigma_t^{-1}\cdot\frac{\alpha_t\vv_t(\vx_t)-\alpha_t'\vx_t}{\alpha_t'\sigma_t-\alpha_t\sigma_t'}.
    \vs_t(\vx_t) = \sigma_t^{-1} (\alpha_t\vv_t(\vx_t)-\alpha_t'\vx_t) / (\alpha_t'\sigma_t-\alpha_t\sigma_t').
\end{align}

In diffusion models~\citep{sohl2015deep, ho2020denoising,song2020score}, the forward process can be modeled as an It\^{o} SDE:
\begin{align}
\label{eq:diffusion forward}
    \dd \vx = \vf(\vx, t)\,\dd t+g(t)\,\dd\rvw,
\end{align}
where $\rvw$ is the standard Wiener process, $\vf(\cdot,t):\R^d\rightarrow\R^d$ is vector-valued function called drift coefficient of $x_t$ and $g(\cdot):\R\rightarrow\R$ is a scalar function known as the diffusion coefficient of $\vx_t$.  It gradually transforms the data distribution $q$ to a known prior as time goes from $t=0$ to $1$. 
With stochastic process defined in \Eqref{eq:diffusion forward}, Variance-Preserving (VP) diffusion model~\citep{ho2020denoising,song2020score,karras2022edm} implicitly define both $\alpha_t$ and $\sigma_t$ in \Eqref{eq:x_t} with an equilibrium distribution as prior. VP diffusion commonly chooses $\alpha_t=\cos(\frac{\pi}{2}t),\sigma_t=\sin(\frac{\pi}{2}t)$.

The diffusion model is trained to estimate the score of the marginal distribution at time $t$, $\nabla_{\vx_t}p_t(\vx_t)$, via a neural network. Specifically, the training objective is a weighted sum of the denoising score matching (DSM)~\citep{vincent2011connection}: 
\begin{align}
    \label{eq:dsm}
    \min_\vtheta \E_{t,q(\vx_0),p_t(\vx_t|\vx_0)}\lambda_t\norm{\vs_\vtheta(\vx_t,t)-\vs_t(\vx_t\mid \vx_0)}^2,
\end{align}
where $\lambda_t$ is a positive weighting function, $\vs_\vtheta(\cdot,\cdot):[0,1]\times\sR^d\rightarrow \sR^d$ is a time-dependent vector field parametrized as neural network with parameters $\vtheta$, the conditional probability path $p_t(\vx_t\mid\vx_0)=\gN(\vx_t\mid\alpha_t \vx_0,\sigma_t^2\mI)$, and the conditional score function $\vs_t(\vx_t\mid \vx_0)=\nabla_{\vx_t}\log p_t(\vx_t\mid\vx_0)$. The \Eqref{eq:dsm} shares a very similar form with CFM target in \Eqref{eq:cfm}. Also, similar to \Eqref{eq:minimizer}, the objective in \eqref{eq:dsm} admits a closed-form minimizer~\citep{song2020score,xu2023stabletargetfieldreduced}:
\begin{align}
    \label{eq:minimizer of dsm}
    \vs^*_\vtheta(\vx_t,t)=\E_{p_t(\vx_0\mid\vx_t)}\left[\vs_t(\vx_t\mid \vx_0)\right]
    =\vs_t(\vx_t)
\end{align}
Here, the marginal probability path $p_t(\vx_t)$ is a mixture of conditional probability paths $p_t(\vx_t\mid \vx_0)$ that vary with data points $\vx_0$, that is,
\begin{align}
    \label{eq:marginal distribution}
    p_t(\vx_t)=\int p_t(\vx_t\mid \vx_0)q(\vx_0)\,\dd \vx_0.
\end{align}

\section{More Discussion on Related Works}
\label{appendix:more related works}
\textbf{Representation Alignment and Training Acceleration.} 
A growing body of work shows that shaping intermediate representations can substantially accelerate the training of diffusion and flow-based generative models. 
REPA~\citep{yu2024representation} introduces an auxiliary objective that aligns hidden states of diffusion transformers with features from pretrained visual encoders, yielding significant gains under limited training budgets. 
Several follow-up methods explore alternative alignment strategies and regularization mechanisms. 
Dispersive Loss~\citep{wang2025diffusedisperseimagegeneration} removes the need for external teachers by encouraging internal feature dispersion, providing a lightweight, encoder-free form of representation regularization. 
HASTE~\citep{wang2025repaworksdoesntearlystopped} addresses over-regularization by restricting alignment to early training stages and disabling it later for improved stability. 
REG~\citep{wu2025representation} departs from explicit alignment altogether, instead entangling noisy latents with compact semantic tokens throughout the denoising trajectory. 
REPA-E~\citep{leng2025repa} further extends REPA by jointly fine-tuning both the VAE and diffusion model, substantially improving performance at the cost of increased training complexity. 
More recently, iREPA~\citep{singh2025matters} demonstrates that the effectiveness of representation alignment is driven primarily by spatial structure rather than high-level semantics, enhancing REPA through spatially aware projections and normalization. 
Our variance-aware perspective is orthogonal to these approaches: rather than modifying the form or source of alignment, we identify when semantic supervision is well-defined along the generative trajectory. As a result, VA-REPA can be naturally combined with existing alignment methods to further improve training efficiency and stability.

\textbf{Variance Reduction for Diffusion and Flow Models.}
Several works have explored variance reduction for diffusion models through importance sampling over timesteps, primarily aiming to reduce the variance of the diffusion ELBO and empirically improving training efficiency~\citep{huang2021variational,song2021maximum}. 
Alternative approaches leverage control variates instead of importance sampling to stabilize training objectives~\citep{wang2020wasserstein,jeha2024variance}. 
More closely related to our work, recent studies employ self-normalized importance sampling (SNIS) estimators~\citep{hesterberg1995weighted} to reduce the variance of score estimation~\citep{xu2023stabletargetfieldreduced,niedoba2024nearest}. 
These methods achieve variance reduction by aggregating multiple conditional scores, but typically introduce bias due to self-normalization.
In contrast, StableVM focuses on a different source of variance that arises in flow-matching objectives, namely the variability induced by individual reference pairs $(x_0,\varepsilon)$ during target construction.
We reduce this variance by aggregating reference samples into a composite conditional formulation, which enables variance reduction while preserving unbiasedness.
As a result, StableVM complements existing diffusion-based variance reduction techniques and is particularly effective in settings where variance originates from reference-level stochasticity rather than timestep sampling.

\section{Comparison with Stable Target Field} 
\label{appendix:comparison with stf}
The standard training objective for diffusion models~\citep{ho2020denoising,song2020score,karras2022elucidating} is based on \textit{denoising score matching} (DSM)~\citep{vincent2011connection}, also suffers from high variance.  

To address this issue, \citet{xu2023stabletargetfieldreduced} proposed the \textit{Stable Target Field} (STF), which stabilizes training by leveraging a reference batch $\gB=\{\vx_0^i\}_{i=1}^n \sim q(\vx_0)$. The STF objective is defined as  
\begin{align}
\label{eq:stf}
    \Ls_{\text{STF}}(\vtheta,t)
    = \E_{\{\vx_0^i\}_{i=1}^n\sim q(\vx_0),\, p_t(\vx_t\mid \vx_0^1)}
    \norm{\vv_\vtheta(\vx_t,t) -
    \sum_{k=1}^n \frac{p_t(\vx_t\mid\vx_0^k)}{\sum_{j=1}^n p_t(\vx_t\mid\vx_0^j)} \,\vs_t(\vx_t\mid \vx_0^k)}^2.
\end{align}  
Unlike DSM ($n=1$), STF forms a weighted average of scores over the reference batch, with weights determined by the conditional likelihoods $p_t(\vx_t\mid \vx_0^k)$. This reduces the covariance of the target by a factor of $n$, thereby lowering variance. While STF introduces bias, the minimizer of $\Ls_{\text{STF}}$ is given by  
\begin{align}
\label{eq:stf minimizer}
    \vv^*(\vx_t,t)
    = \E_{p(\vx_0\mid \vx_t),\, \{\vx_0^i\}_{i=2}^n \sim q(\vx_0)}
    \sum_{k=1}^n \frac{p_t(\vx_t\mid \vx_0^k)}{\sum_{j=1}^n p_t(\vx_t\mid \vx_0^j)} \,\vs_t(\vx_t\mid \vx_0^k),
\end{align}  
which deviates from the true score $\vs_t(\vx_t)$. However, as $n \to \infty$, this bias vanishes and the weighted estimator converges to the true score.  

Our approach differs from STF in three important ways:  
\begin{enumerate}
    \item \textbf{General framework.} We extend the variance analysis and variance-reduction strategy to the flow matching as well as stochastic interpolant framework, which generalizes beyond VP diffusion and exhibits a distinct variance structure.  
    \item \textbf{Unbiased objective.} Instead of relying on a finite-sample weighted average, we propose a mixture of conditional probabilities that eliminates bias while still achieving variance reduction.  
    \item \textbf{Class-conditional extension.} While STF does not naturally extend to class-conditional settings, we design a tailored algorithm that maintains variance reduction under classifier-free guidance, improving both convergence and training efficiency.  
\end{enumerate}

\begin{table}[h]
\centering
\caption{\textbf{Unconditional CIFAR-10 generation.}
Performance comparison between CFM, STF, and StableVM.
STF instantiated directly from Eq.~\ref{eq:stf} performs poorly, whereas StableVM achieves faster convergence and better sample quality via unbiased variance reduction.
All results use $n=2048$ and 50k generated samples.}
\label{tab:cifar10 exps}
\begin{tabular}{lcccccc}
\toprule
\textbf{Iter} & \textbf{Model}  & \textbf{FID}$\downarrow$ & \textbf{IS}$\uparrow$ & \textbf{sFID}$\downarrow$ & \textbf{Prec}.$\uparrow$ & \textbf{Rec.}$\uparrow$ \\
\midrule
\multirow{4}{*}{30k}
&CFM & 10.76 & 7.97 & {4.55} & 0.58 & 0.57 \\
&STF (Eq.~\ref{eq:stf}) & 38.37 & 6.02 & 9.94 & 0.52 & 0.45 \\
&STF (original impl.) & 9.91 & 8.01 & \textbf{4.51} & 0.58 & \textbf{0.58} \\
&StableVM & \textbf{9.31} & \textbf{8.02} & 4.62 & \textbf{0.59} & {0.57} \\
\midrule
\multirow{4}{*}{50k}
&CFM & 7.50 & 8.30 & {4.25} & 0.60 & 0.59 \\
&STF (Eq.~\ref{eq:stf}) & 29.18 & 6.59 & 7.65 & 0.52 & 0.50 \\
&STF (original impl.) & 7.07 & 8.24 & \textbf{4.22} & 0.60 & 0.59 \\
&StableVM & \textbf{6.87} & \textbf{8.38} & 4.34 & \textbf{0.61} & \textbf{0.59} \\
\midrule
\multirow{4}{*}{200k}
&CFM & 3.58 & \textbf{9.06} & \textbf{3.94} & 0.65 & {0.61} \\
&STF (Eq.~\ref{eq:stf}) & 13.50 & 7.69 & 4.88 & 0.56 & 0.57 \\
&STF (original impl.) & 3.93 & 8.92 & 4.06 & 0.65 & \textbf{0.61} \\
&StableVM & \textbf{3.56} & 9.05 & 3.98 & \textbf{0.65} & 0.60 \\
\bottomrule
\end{tabular}
\end{table}

To further elucidate these differences, we evaluate unconditional generation on CIFAR-10~\citep{krizhevsky2009learning}. For a fair comparison, both STF and StableVM use the same reference batch size $n=2048$.  All metrics are computed over 50K generated samples. 

In this experiment, the STF baseline strictly follows Eq.~\ref{eq:stf}: the reference batch $\gB=\{\vx_0^i\}_{i=1}^n$ is sampled from the data distribution, and the noisy input $\vx_t$ is generated by applying the forward process to the single reference sample $\vx_0^1$. While this matches the theoretical formulation in the original paper, it differs from the implementation used by \citet{xu2023stabletargetfieldreduced}, which instead applies noise to the first $M$ samples in the reference batch and treats them as training inputs. This modification yields a near-unbiased estimator, since $\vx_t$ is no longer conditioned on a single reference point but effectively drawn from a composite distribution over multiple samples. This behavior closely resembles the composite conditional distribution
$p_t^{\text{GMM}}\!\left(\vx_t \mid \{\vx_0^i\}_{i=1}^n\right)$ introduced in Sec.~\ref{subsec:stable velocity matching}. By contrast, StableVM explicitly samples $\vx_t$ from a Gaussian mixture model constructed over the reference batch, ensuring unbiased targets while reducing the variance.

\begin{figure}[t!]
    \centering
    \includegraphics[width=\linewidth]{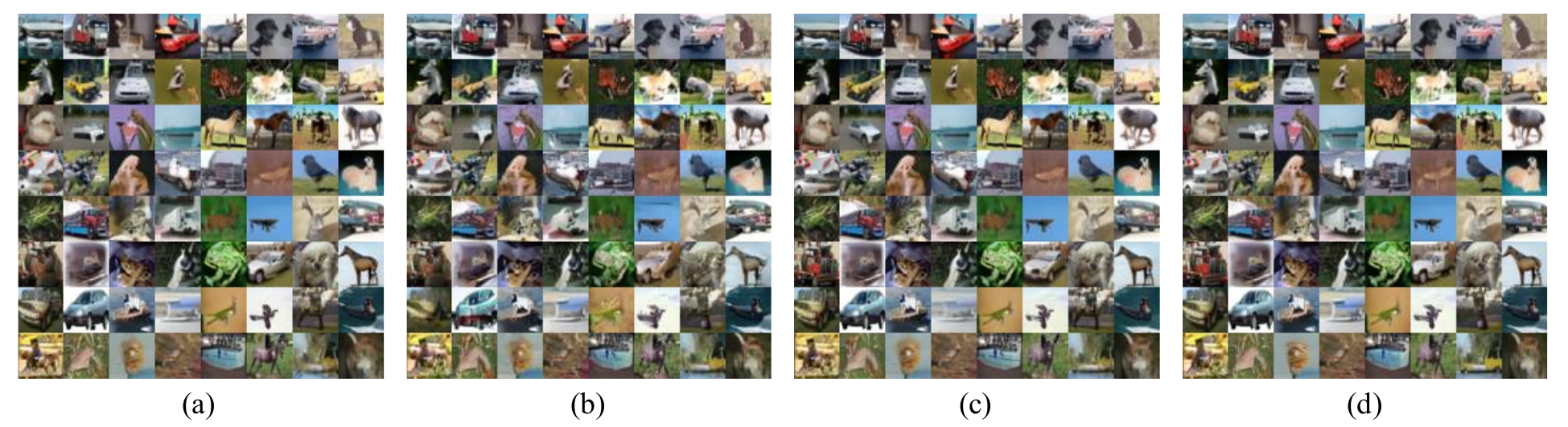}
    \caption{\textbf{Qualitative comparison of generated samples on CIFAR-10.} Samples generated by the checkpoint at 50k iterations (a) CFM, (b) STF instantiated directly from Eq.~\ref{eq:stf}, (c) the original STF implementation, and (d) our StableVM. StableVM produces sharper and more coherent samples, consistent with its improved convergence and variance reduction.} \label{fig:cifar10_comparison}
\end{figure}

As shown in Tab.~\ref{tab:cifar10 exps} and Fig.~\ref{fig:cifar10_comparison}, STF instantiated directly from Eq.~\ref{eq:stf} underperforms even standard CFM, whereas StableVM consistently accelerates convergence and improves sample quality. While the original STF implementation performs substantially better than its direct instantiation from Eq.~\ref{eq:stf}, yet still lags behind StableVM, this comparison highlights a key distinction: STF’s empirical gains arise from an implicit deviation from its theoretical objective, whereas StableVM is explicitly designed to be unbiased, variance-reduced, and readily extensible to flow-matching settings.

\newpage
\section{Algorithms}
For clarity of presentation, we provide the detailed training procedures of StableVM in this appendix. The main text focuses on the variance-driven formulation and theoretical properties, while Algorithm~\ref{alg:stable_vm} summarizes the core StableVM training loop. We further extend StableVM to the class-conditional setting with classifier-free guidance, with the complete procedure given in Algorithm~\ref{alg:stable_vm_mb_cfg}.

StableVM introduces additional computation and memory overhead that scales linearly with the bank capacity, i.e. $\gO(K)$
.  Specifically, the self-normalized target requires computing a weighted average over the reference batch for each class, and in class-conditioned generation, the memory bank stores latent representations per class. For example, in ImageNet generation, storing 256 latents for each of the 1,000 classes in fp16 precision requires approximately 2 GB of additional memory. The associated computation incurs only minor latency, which is negligible compared to the model’s forward and backward passes.

\begin{algorithm}[htb]
  \caption{Stable Velocity Matching}
  \label{alg:stable_vm}
\begin{algorithmic}[1]
  \REQUIRE Training iteration $T$, initial model $\vv_\vtheta$, dataset $\gD$, learning rate $\eta$
  \FOR{$iter = 1 \dots T$}
    \STATE Sample a batch $\{\vx_0^i\}_{i=0}^n$ from $\gD$
    \STATE Uniformly sample time $t \sim q_t(t)$ from $[0, 1]$
    \STATE Sample perturbed batch $\{\vx_t^j\}_{j=1}^{M}$ from
    \[
      p_t^{\text{GMM}}(\vx_t^j\mid \{\vx_0^i\}_{i=0}^n)
      = \sum_{i=0}^K\frac{1}{n}\,p_t(\vx_t^j\mid \vx_0^i)
    \]
    \STATE Calculate stable vector field for all $\vx^j_t$:
    \[
      \hat{\vv}_{\text{StableVM}}(\vx_t^j; \{\vx_0^i\}_{i=0}^n) := \frac{\sum_{k=1}^n p_t(\vx_t \mid \vx_0^k)\vv_t(\vx_t \mid \vx_0^k)}{\sum_{j=1}^n p_t(\vx_t \mid \vx_0^j)}
    \]
    \STATE Calculate loss:
    \[
      \gL(\vtheta)
      =
      \frac{1}{M} \sum_{j=1}^{M}
      \lambda(t)\,
      \|\vv_\vtheta(\vx^j_t, t) - \hat{\vv}_{\text{StableVM}}(\vx_t^j; \{\vx_0^i\}_{i=0}^n)\|^2
    \]
    \STATE Update model:
    $\vtheta \leftarrow \vtheta - \eta \nabla \mathcal{L}(\vtheta)$
  \ENDFOR
  \STATE \textbf{Return} $\vv_\vtheta$
\end{algorithmic}
\end{algorithm}

\begin{algorithm}[htb]
  \caption{Stable Velocity Matching with Classifier-free Guidance}
  \label{alg:stable_vm_mb_cfg}
  \small
\begin{algorithmic}
  \STATE \textbf{Input:} training iterations $T$, model $\vv_\vtheta$, dataset $\gD$, learning rate $\eta$, batch size $B$, number of classes $C$, per-class bank capacity $K$, CFG dropout probability $p_{\mathrm{cfg}}$
  \STATE Initialize memory bank $\mathcal{M}=\{\mathcal{M}_c\}_{c=0}^{C}$,
  where $c{=}0,\dots,C{-}1$ denote class-conditional banks and $c{=}C$ is the unconditional bank
  \FOR{$c=0,\dots,C$}
    \STATE $\mathcal{M}_c \leftarrow$ prefilled FIFO queue of capacity $K$
  \ENDFOR

  \FOR{$iter = 1 \dots T$}
    \STATE Sample times $\{t_i\}_{i=1}^B \sim q_t([0,1])$
    \STATE Uniformly sample labels $\{y^i\}_{i=1}^B$ from $\{0,\dots,C{-}1\}$
    \STATE Set $y^i \leftarrow C$ with probability $p_{\mathrm{cfg}}$

    \FOR{$i = 1 \dots B$}
      \STATE Sample perturbed sample $\vx_{t_i}^i$ from
      \[
        p_{\mathrm{GMM}}(\vx_{t_i}^i \mid \mathcal{M}_{y^i})
        =
        \frac{1}{|\mathcal{M}_{y^i}|}
        \sum_{\vx_0^{\mathrm{ref}} \in \mathcal{M}_{y^i}}
        p_t(\vx_{t_i}^i \mid \vx_0^{\mathrm{ref}})
      \]
      \STATE Compute stable field
      \[
        \vv_{\mathcal{M}_{y^i}}(\vx_{t_i}^i)
        =
        \sum_{\vx_0^{\mathrm{ref}} \in \mathcal{M}_{y^i}}
        \frac{p_t(\vx_{t_i}^i \mid \vx_0^{\mathrm{ref}})}
             {\sum_{\vy_0^{\mathrm{ref}} \in \mathcal{M}_{y^i}}
              p_t(\vx_{t_i}^i \mid \vy_0^{\mathrm{ref}})}
        \, \vv_t(\vx_{t_i}^i \mid \vx_0^{\mathrm{ref}})
      \]
    \ENDFOR

    \STATE Compute loss
    \[
      \mathcal{L}(\vtheta)
      =
      \frac{1}{B} \sum_{i=1}^{B}
      \lambda(t_i)
      \big\|
      \vv_{\vtheta}(\vx_{t_i}^i, t_i, y^i)
      -
      \vv_{\mathcal{M}_{y^i}}(\vx_{t_i}^i)
      \big\|^2
    \]
    \STATE Update parameters
    $\vtheta \leftarrow \vtheta - \eta \nabla_{\vtheta} \mathcal{L}(\vtheta)$

    \STATE Sample $\mathcal{B}=\{(\vx_0^i, y_i)\}_{i=1}^{B}$ from $\gD$
    \FOR{$i = 1 \dots B$}
      \STATE Push $\vx_0^i$ into $\mathcal{M}_{y_i}$ (evict oldest if full)
      \STATE Push $\vx_0^i$ into $\mathcal{M}_{C}$ (unconditional bank)
    \ENDFOR
  \ENDFOR

  \STATE \textbf{Output:} trained velocity field $\vv_\vtheta$
\end{algorithmic}
\end{algorithm}

\newpage
\section{Proofs}

\subsection{$p_t^{\text{GMM}}$ Posterior}
\label{proof:posterior}

\begin{restatable}{proposition}{posterior}
\label{propn:posterior}
The posterior $p_t^{\text{GMM}}\lrp{\lrc{\vx_0^i}_{i=1}^n\mid\vx_t} = \frac1n\sum_{i=1}^{n}\lrp{p_t\lrp{\vx_0^i\mid\vx_t}\prod_{j\neq i}q(\vx_0^j)}$.
\end{restatable}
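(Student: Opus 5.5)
This follows from Bayes' rule applied to the joint model that generates the reference batch and the noisy point. The generative model is: draw $\{\vx_0^i\}_{i=1}^n$ i.i.d.\ from $q$, then draw $\vx_t \sim p_t^{\text{GMM}}(\cdot\mid\{\vx_0^i\})=\frac1n\sum_i p_t(\vx_t\mid\vx_0^i)$. The joint density is therefore
\[
p\lrp{\{\vx_0^i\},\vx_t}=\prod_{j=1}^n q(\vx_0^j)\cdot\frac1n\sum_{i=1}^n p_t(\vx_t\mid\vx_0^i).
\]

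\textbf{Step 1: distribute the product.} Distributing the product over the sum, the $i$-th term becomes $\frac1n\, p_t(\vx_t\mid\vx_0^i)q(\vx_0^i)\prod_{j\neq i}q(\vx_0^j)$.

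\textbf{Step 2: compute the marginal of $\vx_t$.} Integrate out all references. In the $i$-th term, each factor $q(\vx_0^j)$ with $j\neq i$ integrates to one. The remaining integral $\int p_t(\vx_t\mid\vx_0^i)q(\vx_0^i)\,\dd\vx_0^i$ equals $p_t(\vx_t)$ by \Eqref{eq:marginal distribution}. Averaging over $i$ gives marginal exactly $p_t(\vx_t)$. This is the sense in which the GMM path preserves the original marginal.

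\textbf{Step 3: divide and identify the posterior.} Dividing the joint by $p_t(\vx_t)$ and applying Bayes' rule for the original model, $p_t(\vx_t\mid\vx_0^i)q(\vx_0^i)/p_t(\vx_t)=p_t(\vx_0^i\mid\vx_t)$, term by term. This yields the claimed mixture $\frac1n\sum_i p_t(\vx_0^i\mid\vx_t)\prod_{j\neq i}q(\vx_0^j)$.

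There is no serious obstacle. The only points needing care are Fubini/Tonelli to justify the term-wise integration (everything is nonnegative) and the assumption $p_t(\vx_t)>0$. The latter holds for $t>0$ since $\sigma_t>0$ and the Gaussian kernel is positive. At $t=0$ the statement is understood in the degenerate/limiting sense.
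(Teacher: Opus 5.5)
Your proof is correct and follows the paper's own Bayes-rule computation: write the joint as $\prod_j q(\vx_0^j)\cdot\frac1n\sum_i p_t(\vx_t\mid\vx_0^i)$, divide by $p_t(\vx_t)$, and rewrite each term using $p_t(\vx_t\mid\vx_0^i)q(\vx_0^i)/p_t(\vx_t)=p_t(\vx_0^i\mid\vx_t)$. Your Step 2 explicitly checks that the $\vx_t$-marginal of the GMM model is exactly $p_t(\vx_t)$; this is a worthwhile addition, because the paper uses that fact without comment when it writes $p_t(\vx_t)$ in the denominator.
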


\begin{proof}

This is a simple application of the Bayes rule.
Note that we have 
\begin{align}
    p_t^{\text{GMM}}\lrp{\lrc{\vx_0^i}_{i=1}^{n}\mid\vx_t} &= \frac{p_t^{\text{GMM}}\lrp{x_t\mid \lrc{\vx_0^i}_{i=1}^{n}} \cdot \prod_{i=1}^{n} q(\vx_0^i)}{p_t(\vx_t)} \nonumber \\ 
    &= \frac{1}{p_t(\vx_t)} \lrp{\sum_{i=1}^{n}\frac{1}{n}\cdot p_t(\vx_t\mid \vx_0^i)} \lrp{\prod_{i=1}^{n} q(\vx_0^i)} \label{eq:posterior-other-form}\\ 
    &= \frac1n \sum_{i=1}^{n} \lrp{\frac{p_t(\vx_t\mid\vx_0^i)}{p_t(\vx_t)} \cdot \prod_{j=1}^{n} q(\vx_0^j)} \nonumber \\ 
    &= \frac1n \sum_{i=1}^{n} \lrp{p_t(\vx_0^i\mid\vx_t) \prod_{j\neq i} q(\vx_0^j)}, \nonumber
\end{align}
as desired.
\end{proof}

\vspace*{-0.25cm}

\subsection{Proof of Unbiasedness (Theorem \ref{thm:unbiased convergence})}
\label{proof:unbiased target}

\thmunbiased*
    
\begin{proof}

\begin{itemize}[leftmargin=0.55cm]

\item[(a)]

Using \eqref{eq:posterior-other-form}, we obtain
\begin{align*}
    &\quad\E_{\lrc{\vx_0^i}_{i=1}^{n} \sim p_t^{\text{GMM}}(\cdot \mid \vx_t)}\lrb{\sum_{k=1}^n \frac{p_t(\vx_t \mid \vx_0^k)\vv_t(\vx_t \mid \vx_0^k)}{\sum_{j=1}^n p_t(\vx_t \mid \vx_0^j)}} \\ 
    &= \int \frac{1}{n\cdot p_t(\vx_t)} \lrp{\sum_{i=1}^{n}p_t(\vx_t\mid \vx_0^i)} \lrp{\prod_{i=1}^{n} q(\vx_0^i)} \cdot \sum_{k=1}^n \frac{p_t(\vx_t \mid \vx_0^k)\vv_t(\vx_t \mid \vx_0^k)}{\sum_{j=1}^n p_t(\vx_t \mid \vx_0^j)} \,\dd\vx_0^{1:n} \\ 
    &= \frac{1}{n\cdot p_t(\vx_t)} \int \lrp{\prod_{i=1}^{n} q(\vx_0^i)} \lrp{\sum_{k=1}^n p_t(\vx_t \mid \vx_0^k)\vv_t(\vx_t \mid \vx_0^k)} \,\dd\vx_0^{1:n} \\ 
    &= \frac{1}{n\cdot p_t(\vx_t)} \sum_{k=1}^{n} \int \lrp{\prod_{i=1}^{n} q(\vx_0^i)} p_t(\vx_t \mid \vx_0^k)\vv_t(\vx_t \mid \vx_0^k) \,\dd\vx_0^{1:n} \\ 
    &= \frac{1}{n\cdot p_t(\vx_t)} \sum_{k=1}^{n} \int \lrp{\prod_{i\neq k}q(\vx_0^i)} p_t(\vx_t, \vx_0^k)\vv_t(\vx_t \mid \vx_0^k) \,\dd\vx_0^{1:n} \\ 
    &= \frac{1}{n\cdot p_t(\vx_t)} \sum_{k=1}^{n} \lrp{\lrp{\prod_{i\neq k} \int q(\vx_0^i)\,\dd\vx_0^i} \cdot \int p_t(\vx_t, \vx_0^k)\vv_t(\vx_t \mid \vx_0^k) \,\dd\vx_0^k} \\ 
    &= \frac1n \sum_{k=1}^{n} \int p_t(\vx_0^k\mid\vx_t)\vv_t(\vx_t \mid \vx_0^k) \,\dd\vx_0^k = \vv_t(\vx_t),
\end{align*}
as desired.

\item[(b)]

Recall the StableVM objective \eqref{eq:stable_vm_loss}
\begin{align*}
    &\quad\Ls_{\text{StableVM}}(\vtheta, t) \\ 
    &= \E_{\vx_t\sim p_t, \, \lrc{\vx_0^i}_{i=1}^{n} \sim p_t^{\text{GMM}}(\cdot \mid \vx_t)} \lrb{\norm{\vv_\vtheta(\vx_t, t) - \sum_{k=1}^n \frac{p_t(\vx_t \mid \vx_0^k)\vv_t(\vx_t \mid \vx_0^k)}{\sum_{j=1}^n p_t(\vx_t \mid \vx_0^j)} }^2} \\ 
    &= \E_{\vx_t\sim p_t} \lrb{\int p_t^{\text{GMM}}\lrp{\lrc{\vx_0^i}_{i=1}^{n}\mid\vx_t} \norm{\vv_\vtheta(\vx_t, t) - \sum_{k=1}^n \frac{p_t(\vx_t \mid \vx_0^k)\vv_t(\vx_t \mid \vx_0^k)}{\sum_{j=1}^n p_t(\vx_t \mid \vx_0^j)} }^2 \,\dd\vx_0^{1:n}}.
\end{align*}
For each $\vx_t$, let 
\[\mathcal{L}_{\vx_t}(\vv):=\int p_t^{\text{GMM}}\lrp{\lrc{\vx_0^i}_{i=1}^{n}\mid\vx_t} \norm{\vv - \sum_{k=1}^n \frac{p_t(\vx_t \mid \vx_0^k)\vv_t(\vx_t \mid \vx_0^k)}{\sum_{j=1}^n p_t(\vx_t \mid \vx_0^j)} }^2 \,\dd\vx_0^{1:n}.\]
It suffices to show that for each $\vx_t$, setting $\vv$ to $\vv_t(\vx_t)$ above minimizes $\mathcal{L}_{\vx_t}$.
Note that $\mathcal{L}_{\vx_t}$ is differentiable and strictly convex in $\vv$, so the minimizer $\vv^*$ must satisfy $\nabla \mathcal{L}_{\vx_t} (\vv^*)=0$.
It follows that 
\begin{align*}
    0 &= 2\int p_t^{\text{GMM}}\lrp{\lrc{\vx_0^i}_{i=1}^{n}\mid\vx_t} \lrp{\vv^* - \sum_{k=1}^n \frac{p_t(\vx_t \mid \vx_0^k)\vv_t(\vx_t \mid \vx_0^k)}{\sum_{j=1}^n p_t(\vx_t \mid \vx_0^j)} } \,\dd\vx_0^{1:n} \\ 
    &= 2 \int p_t^{\text{GMM}}\lrp{\lrc{\vx_0^i}_{i=1}^{n}\mid\vx_t} \vv^* \,\dd\vx_0^{1:n} \\ 
    &\quad - 2 \int p_t^{\text{GMM}}\lrp{\lrc{\vx_0^i}_{i=1}^{n}\mid\vx_t}\sum_{k=1}^n \frac{p_t(\vx_t \mid \vx_0^k)\vv_t(\vx_t \mid \vx_0^k)}{\sum_{j=1}^n p_t(\vx_t \mid \vx_0^j)} \,\dd\vx_0^{1:n} \\ 
    &= 2\vv^* - 2\vv_t(\vx_t),
\end{align*}
where we have used part (a) in the last step.
Therefore, $\vv_t(\vx_t)$ is the unique minimizer of $\mathcal{L}_{\vx_t}$.
This finishes the proof.

\end{itemize}

\end{proof}

\subsection{Proofs of the Variance Bounds (Theorem \ref{thm:stablevm variance weak} and Theorem \ref{thm:stablevm variance})}
\label{proof:low variance}

In this section, we prove the variance reduction bound of our StableVM target as in \eqref{eq:stablevm-variance}.
We first recall that the StableVM target is the following estimator
\begin{equation}
    \widehat{\vv}_t := \sum_{k=1}^n \frac{p_t(\vx_t \mid \vx_0^k)\,\vv_t(\vx_t \mid \vx_0^k)}{\sum_{j=1}^n p_t(\vx_t \mid \vx_0^j)} \in \R^d,
\end{equation}
where $\vx_t\sim p_t$ and $\lrc{\vx_0^i}_{i=1}^{n}\sim p_t^{\text{GMM}}(\cdot\mid\vx_t)$.

We first prove the weaker version of the variance bound.

% \begin{restatable}{theorem}{thmvarianceweak}
% \label{thm:stablevm variance weak}
% Fix $t\in[0,1]$. We always have $\gV_{\text{StableVM}}(t)\leq \gV_{\text{CFM}}(t)$.
% \end{restatable}

\thmvarianceweak*

\begin{proof}

We first note that for fixed $\{\vx_0^i\}_{i=1}^{n}$ and $\vx_t$, we have by Jensen's inequality that 
\[\norm{\vv_t(\vx_t) - \sum_{k=1}^n \frac{p_t(\vx_t \mid \vx_0^k)\vv_t(\vx_t \mid \vx_0^k)}{\sum_{j=1}^n p_t(\vx_t \mid \vx_0^j)} }^2 \leq \sum_{k=1}^{n}\frac{p_t(\vx_t \mid \vx_0^k)}{\sum_{j=1}^n p_t(\vx_t \mid \vx_0^j)} \norm{\vv_t(\vx_t) - \vv_t(\vx_t\mid\vx_0^k)}^2,\]
where equality holds if and only if $\vv_t(\vx_t\mid\vx_0^1)=\cdots=\vv_t(\vx_t\mid\vx_0^n)$ \citep{cvetkovski2012inequalities}, which happens if and only if $\vx_0^1=\cdots=\vx_0^n$ for affine conditional velocity field.

We have 
\begin{align*}
  &\quad \gV_{\text{StableVM}}(t) \\  
  &= \E_{\{\vx_0^i\}_{i=1}^{n}\sim q^n} \E_{\vx_t\sim p_t^{\text{GMM}}(\cdot\mid \{\vx_0^i\}_{i=1}^{n})} \norm{\vv_t(\vx_t) - \sum_{k=1}^n \frac{p_t(\vx_t \mid \vx_0^k)\vv_t(\vx_t \mid \vx_0^k)}{\sum_{j=1}^n p_t(\vx_t \mid \vx_0^j)} }^2 \\ 
  &= \E_{\{\vx_0^i\}_{i=1}^{n}\sim q^n} \lrb{\int \frac1n \lrp{\sum_{i=1}^{n} p_t(\vx_t\mid\vx_0^i)} \norm{\vv_t(\vx_t) - \sum_{k=1}^n \frac{p_t(\vx_t \mid \vx_0^k)\vv_t(\vx_t \mid \vx_0^k)}{\sum_{j=1}^n p_t(\vx_t \mid \vx_0^j)} }^2\,\dd\vx_t}.
\end{align*}
Now let $A$ be the event that $\{\vx_0^1=\cdots=\vx_0^n\}$ and let $A^c$ be the complement of the event $A$.
We have 
\begin{align*}
  &\quad \gV_{\text{StableVM}}(t) \\
  &= \E_{\{\vx_0^i\}_{i=1}^{n}\sim q^n} \lrb{\ind_A \cdot \int \frac1n \lrp{\sum_{i=1}^{n} p_t(\vx_t\mid\vx_0^i)} \norm{\vv_t(\vx_t) - \sum_{k=1}^n \frac{p_t(\vx_t \mid \vx_0^k)\vv_t(\vx_t \mid \vx_0^k)}{\sum_{j=1}^n p_t(\vx_t \mid \vx_0^j)} }^2\,\dd\vx_t} \\ 
  &\quad +  \E_{\{\vx_0^i\}_{i=1}^{n}\sim q^n} \lrb{\ind_{A^c} \cdot \int \frac1n \lrp{\sum_{i=1}^{n} p_t(\vx_t\mid\vx_0^i)} \norm{\vv_t(\vx_t) - \sum_{k=1}^n \frac{p_t(\vx_t \mid \vx_0^k)\vv_t(\vx_t \mid \vx_0^k)}{\sum_{j=1}^n p_t(\vx_t \mid \vx_0^j)} }^2\,\dd\vx_t}.
\end{align*}
But by assumption, $A$ has probability 0, so $\ind_A=0$ a.s., and the first expectation above becomes 0.
Therefore, by the equality condition of Jensen's inequality, we have a strict inequality
\begin{align*}
  &\quad \gV_{\text{StableVM}}(t) \\
  &= \E_{\{\vx_0^i\}_{i=1}^{n}\sim q^n} \lrb{\ind_{A^c} \cdot \int \frac1n \lrp{\sum_{i=1}^{n} p_t(\vx_t\mid\vx_0^i)} \norm{\vv_t(\vx_t) - \sum_{k=1}^n \frac{p_t(\vx_t \mid \vx_0^k)\vv_t(\vx_t \mid \vx_0^k)}{\sum_{j=1}^n p_t(\vx_t \mid \vx_0^j)} }^2\,\dd\vx_t} \\ 
  &< \E_{\{\vx_0^i\}_{i=1}^{n}\sim q^n} \lrb{ \ind_{A^c} \cdot \int \frac1n \lrp{\sum_{i=1}^{n} p_t(\vx_t\mid\vx_0^i)} \sum_{k=1}^{n}\frac{p_t(\vx_t \mid \vx_0^k)}{\sum_{j=1}^n p_t(\vx_t \mid \vx_0^j)} \norm{\vv_t(\vx_t) - \vv_t(\vx_t\mid\vx_0^k)}^2 \,\dd\vx_t}.
\end{align*}
We then have 
\begin{align*}
  &\quad \E_{\{\vx_0^i\}_{i=1}^{n}\sim q^n} \lrb{ \ind_{A^c} \cdot \int \frac1n \lrp{\sum_{i=1}^{n} p_t(\vx_t\mid\vx_0^i)} \sum_{k=1}^{n}\frac{p_t(\vx_t \mid \vx_0^k)}{\sum_{j=1}^n p_t(\vx_t \mid \vx_0^j)} \norm{\vv_t(\vx_t) - \vv_t(\vx_t\mid\vx_0^k)}^2 \,\dd\vx_t} \\ 
  &\leq \E_{\{\vx_0^i\}_{i=1}^{n}\sim q^n} \lrb{\int \frac1n \lrp{\sum_{i=1}^{n} p_t(\vx_t\mid\vx_0^i)} \sum_{k=1}^{n}\frac{p_t(\vx_t \mid \vx_0^k)}{\sum_{j=1}^n p_t(\vx_t \mid \vx_0^j)} \norm{\vv_t(\vx_t) - \vv_t(\vx_t\mid\vx_0^k)}^2 \,\dd\vx_t} \\
  &= \E_{\{\vx_0^i\}_{i=1}^{n}\sim q^n} \lrb{ \int \frac1n \sum_{k=1}^{n} p_t(\vx_t \mid \vx_0^k) \norm{\vv_t(\vx_t) - \vv_t(\vx_t\mid\vx_0^k)}^2 \,\dd\vx_t} \\ 
  &= \frac1n \cdot \E_{\{\vx_0^i\}_{i=1}^{n}\sim q^n} \lrb{\sum_{k=1}^{n} \int p_t(\vx_t \mid \vx_0^k) \norm{\vv_t(\vx_t) - \vv_t(\vx_t\mid\vx_0^k)}^2 \,\dd\vx_t} \\ 
  &= \frac1n \cdot \E_{\{\vx_0^i\}_{i=1}^{n}\sim q^n} \lrb{\sum_{k=1}^{n}\E_{\vx_t\sim p_t(\cdot\mid\vx_0^k)} \norm{\vv_t(\vx_t) - \vv_t(\vx_t\mid\vx_0^k)}^2} \\ 
  &= \frac1n \sum_{k=1}^{n} \gV_{\text{CFM}}(t) = \gV_{\text{CFM}}(t).
\end{align*}
This proves that $\gV_{\text{StableVM}}(t) < \gV_{\text{CFM}}(t)$ as desired.
\end{proof}

Before proceeding to proving the stronger bound, we first give an alternative interpretation of the posterior $p_t^{\text{GMM}}\lrp{\lrc{\vx_0^i}_{i=1}^{n}\mid\vx_t}$.
Consider the following procedure conditioned on $\vx_t$:
\begin{enumerate}
    \item Sample a latent index $I$ uniformly from $\{1,\ldots,n\}$.
    \item Sample $\vx_0^I\sim p_t(\cdot\mid\vx_t)$ and $\vx_0^j\sim q$ for all $j\neq I$.
\end{enumerate}
We claim the following:
\begin{lemma}
\label{lemma:latent-I}
\begin{itemize}[leftmargin=0.55cm]
    \item[(a)] The joint distribution of $\lrc{\vx_0^i}_{i=1}^{n}$ sampled from the above procedure conditioned on $\vx_t$ is exactly $p_t^{\text{GMM}}\lrp{\lrc{\vx_0^i}_{i=1}^{n}\mid\vx_t}$.
    \item[(b)] $\lrc{\vx_0^i}_{i=1}^{n}$ are independent conditioned on $\vx_t$ and $I$. 
\end{itemize}
\end{lemma}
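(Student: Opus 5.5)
For part (a), I will compute the joint density of $\lrc{\vx_0^i}_{i=1}^{n}$ produced by the two-stage procedure by marginalizing over the latent index $I$, and then match it against Proposition~\ref{propn:posterior}. Conditioned on $\vx_t$ and on the event $I=i$, step 2 samples the coordinates independently. So the conditional density of $\vx_0^{1:n}$ given $(\vx_t, I=i)$ is the product
\[
p_t(\vx_0^i\mid\vx_t)\prod_{j\neq i}q(\vx_0^j).
\]
Since $I$ is uniform on $\{1,\ldots,n\}$, the law of total probability gives the marginal density
\[
\sum_{i=1}^{n}\frac1n\, p_t(\vx_0^i\mid\vx_t)\prod_{j\neq i}q(\vx_0^j).
\]
This is exactly the expression for $p_t^{\text{GMM}}\lrp{\lrc{\vx_0^i}_{i=1}^{n}\mid\vx_t}$ given in Proposition~\ref{propn:posterior}. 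Equivalently, one can use the unnormalized form \eqref{eq:posterior-other-form} together with $p_t(\vx_0^i\mid\vx_t)=p_t(\vx_t\mid\vx_0^i)q(\vx_0^i)/p_t(\vx_t)$. The only care needed is to state that both objects are densities with respect to Lebesgue measure on $(\R^d)^n$ (or the appropriate product base measure), so that equality of densities yields equality of distributions.

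For part (b), independence is built into the construction. Given $\vx_t$ and $I=i$, the conditional density computed above factorizes as a product of one-coordinate densities: $p_t(\cdot\mid\vx_t)$ for coordinate $i$ and $q$ for each $j\neq i$. A joint density that factorizes this way is exactly conditional independence of $\vx_0^1,\ldots,\vx_0^n$ given $(\vx_t,I)$. I will also note that the marginals are identified here: $\vx_0^I\sim p_t(\cdot\mid\vx_t)$ and $\vx_0^j\sim q$ for $j\neq I$. This is the form that will be used later.

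The argument has no real obstacle. The only subtle point is formal: the procedure introduces an auxiliary variable $I$ that is not part of the original model. Part (a) therefore requires marginalizing $I$ out, rather than directly comparing conditionals, and I would state this explicitly.
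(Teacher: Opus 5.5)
Your proposal is correct and follows essentially the paper's own argument. The paper likewise writes down the procedure's joint law as $\frac1n\sum_{i=1}^{n} p_t(\vx_0^i\mid\vx_t)\prod_{j\neq i}q(\vx_0^j)$, matches it to Proposition~\ref{propn:posterior}, and treats (b) as immediate from the construction; your explicit marginalization over $I$ and remark on the base measure are just a more careful write-up of the same steps.
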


\begin{proof}

\begin{itemize}[leftmargin=0.55cm]
    
\item[(a)]

Note that the joint distribution of $\lrc{\vx_0^i}_{i=1}^{n}$ sampled from the above procedure is 
\[\frac1n \sum_{i=1}^{n} \lrp{p_t(\vx_0^i\mid\vx_t) \prod_{j\neq i} q(\vx_0^j)},\]
which matches the joint distribution given by the posterior of $p_t^{\text{GMM}}$.

\item[(b)] This is clear by construction.
\end{itemize}
\end{proof}

The following lemmas compute the variance of the StableVM estimator step-by-step.

\begin{lemma}\label{lem:total-variance-symmetry}
We have
\[\Cov\lrp{\widehat{\vv}_t \mid \vx_t} = \E\lrb{\Cov\lrp{\widehat{\vv}_t\mid \vx_t, I}\mid\vx_t},\]
where the expectation on the right-hand side is over the random variable $I$.
\end{lemma}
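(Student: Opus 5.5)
The plan is to apply the law of total covariance conditionally on $\vx_t$, with $I$ the latent index of Lemma~\ref{lemma:latent-I}:
\[
\Cov\lrp{\widehat{\vv}_t\mid\vx_t}=\E\lrb{\Cov\lrp{\widehat{\vv}_t\mid\vx_t,I}\mid\vx_t}+\Cov\lrp{\E\lrb{\widehat{\vv}_t\mid\vx_t,I}\mid\vx_t}.
\]
The claim then reduces to showing that the second term, the covariance of the conditional means, vanishes. For that it suffices to show that $\E\lrb{\widehat{\vv}_t\mid\vx_t,I=i}$ does not depend on $i$.

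The key step is a symmetry argument. By Lemma~\ref{lemma:latent-I}, conditioned on $\vx_t$ and $I=i$, the references are independent, with $\vx_0^i\sim p_t(\cdot\mid\vx_t)$ and $\vx_0^j\sim q$ for $j\neq i$. Let $\pi$ be the transposition swapping indices $i$ and $1$. Then the law of $(\vx_0^{\pi(1)},\ldots,\vx_0^{\pi(n)})$ under $I=i$ equals the law of $(\vx_0^1,\ldots,\vx_0^n)$ under $I=1$. The estimator $\widehat{\vv}_t$, viewed as a function of the reference tuple for fixed $\vx_t$, is invariant under permutations of its arguments, because both the numerator $\sum_k p_t(\vx_t\mid\vx_0^k)\vv_t(\vx_t\mid\vx_0^k)$ and the denominator $\sum_j p_t(\vx_t\mid\vx_0^j)$ are symmetric sums. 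Therefore
\[
\E\lrb{\widehat{\vv}_t\mid\vx_t,I=i}=\E\lrb{\widehat{\vv}_t\mid\vx_t,I=1}\quad\text{for every } i.
\]
So $\E\lrb{\widehat{\vv}_t\mid\vx_t,I}$ is a deterministic function of $\vx_t$ alone, and its conditional covariance given $\vx_t$ is zero. By the tower property and Theorem~\ref{thm:unbiased convergence}(a), this common value is in fact $\vv_t(\vx_t)$.

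The only real care needed is integrability: the conditional covariances must be well defined, which requires $\E\lrb{\|\widehat{\vv}_t\|^2\mid\vx_t}<\infty$. I would either assume this, as is implicit in the definition of $\gV_{\text{StableVM}}$, or note that $\widehat{\vv}_t$ is a convex combination of the $\vv_t(\vx_t\mid\vx_0^k)$. Each of these is affine in $\vx_0^k$, so square integrability follows from $q$ and $p_t(\cdot\mid\vx_t)$ having second moments. Aside from this, the main (mild) obstacle is stating the exchangeability argument cleanly, namely checking that the transposition maps the conditional law under $I=i$ to the one under $I=1$. This follows directly from the product form in Lemma~\ref{lemma:latent-I}.
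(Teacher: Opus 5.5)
Your proposal is correct and follows essentially the same route as the paper: apply the law of total covariance, then use the transposition of indices $1$ and $i$ (a change of variables in the paper's version) to show that $\E[\widehat{\vv}_t\mid\vx_t,I=i]$ does not depend on $i$. Your explicit remarks on the permutation invariance of $\widehat{\vv}_t$ and on square integrability spell out details the paper leaves implicit.
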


\begin{proof}

By the law of total covariance, we have 
\[\Cov\lrp{\widehat{\vv}_t \mid \vx_t} = \E_I\lrb{\Cov\bp{\widehat{\vv}_t\mid \vx_t, I}\mid\vx_t} + \Cov_I\lrp{\E\bb{\widehat{\vv}_t\mid \vx_t, I}\mid\vx_t}.\]
We claim that $\E\bb{\widehat{\vv}_t\mid \vx_t, I}$ does not depend on $I$, so the second term above would be $0$.

Let $i\in[n]$. We have 
\[\E\bb{\widehat{\vv}_t\mid \vx_t, I=i} = \int p_t(\vx_0^i\mid\vx_t)\prod_{j\neq i}q(\vx_0^j)\cdot \sum_{k=1}^n \frac{p_t(\vx_t \mid \vx_0^k)\,\vv_t(\vx_t \mid \vx_0^k)}{\sum_{j=1}^n p_t(\vx_t \mid \vx_0^j)}\,\dd\vx_0^{1:n}.\]
Let $\pi:\R^{nd}\to\R^{nd}$ that swaps $\vx_0^1$ and $\vx_0^i$, i.e. $\pi(\vx_0^1,\ldots,\vx_0^i,\ldots,\vx_0^n)=(\vx_0^i,\ldots,\vx_0^1,\ldots,\vx_0^n)$.
Note that we have $|\det D\pi|=1$ since the Jacobian $D\pi$ is a permutation matrix.
Then by the change of variable formula, we have 
\begin{align*}
    \E\bb{\widehat{\vv}_t\mid \vx_t, I=i} &= \int p_t(\vx_0^1\mid\vx_t)\prod_{j\neq 1}q(\vx_0^j)\cdot \sum_{k=1}^n \frac{p_t(\vx_t \mid \vx_0^k)\,\vv_t(\vx_t \mid \vx_0^k)}{\sum_{j=1}^n p_t(\vx_t \mid \vx_0^j)}\,\dd\vx_0^{1:n} \\ 
    &= \E\bb{\widehat{\vv}_t\mid \vx_t, I=1}.
\end{align*}
This shows that $\Cov_I\lrp{\E\bb{\widehat{\vv}_t\mid \vx_t, I}\mid\vx_t}=0$, which finishes the proof.
\end{proof}

For the next two lemmas, we fix $\ell\in[d]$ and only consider the $\ell^{\mathrm{th}}$ component $\widehat{\vv}_t^{(\ell)}$ for simplicity.
Define
\[V_{n-1}:=\sum_{k\neq i}p_t(\vx_t\mid\vx_0^k)\vv_t^{(\ell)}(\vx_t\mid\vx_0^k),\quad P_{n-1}:=\sum_{k\neq i}p_t(\vx_t\mid\vx_0^k),\]
\[v_i := p_t(\vx_t\mid\vx_0^i)\vv_t^{(\ell)}(\vx_t\mid\vx_0^i),\quad p_i:=p_t(\vx_t\mid\vx_0^i).\]
Then we have 
\[\widehat{\vv}_t^{(\ell)} = \frac{V_{n-1}+v_i}{P_{n-1}+p_i}.\]
Note that conditioned on $\vx_t$ and $I=i$, the random variables $\vx_0^k$ for $k\neq i$ are all independent and follow the data distribution $q$ by Lemma \ref{lemma:latent-I}.

Let us also first compute some expectations that will be useful later.
We have for $k\neq i$,
\begin{equation}\label{eq:exp-p}
    \E\lrb{p_t(\vx_t\mid\vx_0^k) \mid \vx_t, I=i} = \int p_t(\vx_t\mid\vx_0^k) q(\vx_0^k)\,\dd\vx_0^k = \int p_t(\vx_t,\vx_0^k)\,\dd\vx_0^k = p_t(\vx_t),
\end{equation}
\begin{align}
    \E \lrb{p_t(\vx_t\mid\vx_0^k) \vv_t(\vx_t \mid \vx_0^k) \mid \vx_t, I=i} &= \int p_t(\vx_t\mid \vx_0^k) \vv_t(\vx_t \mid \vx_0^k) q(\vx_0^k) \,\dd\vx_0^k \nonumber \\ 
    &= \int p_t(\vx_t,\vx_0^k) \vv_t(\vx_t \mid \vx_0^k) \,\dd\vx_0^k \nonumber\\ 
    &= p_t(\vx_t) \int \frac{p_t(\vx_t,\vx_0^k)}{p_t(\vx_t)} \, \vv_t(\vx_t \mid \vx_0^k) \,\dd\vx_0^k \nonumber\\ 
    &= p_t(\vx_t) \int p_t(\vx_0^k\mid\vx_t) \vv_t(\vx_t \mid \vx_0^k) \,\dd\vx_0^k \nonumber\\ 
    &= p_t(\vx_t) \vv_t(\vx_t), \label{eq:exp-v}
\end{align}
\begin{equation}\label{eq:exp-f}
    \E_{\vx_0\sim p_t(\cdot\mid\vx_t)}\lrb{\vv_t(\vx_t\mid\vx_0) \mid \vx_t} = \int p_t(\vx_0\mid\vx_t)\,\vv_t(\vx_t\mid\vx_0)\,\dd\vx_0 = \vv_t(\vx_t),
\end{equation}
and
\begin{equation}\label{eq:exp-w}
    \E_{\vx_0\sim q}\lrb{\frac{p_t(\vx_0\mid\vx_t)}{q(\vx_0)}}=\int p_t(\vx_0\mid\vx_t)\,\dd\vx_0=1.
\end{equation}
We then continue with the lemmas.

\begin{lemma}\label{lem:si-bound}
As $n\to\infty$, we have
\begin{align*}
    &\quad\sqrt{n-1}\,\lrp{\frac{V_{n-1}}{P_{n-1}}-\vv_t^{(\ell)}(\vx_t)}  \\ 
    &\dto \mathcal{N}\lrp{0, \, \E_{\vx_0\sim p_t(\cdot\mid\vx_t)}\lrb{\frac{p_t(\vx_t\mid\vx_0)}{p_t(\vx_t)}\lrp{\vv_t^{(\ell)}(\vx_t\mid\vx_0)-\vv_t^{(\ell)}(\vx_t)}^2}},
\end{align*}
where the random variables $V_{n-1}$ and $P_{n-1}$ are conditioned on $\vx_t$ and $I=i$.
\end{lemma}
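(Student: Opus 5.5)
Conditioned on $\vx_t$ and $I=i$, Lemma~\ref{lemma:latent-I}(b) makes the pairs $(p_t(\vx_t\mid\vx_0^k),\,p_t(\vx_t\mid\vx_0^k)\vv_t^{(\ell)}(\vx_t\mid\vx_0^k))$ for $k\neq i$ i.i.d.\ with $\vx_0^k\sim q$. So $V_{n-1}/P_{n-1}$ is the ratio of two sample sums, i.e.\ a standard self-normalized importance sampling estimator with proposal $q$ and target $p_t(\cdot\mid\vx_t)$. The plan is the usual multivariate CLT followed by the delta method.

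First, put $Y_k:=p_t(\vx_t\mid\vx_0^k)$ and $Z_k:=Y_k\vv_t^{(\ell)}(\vx_t\mid\vx_0^k)$. By \eqref{eq:exp-p} and \eqref{eq:exp-v}, $\E[Y_k]=p_t(\vx_t)$ and $\E[Z_k]=p_t(\vx_t)\vv_t^{(\ell)}(\vx_t)$. Assume the needed second moments are finite; this is implicit in the statement, since the limiting variance must be finite. Then the multivariate CLT gives
\[
\sqrt{n-1}\Big(\tfrac{V_{n-1}}{n-1}-\E Z,\ \tfrac{P_{n-1}}{n-1}-\E Y\Big)\dto\gN(0,\Sigma),
\]
where $\Sigma$ is the covariance of $(Z,Y)$ under $q$.

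Second, rather than differentiating $g(a,b)=a/b$ in full generality, write
\[
\frac{V_{n-1}}{P_{n-1}}-\vv_t^{(\ell)}(\vx_t)=\frac{\frac{1}{n-1}\sum_{k\neq i}Y_k\big(\vv_t^{(\ell)}(\vx_t\mid\vx_0^k)-\vv_t^{(\ell)}(\vx_t)\big)}{P_{n-1}/(n-1)}.
\]
The numerator averages i.i.d.\ terms $W_k:=Y_k(\vv_t^{(\ell)}(\vx_t\mid\vx_0^k)-\vv_t^{(\ell)}(\vx_t))$, which have mean zero by \eqref{eq:exp-p} and \eqref{eq:exp-v}. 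By the law of large numbers the denominator converges in probability to $p_t(\vx_t)>0$. The scalar CLT gives $\sqrt{n-1}\cdot\frac{1}{n-1}\sum W_k\dto\gN(0,\E W^2)$, and Slutsky's lemma then yields the limit $\gN(0,\E[W^2]/p_t(\vx_t)^2)$. This avoids the delta method altogether.

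Third, identify the variance. Using $q(\vx_0)p_t(\vx_t\mid\vx_0)=p_t(\vx_t)p_t(\vx_0\mid\vx_t)$,
\[
\frac{\E_q[W^2]}{p_t(\vx_t)^2}=\int \frac{q(\vx_0)p_t(\vx_t\mid\vx_0)^2}{p_t(\vx_t)^2}(\cdots)^2\,\dd\vx_0=\E_{\vx_0\sim p_t(\cdot\mid\vx_t)}\Big[\tfrac{p_t(\vx_t\mid\vx_0)}{p_t(\vx_t)}\big(\vv_t^{(\ell)}(\vx_t\mid\vx_0)-\vv_t^{(\ell)}(\vx_t)\big)^2\Big],
\]
which is exactly the stated variance.

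The main obstacle is justifying finiteness of $\E_q[W^2]$, which the CLT requires. The term $Y_k$ is bounded by $(2\pi\sigma_t^2)^{-d/2}$ for $t>0$, but the velocity factor requires a second moment of $\vx_0$ under $q$. I would state this as a standing integrability assumption, consistent with the uniform integrability hypothesis of Theorem~\ref{thm:stablevm variance}. The boundary cases $t\in\{0,1\}$ would be treated separately or excluded.
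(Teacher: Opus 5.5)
Your proof is correct and follows the same overall route as the paper. Conditioned on $\vx_t$ and $I=i$, both treat $V_{n-1}/P_{n-1}$ as a self-normalized importance sampling estimator with proposal $q$ and target $p_t(\cdot\mid\vx_t)$. Both also convert the limiting variance using $q(\vx_0)p_t(\vx_t\mid\vx_0)=p_t(\vx_t)p_t(\vx_0\mid\vx_t)$. Your $\E_q[W^2]/p_t(\vx_t)^2$ is exactly the paper's $\E_q[w^2(f-\vv_t^{(\ell)}(\vx_t))^2]$, where $f=\vv_t^{(\ell)}(\vx_t\mid\vx_0)$ and $w=p_t(\vx_0\mid\vx_t)/q(\vx_0)=Y/p_t(\vx_t)$.

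You differ only in the central limit step. The paper cites the textbook delta method for ratio estimators. You instead use the exact identity $V_{n-1}/P_{n-1}-\vv_t^{(\ell)}(\vx_t)=\sum_{k\neq i}W_k/P_{n-1}$, which is the delta-method linearization with no remainder term, and then apply a scalar CLT, the law of large numbers, and Slutsky. This is more elementary and self-contained, and it shows that only $\E_q[W^2]<\infty$ is needed. Your opening bivariate CLT is then never used and can be dropped.

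The integrability you flag as the main obstacle is in fact automatic whenever $\sigma_t>0$ and $\alpha_t\neq 0$. Since $\vv_t^{(\ell)}(\vx_t\mid\vx_0)$ is affine in $\vx_0$, you have $|W|\le C\,e^{-\|\vx_t-\alpha_t\vx_0\|^2/(2\sigma_t^2)}(1+\|\vx_0\|)$ for a constant $C$ not depending on $\vx_0$. The right-hand side is bounded in $\vx_0$, so no moment condition on $q$ is required. A second-moment assumption on $q$ matters only at an endpoint where $\alpha_t=0$.
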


\begin{proof}

Write 
\[f(\vx_0):=\vv_t^{(\ell)}(\vx_t\mid\vx_0),\quad w(\vx_0):=\frac{p_t(\vx_0\mid\vx_t)}{q(\vx_0)}.\]
Then we have
\begin{align}
    \frac{V_{n-1}}{P_{n-1}} &= \frac{\sum_{k\neq i} p_t(\vx_t\mid\vx_0^k) \,\vv_t^{(\ell)}(\vx_t\mid\vx_0^k)}{\sum_{k\neq i} p_t(\vx_t\mid\vx_0^k)} = \frac{\sum_{k\neq i} p_t(\vx_t\mid\vx_0^k) \,f(\vx_0^k)}{\sum_{k\neq i} p_t(\vx_t\mid\vx_0^k)} \nonumber \\ 
    &= \frac{\sum_{k\neq i} \frac{p_t(\vx_t\mid\vx_0^k)\,q(\vx_0^k)}{p_t(\vx_t)\,q(\vx_0^k)} \,f(\vx_0^k)}{\sum_{k\neq i} \frac{p_t(\vx_t\mid\vx_0^k)\,q(\vx_0^k)}{p_t(\vx_t)\,q(\vx_0^k)}} = \frac{\sum_{k\neq i} \frac{p_t(\vx_0^k\mid\vx_t)}{q(\vx_0^k)} \,f(\vx_0^k)}{\sum_{k\neq i} \frac{p_t(\vx_0^k\mid\vx_t)}{q(\vx_0^k)}} \nonumber \\ 
    &= \frac{\sum_{k\neq i} w(\vx_0^k) \,f(\vx_0^k)}{\sum_{k\neq i} w(\vx_0^k)}. \label{eq:si}
\end{align}
Recall from Lemma \ref{lemma:latent-I} that conditioned on $\vx_t$ and $I=i$, the $\vx_0^i$'s are all independent.
Therefore, \eqref{eq:si} is a self-normalized importance sampling estimator (Chapter 9 of \citet{owen2013monte}) with importance distribution $q(\vx_0)$, nominal distribution $p_t(\vx_0\mid\vx_t)$, and importance weight ratio $w(\vx_0)$.

Note that by \eqref{eq:exp-f} and \eqref{eq:exp-w}, we have 
\[\E_{\vx_0\sim p_t(\cdot\mid\vx_t)}[f(\vx_0)] = \vv_t^{(\ell)}(\vx_t), \quad \E_{\vx_0\sim q}[w(\vx_0)] = 1.\]
Following results using the delta method as in \citet{lehmann2023testing} (Theorem 11.2.14) and \citet{owen2013monte} (Eq. (9.8)), we have that 
\[\sqrt{n-1}\,\lrp{\frac{V_{n-1}}{P_{n-1}}-\vv_t^{(\ell)}(\vx_t)} \dto \mathcal{N}\lrp{0,\E_{\vx_0\sim q}\lrb{w(\vx_0)^2\lrp{f(\vx_0)-\vv_t^{(\ell)}(\vx_t)}^2}}.\]
Expanding the variance term above gives us 
\begin{align*}
    &\quad\E_{\vx_0\sim q}\lrb{w(\vx_0)^2\lrp{f(\vx_0)-\vv_t^{(\ell)}(\vx_t)}^2} \\ 
    &= \E_{\vx_0\sim q}\lrb{\frac{p_t(\vx_0\mid\vx_t)^2}{q(\vx_0)^2}\cdot \lrp{\vv_t^{(\ell)}(\vx_t\mid\vx_0)-\vv_t^{(\ell)}(\vx_t)}^2} \\ 
    &= \int \frac{p_t(\vx_0\mid\vx_t)^2}{q(\vx_0)}\cdot \lrp{\vv_t^{(\ell)}(\vx_t\mid\vx_0)-\vv_t^{(\ell)}(\vx_t)}^2 \,\dd\vx_0 \\ 
    &= \frac{1}{p_t(\vx_t)} \int p_t(\vx_0\mid\vx_t)\cdot\frac{p_t(\vx_0\mid\vx_t)\,p_t(\vx_t)}{q(\vx_0)} \lrp{\vv_t^{(\ell)}(\vx_t\mid\vx_0)-\vv_t^{(\ell)}(\vx_t)}^2 \,\dd\vx_0 \\ 
    &= \frac{1}{p_t(\vx_t)} \int p_t(\vx_0\mid\vx_t)\cdot p_t(\vx_t\mid\vx_0) \lrp{\vv_t^{(\ell)}(\vx_t\mid\vx_0)-\vv_t^{(\ell)}(\vx_t)}^2 \,\dd\vx_0 \\ 
    &= \frac{1}{p_t(\vx_t)}\cdot \E_{\vx_0\sim p_t(\cdot\mid\vx_t)}\lrb{p_t(\vx_t\mid\vx_0)\lrp{\vv_t^{(\ell)}(\vx_t\mid\vx_0)-\vv_t^{(\ell)}(\vx_t)}^2},
\end{align*}
as desired.
\end{proof}

\begin{lemma}\label{lem:var-given-I}
% Assume $\vv_t$ is bounded.
Assume $\bp{(n-1)\|\widehat{\vv}_t-\vv_t(\vx_t)\|^2}_{n=1}^{\infty}$ is uniformly integrable.
Then we have 
\[\Var\lrp{\widehat{\vv}_t^{(\ell)}\mid \vx_t, I=i} = \frac{1}{n-1}\,\E_{\vx_0\sim p_t(\cdot\mid\vx_t)}\lrb{\frac{p_t(\vx_t\mid\vx_0)}{p_t(\vx_t)}\lrp{\vv_t^{(\ell)}(\vx_t\mid\vx_0)-\vv_t^{(\ell)}(\vx_t)}^2} + o\lrp{\frac1n} \]
for large $n$.
\end{lemma}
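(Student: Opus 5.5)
Conditioned on $\vx_t$ and $I=i$, I would write the $\ell$-th coordinate as a perturbation of the leave-one-out ratio:
\[
\widehat{\vv}_t^{(\ell)}-\vv_t^{(\ell)}(\vx_t)=\frac{V_{n-1}}{P_{n-1}}-\vv_t^{(\ell)}(\vx_t)+R_n,\qquad R_n:=\frac{V_{n-1}+v_i}{P_{n-1}+p_i}-\frac{V_{n-1}}{P_{n-1}}=\frac{p_i\bigl(\vv_t^{(\ell)}(\vx_t\mid\vx_0^i)-V_{n-1}/P_{n-1}\bigr)}{P_{n-1}+p_i}.
\]
By \eqref{eq:exp-p} and the strong law, $P_{n-1}/(n-1)\to p_t(\vx_t)>0$ almost surely. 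Since $p_i$ and $\vv_t^{(\ell)}(\vx_t\mid\vx_0^i)$ do not depend on $n$ and $V_{n-1}/P_{n-1}$ converges, we get $R_n=O_p(1/n)$. Hence $\sqrt{n-1}\,R_n\pto 0$.

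Next I apply Slutsky's lemma to Lemma~\ref{lem:si-bound}. This gives
\[
\sqrt{n-1}\bigl(\widehat{\vv}_t^{(\ell)}-\vv_t^{(\ell)}(\vx_t)\bigr)\dto\mathcal{N}(0,\Sigma_\ell),
\]
where $\Sigma_\ell$ is exactly the variance appearing in Lemma~\ref{lem:si-bound}. The same coordinate argument gives $\sqrt{n-1}\,\bigl(\widehat{\vv}_t^{(\ell)}-\vv_t^{(\ell)}(\vx_t)\bigr)\dto \mathcal N(0,\Sigma_\ell)$ for every $\ell$, so the corresponding squares converge in distribution to $\Sigma_\ell Z^2$ with $Z$ standard normal.

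Convergence in distribution does not by itself give convergence of second moments. This is where the hypothesis is used. The squared coordinate is dominated by $(n-1)\|\widehat{\vv}_t-\vv_t(\vx_t)\|^2$, and that family is assumed uniformly integrable, so the squared coordinates are uniformly integrable too. Vitali's theorem (convergence in distribution plus uniform integrability implies convergence of moments) then yields
\[
(n-1)\,\E\bigl[(\widehat{\vv}_t^{(\ell)}-\vv_t^{(\ell)}(\vx_t))^2\mid\vx_t,I=i\bigr]\to\Sigma_\ell .
\]
Uniform integrability of the unsquared scaled errors follows as well, so $\sqrt{n-1}$ times the bias converges to the limit mean $0$. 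The squared bias is therefore $o(1/n)$. Writing the variance as the mean squared error minus the squared bias gives $\Var = \Sigma_\ell/(n-1)+o(1/n)$.

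The main obstacle is this step from distributional convergence to convergence of moments. The uniform integrability assumption is imposed on the full estimator $\widehat{\vv}_t$, so the limit argument must also be run on $\widehat{\vv}_t$ itself, not on $V_{n-1}/P_{n-1}$; that is why the Slutsky step comes first. I would also make sure that conditioning on $\vx_t$ and $I=i$ is compatible with the uniform integrability hypothesis, reading that hypothesis conditionally on $\vx_t$ and $I$.
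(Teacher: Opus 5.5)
Your proposal is correct and follows essentially the same route as the paper: split off the leave-one-out self-normalized ratio $V_{n-1}/P_{n-1}$, show the correction from the posterior-drawn sample is $o_p\bigl((n-1)^{-1/2}\bigr)$, combine Lemma~\ref{lem:si-bound} with Slutsky, and use uniform integrability to pass from the distributional limit to the variance. Two small differences. Your exact identity for $R_n$ is a cleaner substitute for the paper's mean-value Taylor expansion with the random $\lambda_{n-1}$, and it does not lean on boundedness of $\vv_t$ as the paper's write-up does. Your bias step is valid but could be skipped, since $\E\bigl[\widehat{\vv}_t\mid\vx_t,I=i\bigr]=\vv_t(\vx_t)$ holds exactly by the symmetry argument of Lemma~\ref{lem:total-variance-symmetry} together with Theorem~\ref{thm:unbiased convergence}(a).
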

\begin{proof}
Let $g(x,y)=x/y$ so that $\widehat{\vv}_t^{(\ell)}=g(V_{n-1}+v_i,P_{n-1}+p_i)$.
Performing a Taylor expansion of $g$ at the point $(V_{n-1},P_{n-1})$ gives us 
\begin{align*}
    \widehat{\vv}_t^{(\ell)} &= g(V_{n-1},P_{n-1}) + \nabla g(V_{n-1}+\lambda_{n-1} v_i,P_{n-1}+\lambda_{n-1} p_i) \m{v_i \\ p_i}  \\  
    &=\frac{V_{n-1}}{P_{n-1}} + \frac{v_i}{P_{n-1}+\lambda_{n-1} p_i} - \frac{V_{n-1}+\lambda_{n-1} v_i}{(P_{n-1}+\lambda_{n-1} p_i)^2}\,p_i 
\end{align*}
for some $[0,1]$-valued random variable $\lambda_{n-1}$.
It follows that 
\begin{align}
    &\quad \sqrt{n-1}\,\lrp{\widehat{\vv}_t^{(\ell)}-\vv_t^{(\ell)}(\vx_t)} \nonumber \\ 
    &= \sqrt{n-1}\,\lrp{\frac{V_{n-1}}{P_{n-1}}-\vv_t^{(\ell)}(\vx_t)} + \frac{\sqrt{n-1}\cdot v_i}{P_{n-1}+\lambda_{n-1} p_i} - \sqrt{n-1}\cdot\frac{V_{n-1}+\lambda_{n-1} v_i}{(P_{n-1}+\lambda_{n-1} p_i)^2}\cdot p_i \label{eq:v-taylor} 
\end{align}
For the second term in \eqref{eq:v-taylor}, we first note that by the law of large numbers and \eqref{eq:exp-p}, we have $\frac{P_{n-1}}{n-1} \pto p_t(\vx_t)$
% \[\frac{1}{n-1}\, P_{n-1} \pto p_t(\vx_t)\]
as $n\to\infty$.
Also note that $\frac{\lambda_{n-1}}{n-1}\to 0$ as $n\to\infty$ deterministically.
It follows that $\frac{1}{n-1}(P_{n-1}+\lambda_{n-1}p_i)\pto p_t(\vx_t)$.
We also have $\frac{v_i}{\sqrt{n-1}}\to 0$ deterministically since $\vv_t$ is bounded.
Hence, we have 
\[\frac{\sqrt{n-1}\cdot v_i}{P_{n-1}+\lambda_{n-1} p_i} = \frac{\frac{1}{\sqrt{n-1}}\,v_i}{\frac{1}{n-1}(P_{n-1}+\lambda_{n-1} p_i)} \pto 0\]
by Slutsky's theorem, so the second term of \eqref{eq:v-taylor} converges to $0$ in probability.

For the third term in \eqref{eq:v-taylor}, we have $\frac{1}{(n-1)^2}(P_{n-1}+\lambda_{n-1}p_i)^2\pto p_t(\vx_t)^2$ by Slutsky's theorem.
At the same time, by the law of large numbers and \eqref{eq:exp-v}, we have $\frac{V_{n-1}}{n-1} \pto p_t(\vx_t)\vv_t^{(\ell)}(\vx_t)$.
Since $\vv_t$ is bounded, we have $\frac{\lambda_{n-1}v_i}{n-1}\to 0$ deterministically.
We also have $\frac{p_i}{\sqrt{n-1}}\to 0$ deterministically.
Therefore, we get that
\begin{align*}
    \frac{1}{(n-1)^{3/2}} \, (V_{n-1}+\lambda_{n-1}v_i) \, p_i &= \frac{p_i}{\sqrt{n-1}}\cdot \lrp{\frac{V_{n-1}}{n-1} + \frac{\lambda_{n-1}v_i}{n-1}} \\ 
    &\pto 0 \cdot \lrp{p_t(\vx_t)\vv_t^{(\ell)}(\vx_t)+0} = 0,
\end{align*}
by Slutsky's theorem.
Then by Slutsky's theorem again, we have 
\[\sqrt{n-1}\cdot\frac{V_{n-1}+\lambda_{n-1} v_i}{(P_{n-1}+\lambda_{n-1} p_i)^2}\cdot p_i = \frac{\frac{1}{(n-1)^{3/2}}\,(V_{n-1}+\lambda_{n-1}v_i)\,p_i}{\frac{1}{(n-1)^2}\,(P_{n-1}+\lambda_{n-1} p_i)^2}\pto 0,\]
so the third term in \eqref{eq:v-taylor} also converges to $0$ in probability.

Therefore, combining Lemma \ref{lem:si-bound} and the results above using Slutsky's theorem, we conclude that 
\[\sqrt{n-1}\,\lrp{\widehat{\vv}_t^{(\ell)}-\vv_t^{(\ell)}(\vx_t)} \dto \mathcal{N}\lrp{0,\E_{\vx_0\sim p_t(\cdot\mid\vx_t)}\lrb{\frac{p_t(\vx_t\mid\vx_0)}{p_t(\vx_t)}\lrp{\vv_t^{(\ell)}(\vx_t\mid\vx_0)-\vv_t^{(\ell)}(\vx_t)}^2}}.\]
Then by the uniform integrability assumption, we have that $\widehat{\vv}_t^{(\ell)}$ has variance 
\[\frac{1}{n-1}\,\E_{\vx_0\sim p_t(\cdot\mid\vx_t)}\lrb{\frac{p_t(\vx_t\mid\vx_0)}{p_t(\vx_t)}\lrp{\vv_t^{(\ell)}(\vx_t\mid\vx_0)-\vv_t^{(\ell)}(\vx_t)}^2}+o\lrp{\frac1n},\]
as desired.
\end{proof}

We are now ready to state and prove the main theorem.

\thmvariance*

\begin{proof}

Recall from \eqref{eq:stablevm-variance} that we have
\begin{align*}
    \mathcal{V}_{\text{StableVM}}(t) &= \E\Bb{\Tr{\Cov \lrp{\widehat{\vv}_t \mid \vx_t}}} \\ 
    &= \E\lrb{\Tr \lrp{\E\Bb{\Cov \lrp{\widehat{\vv}_t \mid \vx_t, I}\mid \vx_t}} } \\ 
    &= \E\lrb{\E\Bb{\Tr\Cov\bp{\widehat{\vv}_t \mid \vx_t, I}\mid\vx_t}},
\end{align*}
where the second line follows from Lemma \ref{lem:total-variance-symmetry}, and the third line follows from the linearity of expectation.
Note that the inner expectation is over the random variable $I$, and the outer expectation is over the random variable $\vx_t$.
From Lemma \ref{lem:var-given-I}, we have 
\[\Var\lrp{\widehat{\vv}_t^{(\ell)}\mid \vx_t, I=i} = \frac{1}{n-1}\,\E_{\vx_0\sim p_t(\cdot\mid\vx_t)}\lrb{\frac{p_t(\vx_t\mid\vx_0)}{p_t(\vx_t)}\lrp{\vv_t^{(\ell)}(\vx_t\mid\vx_0)-\vv_t^{(\ell)}(\vx_t)}^2} + o\lrp{\frac1n} .\]
Then 
\begin{align*}
    \Tr\Cov\bp{\widehat{\vv}_t \mid \vx_t, I=i} &= \sum_{\ell=1}^{d}\Var\lrp{\widehat{\vv}_t^{(\ell)}\mid \vx_t, I=i} \\ 
    &= \frac{1}{n-1} \sum_{\ell=1}^{d} \E_{\vx_0\sim p_t(\cdot\mid\vx_t)}\lrb{\frac{p_t(\vx_t\mid\vx_0)}{p_t(\vx_t)}\lrp{\vv_t^{(\ell)}(\vx_t\mid\vx_0)-\vv_t^{(\ell)}(\vx_t)}^2} + o\lrp{\frac1n} \\ 
    &= \frac{1}{n-1} \, \E_{\vx_0\sim p_t(\cdot\mid\vx_t)}\lrb{\frac{p_t(\vx_t\mid\vx_0)}{p_t(\vx_t)}\norm{\vv_t(\vx_t\mid\vx_0)-\vv_t(\vx_t)}^2} + o\lrp{\frac1n}.
\end{align*}
Since $i$ does not appear in the last line above, we get 
\begin{align}\label{eq:stablevm-var-expression}
    \mathcal{V}_{\text{StableVM}}(t) &= \frac{1}{n-1}\,\E_{\vx_t\sim p_t}\lrb{\E_{\vx_0\sim p_t(\cdot\mid\vx_t)}\lrb{\frac{p_t(\vx_t\mid\vx_0)}{p_t(\vx_t)}\norm{\vv_t(\vx_t\mid\vx_0)-\vv_t(\vx_t)}^2}} + o\lrp{\frac1n}.
\end{align}
For simplicity, we write \[r_t(\vx_t,\vx_0):=\frac{p_t(\vx_t\mid\vx_0)}{p_t(\vx_t)} = \frac{p_t(\vx_0\mid\vx_t)}{q(\vx_0)},\quad \Delta_t(\vx_t,\vx_0) := \norm{\vv_t(\vx_t\mid\vx_0)-\vv_t(\vx_t)}^2.\]
Then \eqref{eq:stablevm-var-expression} becomes 
\begin{align*}
    \mathcal{V}_{\text{StableVM}}(t) &= \frac{1}{n-1} \, \left( \E_{\vx_t\sim p_t} \lrb{\E_{\vx_0\sim p_t(\cdot\mid\vx_t)} \lrb{\Delta_t(\vx_t,\vx_0)}}  \right. \\ 
    &\qquad \qquad \quad \left.  + \,  \E_{\vx_t\sim p_t} \lrb{\E_{\vx_0\sim p_t(\cdot\mid\vx_t)} \lrb{\bp{r_t(\vx_t,\vx_0)-1} \Delta_t(\vx_t,\vx_0)}}  \right) + o\lrp{\frac1n} \\ 
    &= \frac{1}{n-1} \, \lrp{ \mathcal{V}_{\text{CFM}}(t) + \E_{\vx_t\sim p_t, \vx_0\sim p_t(\cdot\mid\vx_t)} \lrb{\bp{r_t(\vx_t,\vx_0)-1} \Delta_t(\vx_t,\vx_0)} } + o\lrp{\frac1n} \\ 
    &= \frac{1}{n-1} \, \lrp{ \mathcal{V}_{\text{CFM}}(t) + \E_{\vx_0\sim q, \vx_t\sim p_t(\cdot\mid\vx_0)} \lrb{\bp{r_t(\vx_t,\vx_0)-1} \Delta_t(\vx_t,\vx_0)} } + o\lrp{\frac1n},
\end{align*}
as desired.
\end{proof}

\subsection{Simulating the Reverse SDE in Low-Variance Regime}
\label{proof:reverse sde}

\citet{ma2024sit} show that the reverse-time SDE (\Eqref{eq:reverse sde}) with score function $\vs_t(\vx) = \nabla_{\vx} \log p_t(\vx)$ and arbitrary diffusion strength $w_t \ge 0$ yields the correct marginal density $p_t(\vx)$ at each time $t$. Furthermore, as established in \citet{anderson1982reverse, ma2024sit}, if $\vx_{t} \sim p_{t}(\vx)$, then the reverse-time solution $\vx_\tau$ at any $\tau \in [0, t]$ is distributed according to the posterior:
\begin{align}
    \label{eq:reverse posterior full}
    p_\tau(\vx_\tau \mid \vx_{t}) = \E_{p_{t}(\vx_0 \mid \vx_{t})} \left[ p_\tau(\vx_\tau \mid \vx_0, \vx_{t}) \right] \approx p_{\tau}(\vx_\tau \mid \vx_0, \vx_{t}).
\end{align}

% DDIM sampler
\begin{proposition}
Let \( \vx_t \sim \mathcal{N}(\alpha_t \vx_0,\ \sigma_t^2 \mathbf{I}) \) and \( \vx_\tau \sim \mathcal{N}(\alpha_\tau \vx_0,\ \sigma_\tau^2 \mathbf{I}) \), where \( \tau < t \), and \( \vx_0 \sim p(\vx_0) \) is the clean data sample. For any fixed variance parameter \( \beta_t^2 \in (0, \sigma_\tau^2) \), define the posterior distribution as
\[
p_\tau^{\alpha_t}(\vx_\tau \mid \vx_t, \vx_0) = \mathcal{N}(k_t \vx_t + \lambda_t \vx_0,\ \beta_t^2 \mathbf{I}),
\]
then the coefficients
\[
k_t = \sqrt{ \frac{\sigma_\tau^2 - \beta_t^2}{\sigma_t^2} }, \quad
\lambda_t = \alpha_\tau - \alpha_t \cdot \sqrt{ \frac{\sigma_\tau^2 - \beta_t^2}{\sigma_t^2} }
\]
guarantee that the marginal of \( \vx_\tau \) is \( \mathcal{N}(\alpha_\tau \vx_0,\ \sigma_\tau^2 \mathbf{I}) \).
\end{proposition}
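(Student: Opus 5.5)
The plan is a direct Gaussian-moment matching in the DDIM style, conditioning throughout on the clean sample $\vx_0$. Given $\vx_0$, $\vx_t$ is Gaussian with mean $\alpha_t\vx_0$ and covariance $\sigma_t^2\mI$, and the proposed kernel $p_\tau^{\alpha_t}(\vx_\tau\mid\vx_t,\vx_0)$ is Gaussian with mean affine in $\vx_t$ and fixed covariance $\beta_t^2\mI$. Composing them therefore gives a linear-Gaussian model, so the marginal of $\vx_\tau$ given $\vx_0$ is again Gaussian. Consequently it suffices to check that its mean equals $\alpha_\tau\vx_0$ and its covariance equals $\sigma_\tau^2\mI$.

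Concretely, I will use the reparametrization $\vx_t=\alpha_t\vx_0+\sigma_t\bm{\varepsilon}$ and $\vx_\tau=k_t\vx_t+\lambda_t\vx_0+\beta_t\bm{\eta}$, with $\bm{\varepsilon},\bm{\eta}\sim\gN(0,\mI)$ independent of each other and of $\vx_0$. Substituting gives
\[
\vx_\tau=(k_t\alpha_t+\lambda_t)\vx_0+k_t\sigma_t\bm{\varepsilon}+\beta_t\bm{\eta}.
\]
The two noise terms are independent Gaussians, so their sum is $\gN\big(0,(k_t^2\sigma_t^2+\beta_t^2)\mI\big)$.

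Matching to $\gN(\alpha_\tau\vx_0,\sigma_\tau^2\mI)$ reduces to two scalar equations: $k_t\alpha_t+\lambda_t=\alpha_\tau$ and $k_t^2\sigma_t^2+\beta_t^2=\sigma_\tau^2$.
\begin{itemize}
\item The variance equation gives $k_t=\sqrt{(\sigma_\tau^2-\beta_t^2)/\sigma_t^2}$, taking the nonnegative root.
\item The mean equation then gives $\lambda_t=\alpha_\tau-\alpha_t k_t$.
\end{itemize}
These are exactly the stated coefficients, and substituting them back confirms both equations.

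The only step needing care is well-definedness, not any real obstacle. The hypothesis $\beta_t^2\in(0,\sigma_\tau^2)$ makes the radicand positive. We also need $\sigma_t>0$, which holds for $t>0$; this is implicit since $\tau<t$ and $t\ge\tau\ge0$ forces $t>0$, and $\sigma_t>0$ on $(0,1]$ for the schedules considered. Finally, the statement concerns the marginal conditional on $\vx_0$, which is the sense in which the proposition is used in the approximation \eqref{eq:reverse posterior full}. If the unconditional marginal is wanted, one integrates both sides against $p(\vx_0)$, and the matching claim follows immediately.
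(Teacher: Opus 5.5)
Your proposal is correct and matches the paper's proof step for step: you reparametrize $\vx_t=\alpha_t\vx_0+\sigma_t\bm{\varepsilon}$, substitute into $\vx_\tau=k_t\vx_t+\lambda_t\vx_0+\bm{\eta}$, and solve the matching conditions $k_t\alpha_t+\lambda_t=\alpha_\tau$ and $k_t^2\sigma_t^2+\beta_t^2=\sigma_\tau^2$. Your remarks on taking the nonnegative root, on needing $\sigma_t>0$, and on reading ``marginal'' as conditional on $\vx_0$ are harmless clarifications that the paper leaves implicit.
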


\begin{proof}
We begin by expressing \( \vx_t \) using the forward diffusion process:
\[
\vx_t = \alpha_t \vx_0 + \sigma_t \varepsilon, \quad \varepsilon \sim \mathcal{N}(0, \mathbf{I}).
\]
We define the reverse model as a Gaussian conditional:
\[
\vx_\tau = k_t \vx_t + \lambda_t \vx_0 + \eta, \quad \eta \sim \mathcal{N}(0, \beta_t^2 \mathbf{I}).
\]
Substituting \( \vx_t \) yields:
\[
\vx_\tau = k_t (\alpha_t \vx_0 + \sigma_t \varepsilon) + \lambda_t \vx_0 + \eta
= (k_t \alpha_t + \lambda_t) \vx_0 + k_t \sigma_t \varepsilon + \eta.
\]
Hence, the conditional distribution of \( \vx_\tau \) given \( \vx_0 \) is:
\[
\vx_\tau \mid \vx_0 \sim \mathcal{N}\left( (k_t \alpha_t + \lambda_t) \vx_0,\ (k_t^2 \sigma_t^2 + \beta_t^2) \mathbf{I} \right).
\]
To match the desired marginal \( \vx_\tau \sim \mathcal{N}(\alpha_\tau \vx_0,\ \sigma_\tau^2 \mathbf{I}) \), we require:
\begin{align*}
k_t \alpha_t + \lambda_t &= \alpha_\tau, \\
k_t^2 \sigma_t^2 + \beta_t^2 &= \sigma_\tau^2. \label{}
\end{align*}
Solving the second equation above for \( k_t \), we obtain:
\[
k_t = \sqrt{ \frac{\sigma_\tau^2 - \beta_t^2}{\sigma_t^2} }.
\]
Substituting into first equation, we get:
\[
\lambda_t = \alpha_\tau - \alpha_t \cdot \sqrt{ \frac{\sigma_\tau^2 - \beta_t^2}{\sigma_t^2} }.
\]
Thus, the choice of \( k_t \) and \( \lambda_t \) ensures that the conditional distribution of \( \vx_\tau \) is consistent with the marginal.
\end{proof}

Within this low variance area, we also have
\begin{align}
\vv_t(\vx_t)\approx \vv_t(\vx_t \mid \vx_0) = \frac{\sigma_t'}{\sigma_t} (\vx_t - \alpha_t \vx_0) + \alpha_t' \vx_0,
\end{align}
Thus, given the velocity field $\vv_t(\vx_t)$ and the current state $\vx_t$, the target $\vx_0$ can be extracted as:
\begin{align}
\vx_0 = \frac{\vv_t(\vx_t) - \frac{\sigma_t'}{\sigma_t} \vx_t}{\alpha_t' - \frac{\sigma_t'}{\sigma_t} \alpha_t} 
\end{align}
Plugging in this equation into the original expression, thus the posterior distribution with $\vx_0$ eliminated via $\vv_t(\vx_t)$, is given by:
\[
p_\tau(\vx_\tau \mid \vx_t, \vv_t(\vx_t)) = \mathcal{N} \left(
\bm{\mu}_{\tau \mid t},\ \beta_t^2 \mathbf{I}
\right)
\]
where the posterior mean is explicitly:
\[
\bm{\mu}_{\tau \mid t} =
\left(
\sqrt{ \frac{\sigma_\tau^2 - \beta_t^2}{\sigma_t^2} } 
- \left( \alpha_\tau - \alpha_t \sqrt{ \frac{\sigma_\tau^2 - \beta_t^2}{\sigma_t^2} } \right)
\cdot \frac{ \dfrac{\sigma_t'}{\sigma_t} }{ \alpha_t' - \dfrac{\sigma_t'}{\sigma_t} \alpha_t }
\right) \vx_t
+ \left( \alpha_\tau - \alpha_t \sqrt{ \frac{\sigma_\tau^2 - \beta_t^2}{\sigma_t^2} } \right)
\cdot \frac{ \vv_t(\vx_t) }{ \alpha_t' - \dfrac{\sigma_t'}{\sigma_t} \alpha_t }
\]

Assuming $\alpha_t = 1 - t$ and $\sigma_t = t$, the DDIM-style posterior becomes:
\[
p_\tau(\vx_\tau \mid \vx_t, \vv_t(\vx_t)) = \mathcal{N} \left(
\bm{\mu}_{\tau \mid t},\ \beta_t^2 \mathbf{I}
\right)
\]
with mean:
\[
\bm{\mu}_{\tau \mid t} =
\left(
\sqrt{ \frac{\tau^2 - \beta_t^2}{t^2} }
+ \left( (1 - \tau) - (1 - t)\sqrt{ \frac{\tau^2 - \beta_t^2}{t^2} } \right)
\right) \vx_t
- \left( (1 - \tau) - (1 - t)\sqrt{ \frac{\tau^2 - \beta_t^2}{t^2} } \right)t\vv_t(\vx_t)
\]

If we set $\beta_t=0$, we obtain the deterministic sampler:
\[
\vx_{\tau} = \vx_{t} + (\tau - t) \vv_{t}(\vx_{t})
\]

\subsection{Explicit PF-ODE Solution in Low-Variance Regime}
\label{proof:reverse ode}
In \textit{low-variance regime
}($0 \leq t \leq \xi$), the conditional velocity field simplifies as $\vv_t(\vx_t)\approx \vv_t(\vx_t \mid \vx_0)$. We can thus derive explicit solutions to the Probability Flow ODE (PF-ODE) under both the stochastic interpolant and VP diffusion frameworks.
We consider the \textbf{Probability Flow ODE (PF-ODE)} under the stochastic interpolant framework:
\begin{align}
\frac{\dd\vx_t}{\dd t} =\vv_t(\vx_t)\approx \vv_t(\vx_t \mid \vx_0) = \frac{\sigma_t'}{\sigma_t} (\vx_t - \alpha_t \vx_0) + \alpha_t' \vx_0,
\end{align}
where $\alpha_t$ and $\sigma_t$ define a stochastic interpolant, and $\vx_0$ is the data point to be matched.

\paragraph{Closed-form of the Target $\vx_0$}

Given the velocity field $\vv_t(\vx_t)$ and the current state $\vx_t$, the target $\vx_0$ can be extracted as:
\begin{align}
\vx_0 = \frac{\vv_t(\vx_t) - \frac{\sigma_t'}{\sigma_t} \vx_t}{\alpha_t' - \frac{\sigma_t'}{\sigma_t} \alpha_t} 
= \frac{\vv_t(\vx_t) - \frac{\sigma_t'}{\sigma_t} \vx_t}{C_t},
\end{align}
where we define the coefficient
\[
C_t := \alpha_t' - \frac{\sigma_t'}{\sigma_t} \alpha_t.
\]

\paragraph{Solving the PF-ODE from $t$ to $0\le \tau < t$}

We aim to integrate the PF-ODE backward in time from a known terminal state $\vx_{t}$. The PF-ODE can be rewritten as:
\begin{align}
\frac{\dd\vx_t}{\dd t} + a(t) \vx_t = b(t), \quad \text{where} \quad a(t) = -\frac{\sigma_t'}{\sigma_t}, \quad b(t) = C_t \vx_0.
\end{align}

This is a linear nonhomogeneous first-order ODE. The integrating factor is:
\[
\mu(t) = \exp\left( \int a(t) \,\dd t \right) = \exp\left( -\int \frac{\sigma_t'}{\sigma_t} \,\dd t \right) = \frac{1}{\sigma_t}.
\]

Multiplying both sides by $\mu(t)$ yields:
\[
\frac{\dd}{\dd t} \left( \frac{\vx_t}{\sigma_t} \right) = \frac{C_t}{\sigma_t} \vx_0.
\]

Integrating both sides from $t$ to $\tau < t$:
\begin{align}
\frac{\vx_{\tau}}{\sigma_{\tau}} = \frac{\vx_{t}}{\sigma_{t}} + \int_{t}^{\tau} \frac{C(s)}{\sigma_s} \,\dd s \cdot \vx_0,
\end{align}
which gives:
\begin{align}
\vx_{\tau} = \sigma_{\tau} \left( \frac{\vx_{t}}{\sigma_{t}} + I(t, \tau) \cdot \vx_0 \right),
\end{align}
where
\[
I(t, \tau) := \int_{t}^{\tau} \frac{C(s)}{\sigma_s} \,\dd s.
\]

\paragraph{Substituting $\vx_0$ in Closed Form}

We now substitute the expression for $\vx_0$ evaluated at time $t$:
\[
\vx_0 = \frac{\vv_{t}(\vx_{t}) - \frac{\sigma_{t}'}{\sigma_{t}} \vx_{t}}{C_{t}}.
\]

Substitute this into the solution:
\begin{align}
\vx_{\tau} 
&= \sigma_{\tau} \left( \frac{\vx_{t}}{\sigma_{t}} + I(t, \tau) \cdot \frac{\vv_{t}(\vx_{t}) - \frac{\sigma_{t}'}{\sigma_{t}} \vx_{t}}{C_{t}} \right) \\
&= \sigma_{\tau} \left[
\left( \frac{1}{\sigma_{t}} - \frac{\sigma_{t}'}{\sigma_{t}} \cdot \frac{I(t, \tau)}{C_{t}} \right) \vx_{t}
+ \frac{I(t, \tau)}{C_{t}} \vv_{t}(\vx_{t})
\right].
\end{align}

\paragraph{Final Expression (Only in Terms of $\vx_{t}$)}

\begin{align}
\boxed{
\vx_{\tau} = \sigma_{\tau} \left[
\left( \frac{1}{\sigma_{t}} - \frac{\sigma_{t}'}{\sigma_{t}} \cdot \frac{I(t, \tau)}{C_{t}} \right) \vx_{t}
+ \frac{I(t, \tau)}{C_{t}} \vv_{t}(\vx_{t})
\right]
}
\end{align}

This provides a fully explicit backward solution to the PF-ODE, depending only on $\vx_{t}$ and the velocity field $\vv_{t}(\vx_{t})$.

\paragraph{Special Case: Linear Interpolant}

For the linear interpolant with $\alpha_t = 1 - t$ and $\sigma_t = t$, we have:
\[
\alpha_t' = -1, \quad \sigma_t' = 1, \quad \Rightarrow \quad
C_t = -\frac{1}{t}, \quad \frac{C_t}{\sigma_t} = -\frac{1}{t^2}.
\]

Then:
\[
I(t, \tau) = \int_{t}^{\tau} -\frac{1}{s^2} ds = \frac{1}{\tau} - \frac{1}{t}.
\]

Also note:
\[
\frac{\sigma_{t}'}{\sigma_{t}} = \frac{1}{t}, \quad C_{t} = -\frac{1}{t}.
\]

Plug into the general expression:
\begin{align}
\vx_{\tau} 
&= \tau \left[
\left( \frac{1}{t} - \frac{1}{t} \cdot \frac{1/\tau - 1/t}{-1/t} \right) \vx_{t}
+ \left( \frac{1/\tau - 1/t}{-1/t} \right) \vv_{t}(\vx_{t})
\right] \\
&= \tau \left[
\left( \frac{1}{t} + \left( \frac{1}{\tau} - \frac{1}{t} \right) \right) \vx_{t}
+ \left( \frac{1}{t} - \frac{1}{\tau} \right) \vv_{t}(\vx_{t})
\right] \\
&= \vx_{t} + (\tau - t) \vv_{t}(\vx_{t}).
\end{align}

\paragraph{Final Linear Interpolant Result}

\begin{align}
\boxed{
\vx_{\tau} = \vx_{t} + (\tau - t) \vv_{t}(\vx_{t})
}
\end{align}

In the case of the linear interpolant, the PF-ODE corresponds to a straight-line trajectory with constant velocity $\vv_{t}(\vx_{t})$, enabling exact integration via Euler steps of arbitrary size.

\section{More Experimental Results}
\label{appendix:additional details and results}
\subsection{Two-Stage Training and Sampling}
\label{appen:two-stage}

For the two-stage experiment, we set $\xi=0.7$ as the split point between the low-variance and high-variance regimes.
During training, timesteps are sampled uniformly from $[\xi,1]$, and optimization is performed using either the standard CFM loss or our StableVM loss, while all other configurations follow Sec.~\ref{subsec:setup}.
Specifically, an SiT-XL/2 model is trained from scratch on $t\in[\xi,1]$, while a pretrained SiT-XL/2 model from REPA is used for stepping in $t\in[0,\xi]$.
Each model is trained for 500k steps, with checkpoints evaluated every 40k steps.
As shown in Tab.~\ref{tab:two-stage-detailed}, StableVM consistently outperforms CFM across training stages, indicating improved optimization stability in the high-variance regime.

\begin{table*}[t]
\centering
\caption{\textbf{Two-stage training experiment, comparing CFM and  StableVM (bank size $K=256$).} Models are trained for 500k steps with checkpoints evaluated every 40k steps. The results are reported with classifier-free guidance. StableVM consistently achieves better FID and IS across training, demonstrating improved optimization stability when training is restricted to the high-variance regime.}
\setlength{\tabcolsep}{6pt}
\renewcommand{\arraystretch}{1.12}
\resizebox{0.9\textwidth}{!}{%
\begin{tabular}{ccccccccccc}
\toprule
\multirow{2}{*}{\textbf{\# Steps}} &
\multicolumn{5}{c}{\textbf{SiT}} &
\multicolumn{5}{c}{\textbf{StableVM ($K=256$)}} \\
\cmidrule(lr){2-6}\cmidrule(lr){7-11}
& IS $\uparrow$ & FID $\downarrow$ & sFID $\downarrow$ & Precision $\uparrow$ & Recall $\uparrow$
& IS $\uparrow$ & FID $\downarrow$ & sFID $\downarrow$ & Precision $\uparrow$ & Recall $\uparrow$ \\
\midrule
40k (SiT) / 50k (StableVM) & 212.5 & 4.33 & 6.29 & 0.71 & 0.69 & \bf{219.2} & \bf{3.92} & \bf{6.04} & \bf{0.72} & 0.69 \\
80k       & 223.8 & 3.53 & 5.61 & 0.73 & \bf{0.68} & \bf{227.3} & \bf{3.35} & 5.61 & 0.73 & 0.67 \\
120k      & 234.4 & 3.07 & \bf{5.34} & 0.74 & 0.67 & \bf{234.8} & \bf{3.02} & 5.40 & 0.74 & 0.67 \\
160k      & \bf{239.7} & \bf{2.81} & \bf{5.24} & 0.74 & \bf{0.67} & 238.9 & 2.87 & 5.74 & 0.74 & 0.66 \\
200k      & 244.7 & 2.62 & \bf{5.20} & 0.75 & 0.67 & \bf{247.0} & \bf{2.59} & 5.21 & 0.75 & 0.67 \\
240k      & 247.4 & 2.54 & \bf{5.17} & 0.75 & 0.67 & \bf{250.3} & \bf{2.48} & 5.20 & 0.75 & 0.67 \\
280k      & 250.9 & 2.47 & 5.14 & 0.75 & 0.66 & \bf{253.3} & \bf{2.40} & \bf{5.13} & 0.75 & \bf{0.67} \\
320k      & 251.7 & 2.39 & 5.17 & 0.75 & \bf{0.67} & \bf{255.4} & \bf{2.34} & \bf{5.13} & 0.75 & 0.66 \\
360k      & 255.4 & 2.35 & 5.16 & 0.75 & \bf{0.67} & \bf{258.5} & \bf{2.28} & \bf{5.11} & \bf{0.76} & 0.66 \\
400k      & 258.6 & 2.28 & 5.17 & 0.76 & 0.67 & \bf{261.7} & \bf{2.17} & \bf{4.98} & 0.76 & 0.67 \\
440k      & 259.3 & 2.29 & 5.20 & 0.76 & 0.67 & \bf{262.3} & \bf{2.16} & \bf{5.00} & 0.76 & 0.67 \\
480k      & 259.7 & 2.26 & 5.22 & 0.76 & \bf{0.67} & \bf{263.1} & \bf{2.16} & \bf{5.03} & 0.76 & 0.66 \\
500k      & 260.1 & 2.26 & 5.23 & 0.76 & 0.67 & \bf{262.7} & \bf{2.17} & \bf{5.06} & 0.76 & 0.67 \\
\bottomrule
\end{tabular}%
}
\label{tab:two-stage-detailed}
\end{table*}

\begin{figure}[t!]
    \centering
    \includegraphics[width=0.7\linewidth]{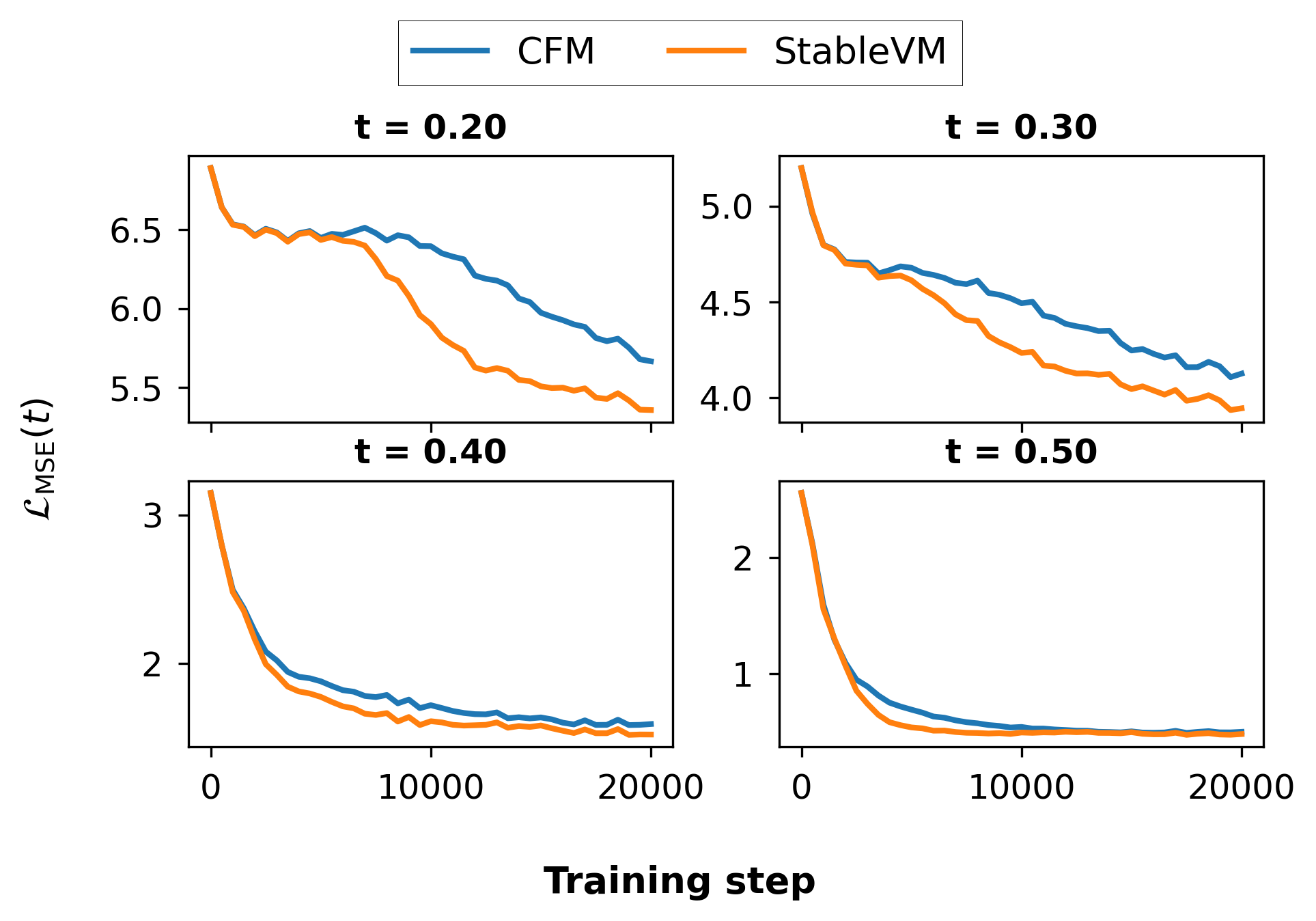}
    \caption{\textbf{Comparison of CFM and StableVM with $n=2048$ on the synthetic GMM distribution.} We plot the \textit{second-order moment} as a function of training iterations at four time steps: \(t = 0.20, 0.30, 0.40,\) and \(0.50\).}
    \label{fig:mse}
\end{figure}

\subsection{Unconditional Synthetic GMM Generation}
\label{appendix:unconditional_generation}

We also evaluate Algorithm~\ref{alg:stable_vm} in the unconditional generation setting. Specifically, we construct a synthetic Gaussian Mixture Model (GMM) distribution and train the model to learn it using either the standard CFM loss or our proposed StableVM loss.

The GMM is defined with 100 modes. For each component \(k\), the mean vector \(\boldsymbol{\mu}_k\) is sampled independently from a uniform distribution over \([-1, 1]\) in each dimension. The variances are drawn independently per component and per dimension from a uniform distribution over \([10^{-2}, 10^{-1}]\), yielding anisotropic Gaussian components. The mixing coefficients \(\boldsymbol{\pi}\) are obtained by sampling each entry from \(\text{Uniform}(0.1, 1.0)\) and normalizing so that they sum to one. To generate samples, we first draw a component index \(k\) via multinomial sampling according to \(\boldsymbol{\pi}\), then sample from the corresponding Gaussian using the reparameterization trick:
\[
\vx = \boldsymbol{\mu}_k + \boldsymbol{\sigma}_k \odot \boldsymbol{\varepsilon}, \quad \text{where} \quad \boldsymbol{\varepsilon} \sim \mathcal{N}(0, \mathbf{I}),
\]
and \(\odot\) denotes element-wise multiplication.

We fix the data dimensionality to 10 and evaluate both the CFM loss and our proposed StableVM loss with a reference batch size of 2048 on this distribution. Model performance is assessed by computing the \textit{second-order moment} of the discrepancy between the model’s predicted velocity field $\vv_\vtheta(\vx_t,t)$ and the true velocity field $\vv_t(\vx_t)$, under the marginal distribution $p_t(\vx_t)$:
\begin{equation}
\gL_{\text{MSE}}(t) \;=\; \tfrac{1}{2}\,\mathbb{E}_{\vx_t \sim p_t}\!\left[ \big\| \vv_\vtheta(\vx_t,t) - \vv_t(\vx_t) \big\|^2 \right].
\end{equation}

However, the exact velocity field \(\vv_t(\vx_t) = \mathbb{E}_{p_t(\vx_0 \mid \vx_t)}[\vv_t(\vx_t \mid \vx_0)]\) is intractable. We therefore approximate it using a self-normalized importance sampling estimator~\citep{hesterberg1995weighted}. Concretely, given \(N = 50{,}000\) samples \(\{\vx_0^i\}_{i=1}^N\) drawn from the data distribution, we approximate:
\[
\vv_t(\vx_t) \approx \sum_{i=1}^{N} w_i(\vx_t) \, \vv_t(\vx_t \mid \vx_0^i),
\quad \text{where} \quad
w_i(\vx_t) = \frac{p_t(\vx_t \mid \vx_0^i)}{\sum_{j=1}^N p_t(\vx_t \mid \vx_0^j)}.
\]
This approximation enables us to estimate the \textit{second-order moment} at any time step \(t\).

As shown in Fig.~\ref{fig:mse}, StableVM consistently achieves faster convergence and lower final error than standard CFM across all evaluated timesteps.
This behavior directly reflects the variance reduction predicted by our analysis, even in a fully controlled setting where the true velocity field is accessible.

\paragraph{Reproduction of Fig.~\ref{fig:variance_curves}.}
To reproduce the variance curves in Fig.~\ref{fig:variance_curves}, we follow the same procedure described above but increase the dimensionality of the synthetic GMM to 100 and 500, using identical configuration settings. The true velocity field is again approximated using the self-normalized importance estimator with \(N = 10{,}000\) samples. For CIFAR-10 and ImageNet latents, we evaluate on the full training set or a 50,000-sample subset.

\begin{table}[t]
\centering
\caption{\textbf{Ablation study on hyperparameters of StableVS.} We analyze the effects of the split point $\xi$, the number of steps in the \emph{low-variance regime} $[0,\xi]$,
and the variance factor $f_\beta$ of Stable Velocity Sampling for \textbf{\textit{SD3.5-Large}}~\citep{esser2024scaling}
at $1024\times1024$ resolution.
All experiments use Euler as the base solver.}
\vspace{1ex}
\label{tab:stablevs_hyperparameters}

% \resizebox{\columnwidth}{!}{%
\begin{tabular}{cccccccc}
\toprule
\textbf{Total steps} & $\boldsymbol{\xi}$ & \textbf{$[0,\xi]$ steps} & $\boldsymbol{f_\beta}$ 
& \textbf{Overall}~$\uparrow$ & PSNR~$\uparrow$ & SSIM~$\uparrow$ & LPIPS~$\downarrow$ \\
\midrule

\multicolumn{8}{l}{\textbf{Baseline}} \\
20 & --   & --  & --   & 0.710 & 16.93 & 0.753 & 0.333 \\

\multicolumn{8}{l}{\textbf{Default StableVS setting}} \\
\rowcolor{highlight}20 & 0.85 & 9 & 0.0
& 0.723 & 36.92 & 0.980 & 0.021 \\

\midrule
\multicolumn{8}{l}{\textbf{Variance factor $f_\beta$}} \\
20 & 0.85 & 9 & 0.1  & 0.719 & 32.69 & 0.956 & 0.036 \\
20 & 0.85 & 9 & 0.2  & 0.723 & 29.08 & 0.908 & 0.062 \\

\midrule
\multicolumn{8}{l}{\textbf{Low-variance regime steps $[0,\xi]$}} \\
15 & 0.85 & 4  & 0.0  & 0.708 & 29.30 & 0.873 & 0.155 \\
25 & 0.85 & 14 & 0.0  & 0.719 & 39.79 & 0.988 & 0.010 \\

\midrule
\multicolumn{8}{l}{\textbf{Split point $\xi$}} \\
26 & 0.70 & 9 & 0.0  & 0.726 & 43.65 & 0.992 & 0.006 \\
22 & 0.80 & 9 & 0.0  & 0.724 & 39.15 & 0.974 & 0.014 \\
17 & 0.90 & 9 & 0.0  & 0.728 & 31.99 & 0.959 & 0.045 \\

\bottomrule
\end{tabular}
% }
\end{table}

\subsection{Ablation Studies of Hyperparameters in StableVS}
Tab.~\ref{tab:stablevs_hyperparameters} presents the effect of three key hyperparameters—the variance factor $f_\beta$, the number of steps in the \emph{low-variance regime}, and the split point $\xi$—on Stable Velocity Sampling with \textit{SD3.5-Large} at $1024\times1024$ resolution on the GenEval benchmark.

Tab.~\ref{tab:stablevs_hyperparameters} shows that StableVS is robust to moderate variations in all three hyperparameters.
Increasing the variance factor $f_\beta$ introduces additional stochasticity and degrades reference metrics, consistent with our analysis that the low-variance regime admits nearly deterministic dynamics.
Similarly, allocating too few steps to the low-variance regime reduces fidelity, while overly aggressive step reduction beyond the regime boundary ($\xi$ too large) leads to quality degradation.
These trends support the variance-regime interpretation underlying StableVS and justify our default configuration.

\subsection{Full Results on \textit{Geneval}}
Tab.~\ref{tab:geneval_results_full} provides the full GenEval category-wise breakdown for all models and solver configurations.

\begin{table}[t]
\centering
\caption{\textbf{Detailed evaluation on \textit{GenEval} at $1024\times1024$ resolution.}
We report the overall GenEval score together with per-category breakdowns.
StableVS replaces the base solver in the \emph{low-variance regime} $[0,\xi]$
(with the number of steps indicated in parentheses), while keeping the
\emph{high-variance regime} $[\xi,1]$ unchanged.
The split point is fixed to $\xi=0.85$ for all models.}
\label{tab:geneval_results_full}
\resizebox{\textwidth}{!}{%
\begin{tabular}{cccccccccccccc}
\toprule
\multicolumn{3}{c}{\textbf{Solver configuration}} 
& \multicolumn{7}{c}{\textbf{GenEval Metrics}}
& \multicolumn{3}{c}{\textbf{Reference metrics}} \\
\cmidrule(lr){1-3}
\cmidrule(lr){4-10}
\cmidrule(lr){11-13}
\textbf{Base solver}
& \textbf{Solver in $[0,\xi]$}
& \textbf{Total steps}
& Overall~$\uparrow$
& Single~$\uparrow$
& Two~$\uparrow$
& Counting~$\uparrow$
& Colors~$\uparrow$
& Position~$\uparrow$
& Color Attr~$\uparrow$
& PSNR~$\uparrow$
& SSIM~$\uparrow$
& LPIPS~$\downarrow$ \\
\midrule

\multicolumn{13}{l}{\textbf{\textit{SD3.5-Large}}~\citep{esser2024scaling}} \\

\multirow{3}{*}{Euler}
& Euler(19) & 30
& 0.723
& 0.994 & 0.907 & 0.697 & 0.843 & 0.293 & 0.608
& -- & -- & -- \\
& Euler(13) & 20
& 0.710
& 0.994 & 0.884 & 0.650 & 0.838 & 0.290 & 0.603
& 16.93 & 0.753 & 0.333 \\
& \cellcolor{highlight}\textbf{StableVS(9)}
& \cellcolor{highlight}20
& \cellcolor{highlight}0.723
& \cellcolor{highlight}0.994 
& \cellcolor{highlight}0.891 
& \cellcolor{highlight}0.700 
& \cellcolor{highlight}0.838 
& \cellcolor{highlight}0.298 
& \cellcolor{highlight}0.615 
& \cellcolor{highlight}\textbf{36.92}
& \cellcolor{highlight}\textbf{0.980}
& \cellcolor{highlight}\textbf{0.021} \\

\midrule
\multirow{3}{*}{DPM++}
& DPM++(19) & 30
& 0.724
& 0.997 & 0.917 & 0.700 & 0.825 & 0.278 & 0.630
& -- & -- & -- \\
& DPM++(13) & 20
& 0.717
& 0.994 & 0.889 & 0.681 & 0.717 & 0.278 & 0.620
& 17.42 & 0.784 & 0.287 \\
& \cellcolor{highlight}\textbf{StableVS(9)}
& \cellcolor{highlight}20
& \cellcolor{highlight}0.719
& \cellcolor{highlight}0.997 
& \cellcolor{highlight}0.889 
& \cellcolor{highlight}0.697 
& \cellcolor{highlight}0.806 
& \cellcolor{highlight}0.275 
& \cellcolor{highlight}0.625 
& \cellcolor{highlight}\textbf{32.61}
& \cellcolor{highlight}\textbf{0.957}
& \cellcolor{highlight}\textbf{0.063} \\

\midrule
\multicolumn{13}{l}{\textbf{\textit{Flux-dev}}~\citep{flux2024}} \\

\multirow{3}{*}{Euler}
& Euler(19) & 30
& 0.660
& 0.984 & 0.828 & 0.700 & 0.801 & 0.205 & 0.443
& -- & -- & -- \\
& Euler(13) & 20
& 0.659
& 0.991 & 0.838 & 0.694 & 0.785 & 0.203 & 0.445
& 19.74 & 0.820 & 0.244 \\
& \cellcolor{highlight}\textbf{StableVS(9)}
& \cellcolor{highlight}20
& \cellcolor{highlight}0.666
& \cellcolor{highlight}0.981 
& \cellcolor{highlight}0.833 
& \cellcolor{highlight}0.697 
& \cellcolor{highlight}0.798 
& \cellcolor{highlight}0.210 
& \cellcolor{highlight}0.475 
& \cellcolor{highlight}\textbf{35.45}
& \cellcolor{highlight}\textbf{0.968}
& \cellcolor{highlight}\textbf{0.025} \\

\midrule
\multicolumn{13}{l}{\textbf{\textit{Qwen-Image-2512}}~\citep{wu2025qwenimagetechnicalreport}} \\

\multirow{3}{*}{Euler}
& Euler(22) & 30
& 0.733
& 0.988 & 0.907 & 0.572 & 0.859 & 0.450 & 0.623
& -- & -- & -- \\
& Euler(12) & 17
& 0.721
& 0.994 & 0.899 & 0.556 & 0.859 & 0.430 & 0.590
& 17.01 & 0.767 & 0.277 \\
& \cellcolor{highlight}\textbf{StableVS(9)}
& \cellcolor{highlight}17
& \cellcolor{highlight}0.731
& \cellcolor{highlight}0.988 
& \cellcolor{highlight}0.907 
& \cellcolor{highlight}0.569 
& \cellcolor{highlight}0.859 
& \cellcolor{highlight}0.450 
& \cellcolor{highlight}0.615 
& \cellcolor{highlight}\textbf{32.27}
& \cellcolor{highlight}\textbf{0.962}
& \cellcolor{highlight}\textbf{0.031} \\

\bottomrule
\end{tabular}
}
\vspace{-0.3cm}
\end{table}

\section{Experimental Details}
\label{appen:experimental details}
Data preprocessing follows the ADM protocol~\citep{dhariwal2021diffusion}, where original images are center-cropped
and resized to $256\times 256$ resolution.
For optimization, we employ AdamW~\citep{kingma2014adam,loshchilov2017decoupled} with a constant learning rate of $1\times10^{-4}$ and a global batch size of 256. Training efficiency and numerical stability are enhanced via mixed-precision (\texttt{fp16}) training, gradient clipping, and an adaptive exponential moving average (EMA) decay schedule following~\citet{lipman2024flowmatchingguidecode}. For sigmoid weighting, the sharpness hyperparameters $k=20$.

StableVS is implemented on top of the Hugging Face \texttt{diffusers} library~\citep{von_Platen_Diffusers_State-of-the-art_diffusion}, which provides robust support for state-of-the-art pre-trained models and flexible sampling pipelines.

\section{More Qualitative Results}
\label{appen:qualitative results}
We provide additional qualitative results for SD3.5~\citep{esser2024scaling}, Flux~\citep{flux2024}, SD3~\citep{esser2024scaling}, and Wan2.2~\citep{wan2025wan}, shown in Fig.~\ref{fig:qualitative_sd35}, Fig.~\ref{fig:qualitative_flux}, Fig.~\ref{fig:qualitative_qwen}, 
Fig.~\ref{fig:qualitative_wan2.2_1},
and Fig.~\ref{fig:qualitative_wan2.2_2}, respectively.

\begin{figure*}[t]
\centering
\includegraphics[width=0.85\textwidth]{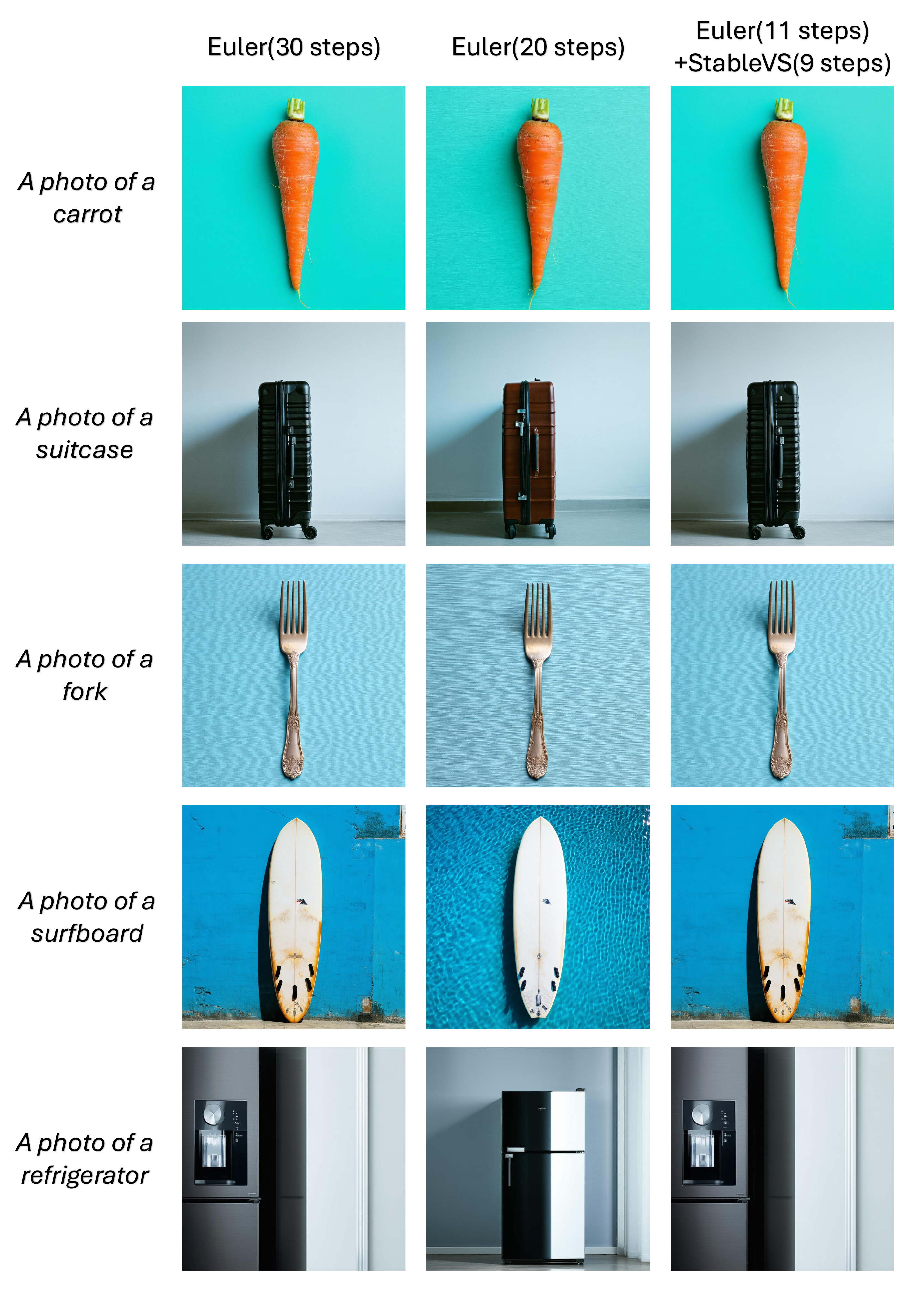}
\caption{\textbf{Visual comparison of SD3.5-Large~\citep{esser2024scaling} on different prompts.} Results are generated using the Euler solver with 30 and 20 steps, and with StableVS replacing Euler in the \emph{low-variance regime}, all under the same random seed. Compared to the standard 20-step solver, StableVS yields outputs that more closely resemble the 30-step results. Zoom in for details.} 
\vspace{-0.2cm}
\label{fig:qualitative_sd35}
\end{figure*}

\begin{figure*}[ht]
\centering
\includegraphics[width=0.85\textwidth]{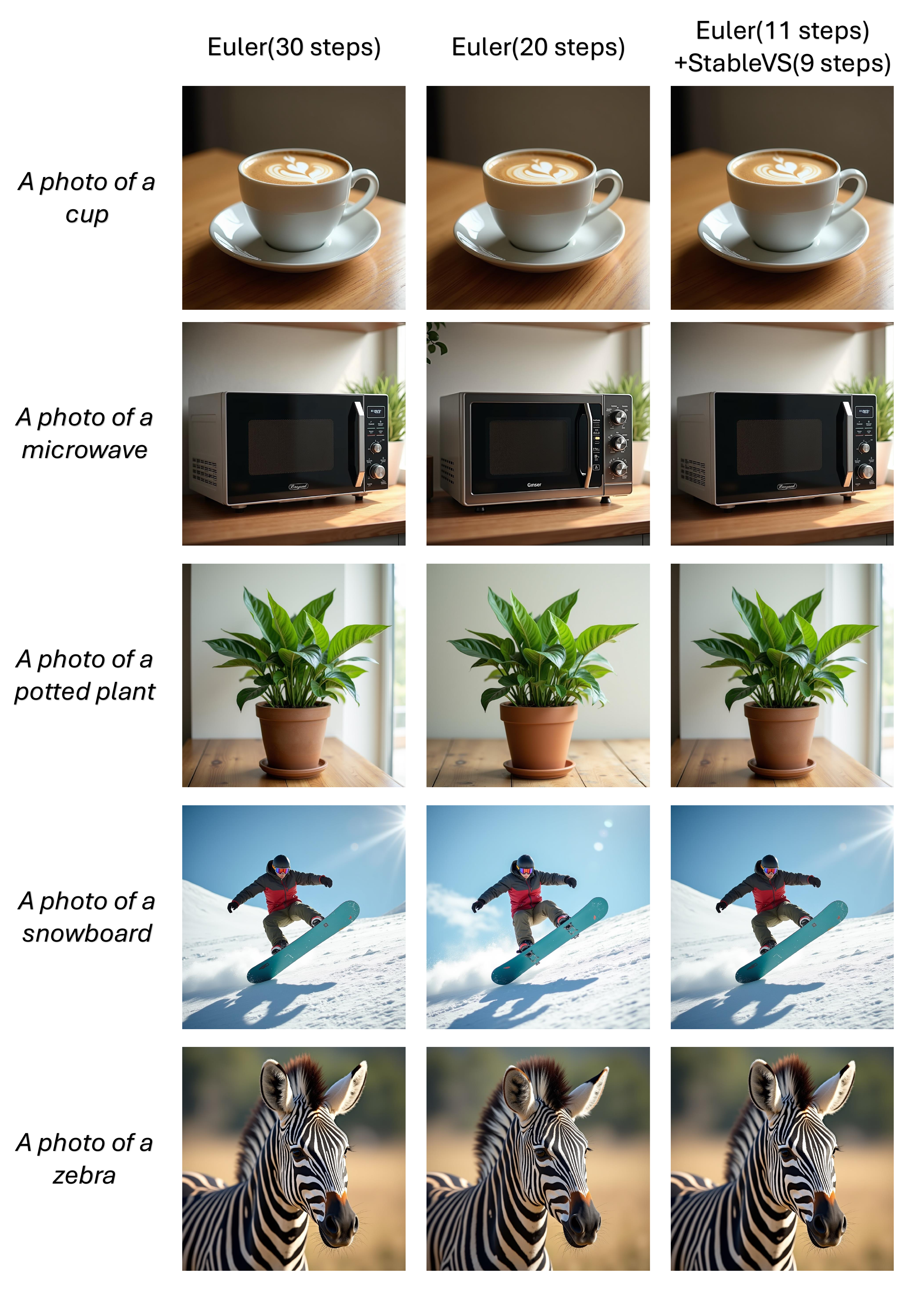}
\caption{\textbf{Visual comparison of Flux-dev~\citep{flux2024} on different prompts.} Results are generated using the Euler solver with 30 and 20 steps, and with StableVS replacing Euler in the \emph{low-variance regime}, all under the same random seed. Compared to the standard 20-step solver, StableVS yields outputs that more closely resemble the 30-step results. Zoom in for details.} 
\vspace{-0.2cm}
\label{fig:qualitative_flux}
\end{figure*}

\begin{figure*}[ht]
\centering
\includegraphics[width=0.85\textwidth]{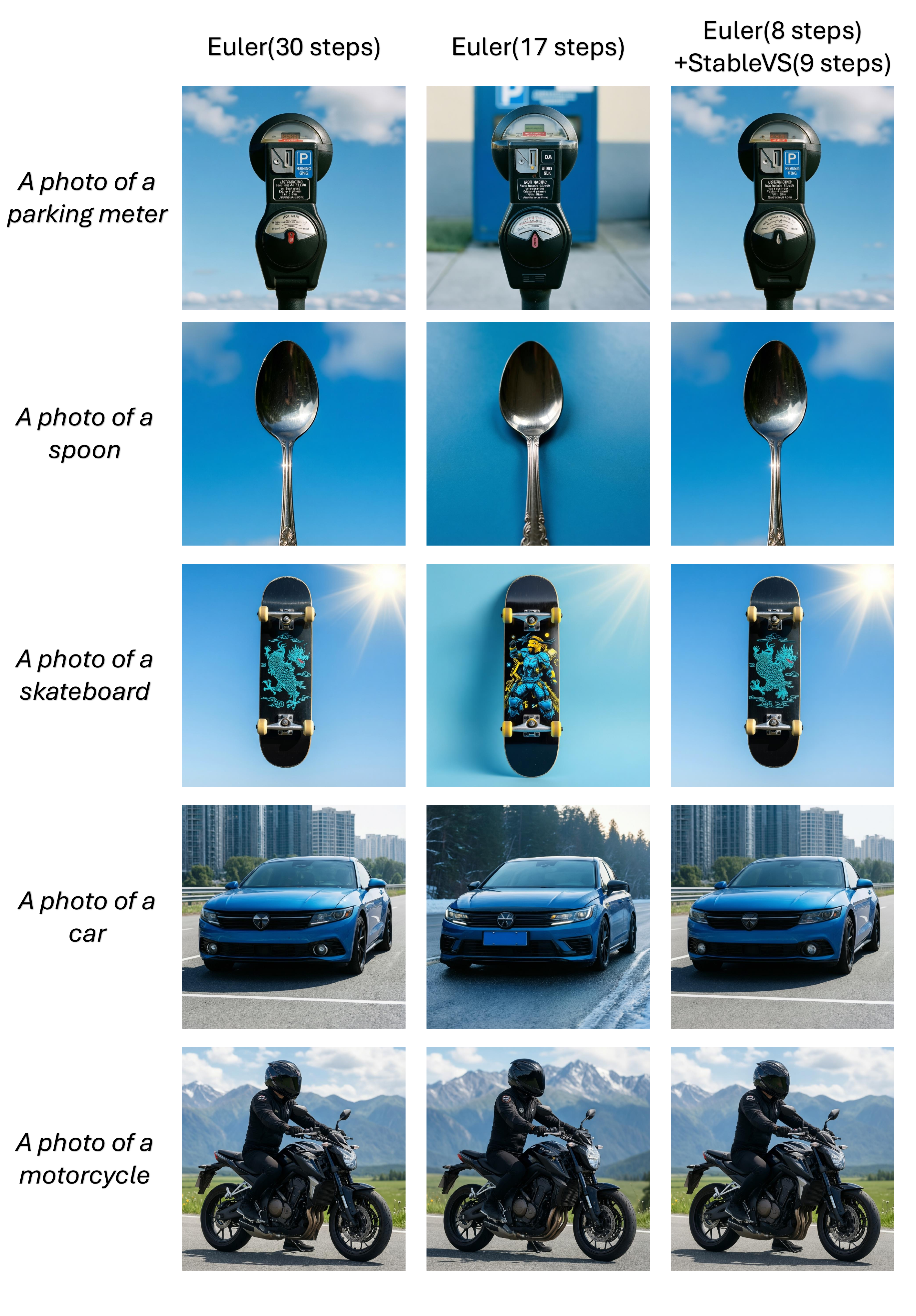}
\caption{\textbf{Visual comparison of Qwen-Image-2512~\citep{wu2025qwenimagetechnicalreport} on different prompts.} Results are generated using the Euler solver with 30 and 17 steps, and with StableVS replacing Euler in the \emph{low-variance regime}, all under the same random seed. Compared to the standard 17-step solver, StableVS yields outputs that more closely resemble the 30-step results. Zoom in for details.} 
\vspace{-0.2cm}
\label{fig:qualitative_qwen}
\end{figure*}

\begin{figure*}[ht]
\centering
\includegraphics[width=\textwidth]{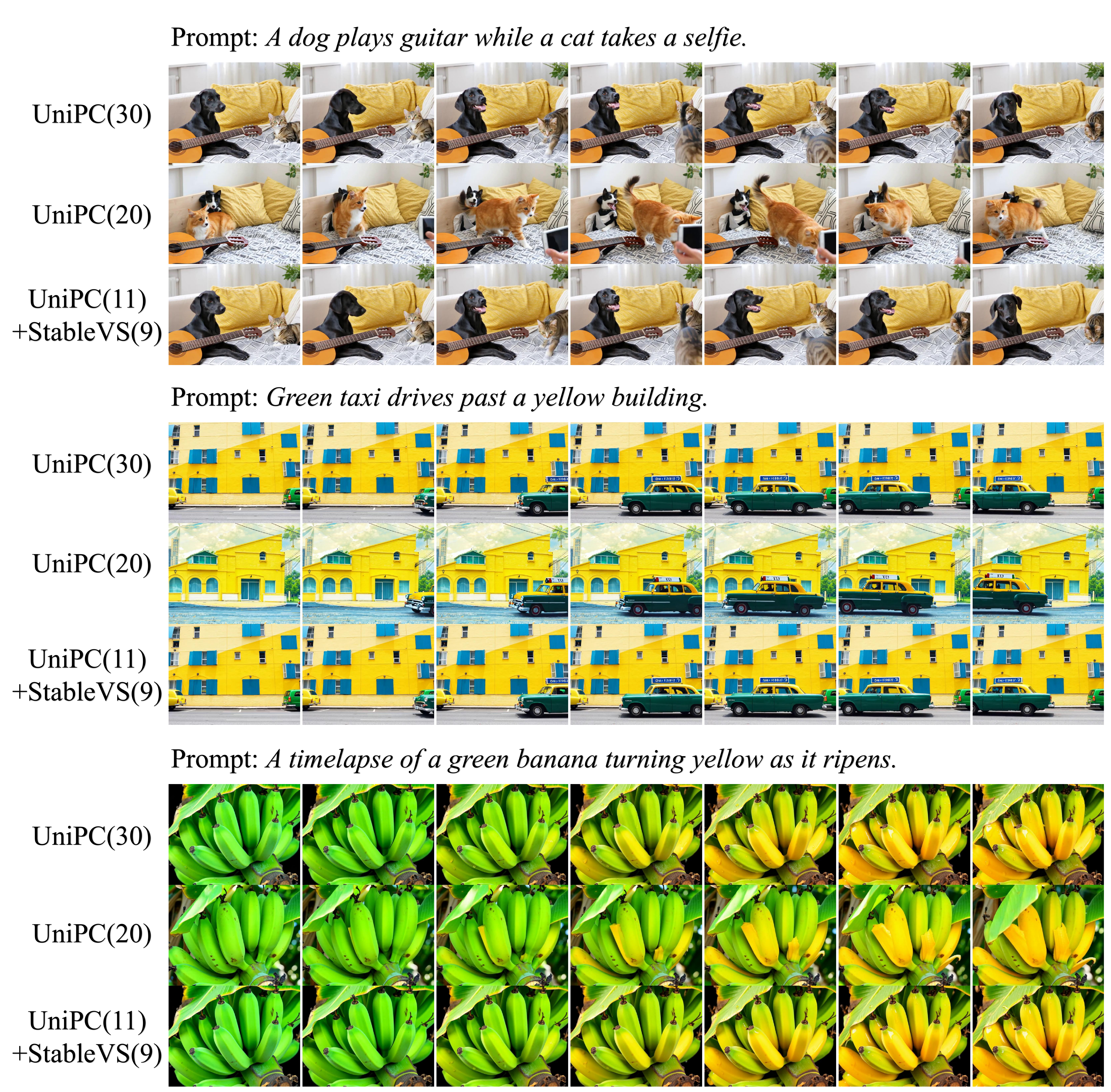}
\caption{\textbf{Visual comparison of Wan2.2~\citep{wan2025wan} on different prompts.} Results are generated using UniPC solver with 30 and 20 steps, and with StableVS replacing UniPC in the \emph{low-variance regime}, all under the same random seed. Compared to the standard 20-step solver, StableVS yields outputs that more closely resemble the 30-step results. Zoom in for details.} 
\vspace{-0.2cm}
\label{fig:qualitative_wan2.2_1}
\end{figure*}

\begin{figure*}[ht]
\centering
\includegraphics[width=\textwidth]{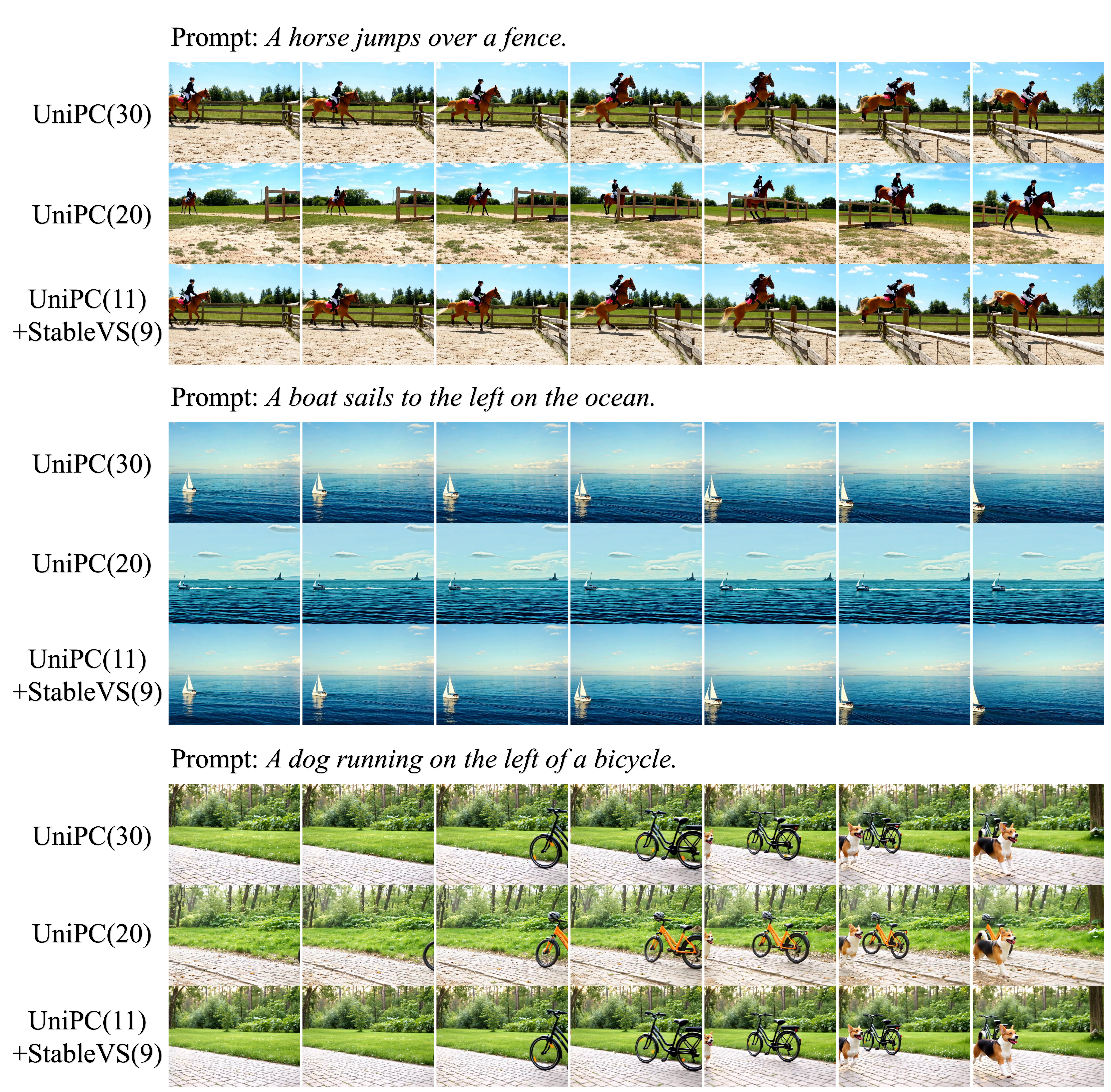}
\caption{\textbf{Visual comparison of Wan2.2~\citep{wan2025wan} on different prompts.} Results are generated using UniPC solver with 30 and 20 steps, and with StableVS replacing UniPC in the \emph{low-variance regime}, all under the same random seed. Compared to the standard 20-step solver, StableVS yields outputs that more closely resemble the 30-step results. Zoom in for details.} 
\vspace{-0.2cm}
\label{fig:qualitative_wan2.2_2}
\end{figure*}
%%%%%%%%%%%%%%%%%%%%%%%%%%%%%%%%%%%%%%%%%%%%%%%%%%%%%%%%%%%%%%%%%%%%%%%%%%%%%%%
%%%%%%%%%%%%%%%%%%%%%%%%%%%%%%%%%%%%%%%%%%%%%%%%%%%%%%%%%%%%%%%%%%%%%%%%%%%%%%%

\end{document}

% This document was modified from the file originally made available by
% Pat Langley and Andrea Danyluk for ICML-2K. This version was created
% by Iain Murray in 2018, and modified by Alexandre Bouchard in
% 2019 and 2021 and by Csaba Szepesvari, Gang Niu and Sivan Sabato in 2022.
% Modified again in 2023 and 2024 by Sivan Sabato and Jonathan Scarlett.
% Previous contributors include Dan Roy, Lise Getoor and Tobias
% Scheffer, which was slightly modified from the 2010 version by
% Thorsten Joachims & Johannes Fuernkranz, slightly modified from the
% 2009 version by Kiri Wagstaff and Sam Roweis's 2008 version, which is
% slightly modified from Prasad Tadepalli's 2007 version which is a
% lightly changed version of the previous year's version by Andrew
% Moore, which was in turn edited from those of Kristian Kersting and
% Codrina Lauth. Alex Smola contributed to the algorithmic style files.